\icmltitlerunning{Obtaining Adjustable Regularization for Free via Iterate Averaging}
\begin{document}

\twocolumn[
\icmltitle{Obtaining Adjustable Regularization for Free via Iterate Averaging}



\icmlsetsymbol{equal}{*}

\begin{icmlauthorlist}
\icmlauthor{Jingfeng Wu}{jhu}
\icmlauthor{Vladimir Braverman}{jhu}
\icmlauthor{Lin F. Yang}{ucla}
\end{icmlauthorlist}

\icmlaffiliation{jhu}{Johns Hopkins University, Baltimore, MD, USA}
\icmlaffiliation{ucla}{University of California, Los Angeles, CA, USA}

\icmlcorrespondingauthor{Jingfeng Wu}{uuujf@jhu.edu}
\icmlcorrespondingauthor{Lin F. Yang}{linyang@ee.ucla.edu}

\icmlkeywords{Iterate Average, Regularization}

\vskip 0.3in
]



\printAffiliationsAndNotice{}  

\begin{abstract}
Regularization for optimization is a crucial technique to avoid overfitting in machine learning. In order to obtain the best performance, we usually train a model by tuning the regularization parameters. It becomes costly, however, when a single round of training takes significant amount of time. Very recently, \citet{neu2018iterate} show that if we run stochastic gradient descent (SGD) on linear regression problems, then by averaging the SGD iterates properly, we obtain a regularized solution. It left open whether the same phenomenon can be achieved for other optimization problems and algorithms. In this paper, we establish an averaging scheme that \emph{provably} converts the iterates of SGD on an arbitrary strongly convex and smooth objective function to its regularized counterpart with an \emph{adjustable} regularization parameter. Our approaches can be used for accelerated and preconditioned optimization methods as well. We further show that the same methods work empirically on more general optimization objectives including neural networks. In sum, we obtain \emph{adjustable} regularization \emph{for free} for a large class of optimization problems and resolve an open question raised by~\citet{neu2018iterate}.
\end{abstract}

\section{Introduction}
Regularization for optimization is a key technique for avoiding over-fitting in machine learning and statistics ~\cite{grandvalet2005semi,krogh1992simple,tibshirani1996regression,tikhonov1977solutions}.
The effects of explicit regularization methods, i.e., an extra regularization term added to the vanilla objective, are well studied, e.g., ridge regression~\cite{tikhonov1977solutions}, LASSO~\cite{tibshirani1996regression} and entropy regularization~\cite{grandvalet2005semi}.
Despite the great benefits of adopting explicit regularization, it could cause a huge computational burden to search for the optimal hyperparameter associated with the extra regularization term, especially for large-scale machine learning problems~\cite{devlin2018bert,He_2016,silver2017mastering}.

In another line of research, people recognize and utilize the implicit regularization caused by certain components in machine learning algorithms, e.g., initialization~\cite{he2015delving,hu2020provable}, batch normalization~\cite{ioffe2015batch,cai2018quantitative},
iterate averaging~\cite{bach2013non,jain2018parallelizing,neu2018iterate}, and optimizer such as gradient descent (GD)~\cite{gunasekar2018characterizing,soudry2018implicit,suggala2018connecting}.
The regularization effect usually happens
along the process of training the model and/or requires little post-computation.
A great deal of evidence indicates that such a implicit bias plays a crucial role for the generalization abilities in many modern machine learning models~\cite{zhang2016understanding,zhu2018anisotropic,wilson2017marginal,soudry2018implicit}. 
However, the implicit regularization is often a fixed effect and lacks the flexibility to be adjusted.
To fully utilize it, we need a thorough understanding about the mechanism of the implicit regularization.

Among all the efforts spent on understanding and utilizing the implicit regularization, the work on bridging iterate averaging with explicit regularization~\cite{neu2018iterate} is extraordinarily appealing. 
In particular,~\citet{neu2018iterate} show that for linear regression, one can achieve $\ell_2$-regularization effect \emph{for free} by simply taking geometrical averaging over the optimization path generated by stochastic gradient descent (SGD), which costs little additional computation.
More interestingly, the regularization is \emph{adjustable}, i.e., the solution biased by the regularizer in arbitrary strength can be obtained by iterate averaging using the corresponding weighting scheme.
In a nutshell, this regularization approach has advantages over both the implicit regularization methods for being adjustable, and the explicit regularization methods for being cheap to tune.

Nevertheless, \citet{neu2018iterate} only provide a method and its analysis for linear regression optimized by SGD.
However linear regression itself is a rather restricted optimization objective. A nature question arises:

\emph{
Can we obtain ``free'' and ``adjustable'' regularization for broader objective functions and optimization methods?
}

In this work, we answer this question positively from the following aspects:
\begin{enumerate}
\item For linear regression, we analyze the regularization effects of averaging the optimization paths of SGD as well as preconditioned SGD, with adaptive learning rates.
The averaged solutions achieve effects of $\ell_2$-regularization and generalized $\ell_2$-regularization respectively, in an adjustable manner. 
Similar results hold for kernel ridge regression as well.

\item We show that for Nesterov's accelerated stochastic gradient descent, the iterate averaged solution can also realize $\ell_2$-regularization effect by a modified averaging scheme. This resolves an open question raised by~\citet{neu2018iterate}.
    
\item Beside linear regression, we study the regularization effects of iterate averaging for strongly convex and smooth loss functions, hence establishing a provable approach for obtaining nearly \emph{free} and \emph{adjustable} regularization for a broad class of functions.

\item Empirical studies on both synthetic and real datasets verify our theory.
Moreover, we test iterate averaging with modern \emph{deep neural networks} on CIFAR-10 and CIFAR-100 datasets, and the proposed approaches \emph{still} obtain effective and adjustable regularization effects with little additional computation, demonstrating the broad applicability of our methods.
\end{enumerate}

Our analysis is motivated from continuous approximation based on differential equations.
When the learning rate tends to zero, the discrete algorithmic iterates tends to be the continuous path of an ordinary differential equation (ODE), on which we can establish a continuous version of our theory.
We then discretize the ODE and generalize the theory to that of finite step size.
This technique is of independent interests since it can be applied to analyze other comprehensive optimization problems as well~\cite{su2014differential,hu2017diffusion,li2017stochastic,yang2018physical,shi2019acceleration}.
Our results, in addition to the linear regression result in~\cite{neu2018iterate}, illustrate the promising application of iterate averaging to obtain \emph{adjustable} regularization \emph{for free}.

\section{Preliminaries}
Let $\{(x_i,y_i)\in\Rbb^{d\times 1}\}_{i=1}^n$ be the training data and $w\in\Rbb^d$ be the parameters to be optimized.
The goal is to minimize a lower bounded loss function $L(w)$
\begin{align}\label{eq:loss}\tag{$\Pcal_1$}
\min_w L(w):=\frac{1}{n}\sum_{i=1}^n\ell(x_i,y_i,w).
\end{align}
One important example is linear regression under the square loss where $L(w)=\frac{1}{2n}\sum_{i=1}^n\norm{w^\top x_i-y_i}_2^2$.
The optimization problem often involves an explicit regularization term
\begin{equation}\label{eq:reg-loss}\tag{$\Pcal_2$}
\min_{\hat{w}} L(\hat{w}) + \lambda R(\hat{w}),
\end{equation}
where $R(\hat{w})$ is a regularizer and $\lambda$ is the associated hyperparameter.
For example, the $\ell_2$-regularizer is $R(\hat{w})=\frac{1}{2}\norm{\hat{w}}_2^2$.
Given an iterative algorithm, e.g., SGD, an optimization path is generated by running the algorithm.
With a little abuse of notations, we use $\{w_k\}_{k=0}^{\infty}$ and $\{\hat{w}_k\}_{k=0}^{\infty}$ to represent the optimization paths for the unregularized problem~\eqref{eq:loss} and the regularized problem~\eqref{eq:reg-loss}, respectively.
Sometimes we write $\hat{w}_k$ with a script as $\hat{w}_{k,\lambda}$ to emphasize its dependence on the hyperparameter $\lambda$.
We use $\eta_k$ and $\gamma_k$ to denote the learning rates for training the unregularized and regularized objectives respectively.
For simplicity we always initialize the iterative algorithms from zero, i.e., $w_0=\hat{w}_0=0$.

\paragraph{Iterate averaging}
The core idea in this work is a technique called \emph{iterate averaging}.
Given a series of parameters $\{w_k\}_{k=0}^{\infty}$,
a \emph{weighting scheme} $\{p_k\}_{k=0}^{\infty}$ is defined as a probability distribution associated to the series, i.e.,
$p_k\ge 0,\ \sum_{k=0}^\infty p_k=1$. Its accumulation is denoted as $P_k = \sum_{i=0}^k p_i$, where $\lim_{k\to\infty}P_k = 1$.
Since a weighting scheme and its accumulation identifies each other by $p_k = P_k - P_{k-1}$ for $k\ge 1$, we also call $\{P_k\}_{k=0}^{\infty}$ a weighting scheme.
Then the iterate averaged parameters are 
\begin{equation*}
    \tilde{w}_k = {P_k}^{-1}{\textstyle\sum}_{i=0}^k p_i w_i,\quad k\ge 0.
\end{equation*}
Various kinds of averaging schemes (for the SGD optimization path) have been studied before. 
Theoretically, arithmetic averaging is shown to bring better convergence~\cite{bach2013non,lakshminarayanan2018linear}; tail-averaging is analyzed by~\citet{jain2018parallelizing}; and~\citet{neu2018iterate} discuss geometrically averaging and its regularization effect for SGD and linear regression.
Empirically, arithmetic averaging is also shown to be helpful for modern deep neural networks~\cite{izmailov2018averaging,zhang2019lookahead,granziol2020iterate}.
Inspired by \citet{neu2018iterate}, in this work we explore in depth the regularization effect induced by iterate averaging for various kinds of optimization algorithms and loss functions.

\paragraph{Stochastic gradient descent}
The optimization problem~\eqref{eq:loss} is often solved by stochastic gradient descent (SGD): at every iteration, a mini-batch is sampled uniformly at random, and then the parameters are updated according to the gradient of the loss estimated using the mini-batch.
For simplicity we let the batch size be $1$.
Then with learning rate $\eta_k>0$, SGD takes the following update:
\begin{equation}\label{eq:sgd-general}
    w_{k+1} = w_k - \eta_k \grad \ell(x_k, y_k, w_k).
\end{equation}
Similarly, for the regularized problem~\eqref{eq:reg-loss}, with learning rate $\gamma_k>0$, SGD takes update:
\begin{equation}\label{eq:sgd-general-regu}
    \hat{w}_{k+1} = \hat{w}_{k} - \gamma_k \left(\grad \ell(x_k,y_k,\hat{w}_{k})+\lambda \grad R(\hat{w}_k)\right).
\end{equation}
For linear regression problem and fixed learning rates, \citet{neu2018iterate} discuss the geometrically averaging over the SGD iterates~\eqref{eq:sgd-general}.
They show that by doing so one obtains the solution of the $\ell_2$-regularized problem~\eqref{eq:reg-loss} where $R(\hat{w}) = \half \norm{\hat{w}}_2^2$ for arbitrary hyperparameter $\lambda$.
In this work, we analyze a much broader class of algorithms and functions. In particular, we establish adjustable $\ell_2$-regularization effect for 
(i) SGD with adaptive learning rate, 
(ii) kernel ridge regression \cite{mohri2018foundations}, 
and (iii) general strongly convex and smooth loss functions.

\paragraph{Preconditioned stochastic gradient descent}
We also study iterate averaging for preconditioned stochastic gradient descent (PSGD).
Given a positive definite matrix $Q$ as the preconditioning matrix and $\eta_k$ as the learning rate, the PSGD takes following update to optimize problem~\eqref{eq:loss}:
\begin{equation}\label{eq:psgd-general}
    w_{k+1} = w_k - \eta_k Q^{-1} \grad \ell(x_k, y_k, w_k),
\end{equation}
Similarly, the regularized problem~\eqref{eq:reg-loss} can be solved by PSGD with learning rate $\gamma_k>0$ as:
\begin{equation}\label{eq:psgd-general-regu}
    \hat{w}_{k+1} = \hat{w}_{k} - \gamma_k Q^{-1} \left(\grad \ell(x_k,y_k,\hat{w}_{k})+\lambda \grad R(\hat{w}_k)\right).
\end{equation}
We remark that PSGD unifies several important algorithms as natural gradient descent and Newton's method at special cases where the curvature matrices can be replaced by constant matrices~\cite{martens2014new,dennis1996numerical,bottou2008tradeoffs}.

For linear regression problems, we will show that geometrically averaging the PSGD iterates~\eqref{eq:psgd-general} leads to a solution biased by the \emph{generalized $\ell_2$-regularizer}, i.e., the solution of problem~\eqref{eq:reg-loss} with $R(w) = \half w^\top Q w = \half \norm{w}_Q^2$.
The obtained regularization is adjustable, too.

\paragraph{Nesterov's accelerated stochastic gradient descent}
In problem~\eqref{eq:loss}, suppose the loss function $L(w)$ is $\alpha$-strongly convex.
Let $\eta > 0$ be the learning rate and $\tau = \frac{1-\sqrt{\eta\alpha}}{1+\sqrt{\eta\alpha}}$, then the Nesterov's accelerated stochastic gradient descent (NSGD) takes update~\cite{nesterov1983method,su2014differential,yang2018physical}:
\begin{equation}\label{eq:nsgd-general}
\begin{aligned}
    w_{k+1} &= v_k - \eta \grad \ell(x_k,y_k,v_k),\\
    v_k &= w_k + \tau (w_k-w_{k-1}).
    \end{aligned}
\end{equation}
Now we consider the regularized problem~\eqref{eq:reg-loss} with the $\ell_2$-regularizer, $R(\hat{w}) = \frac{1}{2}\norm{\hat{w}}_2^2$.
The objective function then becomes $(\alpha+\lambda)$-strongly convex. 
Let $\gamma > 0$ be the learning rate and $\hat{\tau} = \frac{1-\sqrt{\gamma(\alpha+\lambda)}}{1+\sqrt{\gamma(\alpha+\lambda)}}$, then the NSGD takes update:
\begin{equation}\label{eq:nsgd-general-regu}
\begin{aligned}
    \hat{w}_{k+1} &= \hat{v}_k - \gamma \left(\grad \ell(x_k,y_k,\hat{v}_k)+\lambda \hat{v}_k\right),\\
    \hat{v}_k &= \hat{w}_k + \hat{\tau} (\hat{w}_k-\hat{w}_{k-1}).
\end{aligned}
\end{equation}
It is proposed as an open question by~\citet{neu2018iterate} whether or not adjustable regularization can be obtained by averaging the NSGD optimization path.
Our work offers an affirmative answer by showing that for linear regression, one can perform iterate averaging over the NSGD path to obtain the $\ell_2$-regularized solution as well.

\section{The free and adjustable regularization induced by iterate averaging}
In this section, we show that adjustable regularization effects can be obtained for ``free'' via \emph{iterate averaging} for:
(i) different SGD schemes, e.g., linear regression or kernel ridge regression with adaptive learning rates;
(ii) PSGD;
(iii) NSGD;
(iv) arbitrary strongly convex and smooth loss functions.
Not limited to SGD with fixed learning rate and linear regression, our results manifest the broader potential of employing iterate averaging to obtain regularization that can be tuned with little computation overhead.

Our analysis is motivated from continuous differential equations, which is postponed to Section~\ref{sec:continuous-analysis} of Supplementary Materials due to space limitation. In the following we present our results in discrete cases.

\subsection{The effect of an averaged SGD path}

\begin{figure}
\centering
\includegraphics[width=0.85\linewidth]{./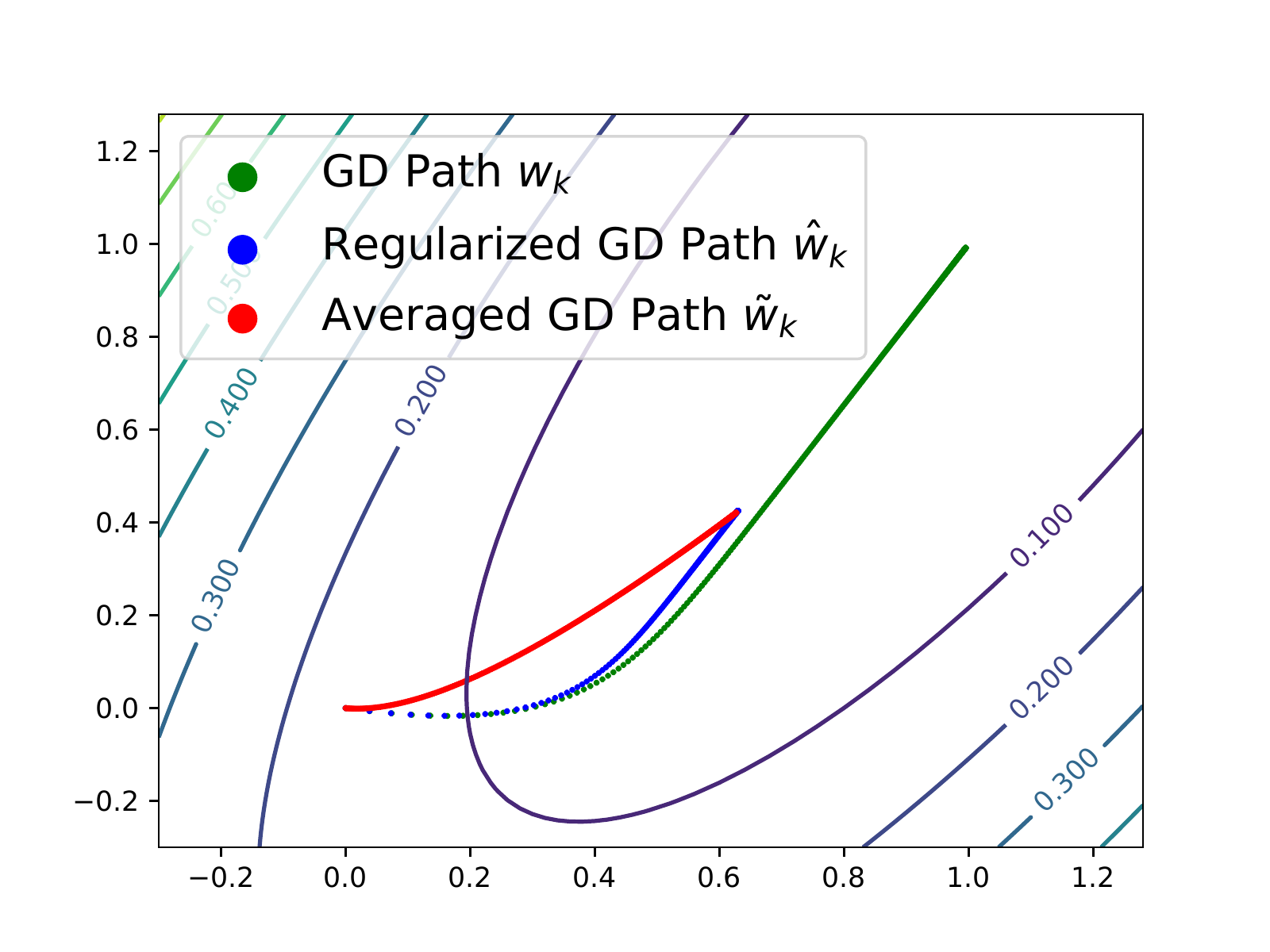}
\caption{\small A 2-D demonstration of the effect of an averaged SGD path (Theorem~\ref{thm:sgd-linear}).
{Green dots}: the vanilla GD path $w_k$;
{blue dots}: the regularized GD path $\hat{w}_k$;
{red dots}: the averaged GD path $\tilde{w}_k$.
The red dots converge to the blue ones.}
\label{fig:2dim-gd}
\end{figure}

We first introduce a generalized averaging scheme for the SGD algorithm. 
Unlike the method in \cite{neu2018iterate}, our approach works even with adaptive learning rates.
Specifically, given a learning rate schedule and a regularization parameter $\lambda$, we compute a weighting scheme for averaging a stored SGD path.
Then the averaged solution converges to the regularized solution with hyperparameter $\lambda$.
Theorem~\ref{thm:sgd-linear} formally justifies our method.

\begin{thm}[The effect of an averaged SGD path]\label{thm:sgd-linear}
Consider loss function $L(w)=\frac{1}{2n}\sum_{i=1}^n\norm{w^\top x_i-y_i}_2^2$, and regularizer $R(w) = \half \norm{w}_2^2$. 
Let $\alpha$ and $\beta$ be such that $L(w)$ is $\alpha$-strongly convex\footnotemark and $\beta$-smooth.
Let $\{w_k\}_{k=0}^{\infty}$ and $\{\hat{w}_k\}_{k=0}^{\infty}$ be the SGD paths for the vanilla loss function $L(w)$ with learning rate $\eta_k$, and the regularized loss function $ L(\hat{w})+ \lambda R(\hat{w})$ with learning rate $\gamma_k$, respectively.
Suppose
$1 - \lambda \gamma_k = {\gamma_k}/{\eta_k}$, $\eta_k \in (\eta, 1/\beta)$, $\eta > 0$ and $\gamma := {\eta}/{(1+\lambda\eta)}$.
Let 
\begin{equation*}
P_k := {\textstyle\sum}_{i=0}^k p_i = 1 - \Pi_{i=0}^k ({\gamma_i}/{\eta_i}).
\end{equation*}
Then for $\tilde{w}_k=P_k^{-1}\sum_{i=0}^k p_i w_i$ we have

1.
$P_k \cdot\Ebb[\tilde{w}_k] = \Ebb[\hat{w}_k] - (1-P_k)\cdot \Ebb[w_k]$.

2.
Both $\Ebb[w_k]$ and $\Ebb[\hat{w}_k]$ converge. Moreover, we have
$\norm{\Ebb[\hat{w}_k] - \Ebb[\tilde{w}_k]}_2 \le \Ocal((1-\lambda\gamma)^k)$.

3.
If the gradient noise $\epsilon_k = \grad \ell(x_k,y_k,w) - \grad L(w)$ has uniformly bounded variance $\Ebb[\norm{\epsilon_k}_2^2] \le \sigma^2$, then for $k$ large enough, with probability at least $1-\delta$ we have\footnotemark
\begin{equation*}
    \norm{P_k \tilde{w}_k - P_k \Ebb[\tilde{w}_k]}_2 \le \epsilon,
\end{equation*}
where
$\epsilon = \frac{\sigma}{\gamma(\lambda+\alpha)(\lambda+\beta)^2}$ $ \cdot \sqrt{\frac{\lambda}{\delta\gamma(2-\lambda\gamma)}}$.

\end{thm}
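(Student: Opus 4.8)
The plan is to reduce everything to the recursions satisfied by the \emph{expected} iterates, obtain part~1 by an induction on consecutive differences of those recursions, and control the noise in part~3 by a martingale second-moment estimate followed by Markov's inequality.

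First I would record that for linear regression $\grad L(w)=Hw-b$ with $H:=\frac1n\sum_i x_ix_i^\top$ (eigenvalues in $[\alpha,\beta]$) and $b:=\frac1n\sum_i y_ix_i$, and that, since the sample drawn at step $k$ is independent of $w_k$, the means obey the affine recursions $\Ebb[w_{k+1}]=(I-\eta_kH)\Ebb[w_k]+\eta_kb$ and $\Ebb[\hat{w}_{k+1}]=(I-\gamma_k(H+\lambda I))\Ebb[\hat{w}_k]+\gamma_kb$. The role of the hypothesis $1-\lambda\gamma_k=\gamma_k/\eta_k$ is precisely that $I-\gamma_k(H+\lambda I)=(\gamma_k/\eta_k)(I-\eta_kH)$ and $\gamma_k=(\gamma_k/\eta_k)\eta_k$, i.e.\ the regularized mean-update is $\gamma_k/\eta_k$ times the vanilla update map evaluated at $\Ebb[\hat{w}_k]$.

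For part~1, set $\Delta_k:=\Ebb[w_{k+1}]-\Ebb[w_k]$ and $\hat\Delta_k:=\Ebb[\hat{w}_{k+1}]-\Ebb[\hat{w}_k]$. Subtracting consecutive iterates in each recursion and substituting the recursion once more gives $\Delta_{k+1}=\frac{\eta_{k+1}}{\eta_k}(I-\eta_kH)\Delta_k$ and, after using the learning-rate identity, $\hat\Delta_{k+1}=\frac{\gamma_{k+1}}{\eta_k}(I-\eta_kH)\hat\Delta_k$; with the base case $\Delta_0=\eta_0b$, $\hat\Delta_0=\gamma_0b$ and the notation $\Pi_k:=\prod_{i=0}^k(1-\lambda\gamma_i)=1-P_k$ (so $\Pi_{k+1}=\frac{\gamma_{k+1}}{\eta_{k+1}}\Pi_k$) a one-line induction gives $\hat\Delta_k=\Pi_k\Delta_k$ for all $k$. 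Summing this over $k$ with $w_0=\hat{w}_0=0$ and performing an Abel summation, using $\Pi_{j-1}-\Pi_j=p_j$ and $\Pi_{k-1}=\Pi_k+p_k$, yields $\Ebb[\hat{w}_k]=(1-P_k)\Ebb[w_k]+\sum_{j=0}^kp_j\Ebb[w_j]=(1-P_k)\Ebb[w_k]+P_k\Ebb[\tilde{w}_k]$, which is exactly part~1. Part~2 then follows quickly: $\norm{I-\eta_kH}_2\le 1-\eta\alpha<1$ and $\norm{I-\gamma_k(H+\lambda I)}_2\le 1-\gamma_k(\alpha+\lambda)<1$ give geometric convergence of the two means (to $H^{-1}b$ and $(H+\lambda I)^{-1}b$ respectively); and part~1 rearranges to $\Ebb[\hat{w}_k]-\Ebb[\tilde{w}_k]=\Pi_k(\Ebb[w_k]-\Ebb[\tilde{w}_k])$, where $\Ebb[\tilde{w}_k]$ lies in the convex hull of $\{\Ebb[w_i]\}_{i\le k}$ and hence stays bounded, while $\Pi_k\le(1-\lambda\gamma)^{k+1}$ because $1-\lambda\gamma_k=\frac{1}{1+\lambda\eta_k}<\frac{1}{1+\lambda\eta}=1-\lambda\gamma$.

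For part~3, write $\xi_k:=w_k-\Ebb[w_k]$, so that $\xi_{k+1}=(I-\eta_kH)\xi_k-\eta_k\epsilon_k$ with $\xi_0=0$, hence $P_k\tilde{w}_k-P_k\Ebb[\tilde{w}_k]=\sum_{i=0}^kp_i\xi_i=-\sum_{j=0}^{k-1}M_{j,k}\eta_j\epsilon_j$ where $M_{j,k}:=\sum_{i=j+1}^kp_i\prod_{l=j+1}^{i-1}(I-\eta_lH)$. Since $\{\epsilon_j\}$ is a martingale-difference sequence with $\Ebb[\norm{\epsilon_j}_2^2]\le\sigma^2$, the cross terms vanish and $\Ebb\norm{P_k\tilde{w}_k-P_k\Ebb[\tilde{w}_k]}_2^2\le\sigma^2\sum_{j=0}^{k-1}\eta_j^2\norm{M_{j,k}}_2^2$. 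The crux is bounding $\norm{M_{j,k}}_2$: diagonalizing in the eigenbasis of $H$, for an eigenvalue $\mu\in[\alpha,\beta]$ one has $\frac{\gamma_l}{\eta_l}(1-\eta_l\mu)=1-\gamma_l(\lambda+\mu)$ and $\eta_j\Pi_j=\gamma_j\Pi_{j-1}$, so a telescoping sum gives the closed form $\eta_jM_{j,k}(\mu)=\frac{p_j}{\lambda+\mu}\bigl(1-\prod_{l=j+1}^k(1-\gamma_l(\lambda+\mu))\bigr)$, and since $0<\gamma_l(\lambda+\mu)\le\gamma_l(\lambda+\beta)<1$ the parenthesis lies in $(0,1)$, whence $\eta_j\norm{M_{j,k}}_2\le p_j/(\lambda+\alpha)$. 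Substituting $p_j=\lambda\gamma_j\Pi_{j-1}\le\frac{\lambda}{\lambda+\beta}(1-\lambda\gamma)^j$ and summing the geometric series gives $\sum_j\eta_j^2\norm{M_{j,k}}_2^2\le\frac{\lambda}{(\lambda+\alpha)^2(\lambda+\beta)^2\gamma(2-\lambda\gamma)}$, so $\Ebb\norm{P_k\tilde{w}_k-P_k\Ebb[\tilde{w}_k]}_2^2\le\gamma^2(\lambda+\beta)^2\cdot\delta\epsilon^2\le\delta\epsilon^2$ using $\gamma(\lambda+\beta)<1$, and Markov's inequality finishes. I expect the main obstacle to be the constant-tracking in part~3: extracting the exact closed form of $M_{j,k}(\mu)$ by telescoping and spotting the cancellation $\lambda\eta_j\Pi_j=p_j$, so that the final estimate matches the stated $\epsilon$ up to the harmless factor $\gamma^2(\lambda+\beta)^2<1$; the part~1 induction $\hat\Delta_k=\Pi_k\Delta_k$ is the other, shorter, key point.
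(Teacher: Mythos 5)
Your proof is correct and follows essentially the same route as the paper's: the mean recursions together with the Abel-summation identity $\hat\Delta_k=(1-P_k)\Delta_k$ give parts 1--2, and the martingale decomposition $\sum_i p_i\xi_i=\sum_j A_j\epsilon_j$ with a second-moment bound and Chebyshev's inequality gives part 3. The one place you genuinely diverge is in bounding $A_j=\eta_j M_{j,k}$: the paper controls $\sum_{i=j+1}^k\lambda\gamma_i\prod_{h=j+1}^{i-1}\bigl(I-\gamma_h(\Sigma+\lambda I)\bigr)$ by replacing each factor with its worst case and summing a geometric series, whereas you telescope it exactly to $\frac{p_j}{\lambda+\mu}\bigl(1-\prod_{l=j+1}^k(1-\gamma_l(\lambda+\mu))\bigr)$ using the cancellation $\lambda\eta_j\Pi_j=p_j$. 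Your resulting variance bound is smaller than the paper's by the factor $\gamma^2(\lambda+\beta)^2<1$, so it implies the stated $\epsilon$ and is in fact slightly sharper.
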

\footnotetext[1]{The strong convexity assumption does not limit the application of our method. 
For a convex but not strongly convex loss $L(w)$, we can instead collect an optimization path of $L(w) + \lambda_0 \norm{w}^2_2$ for some small $\lambda_0$, which is then strongly convex, and then we apply Theorem~\ref{thm:sgd-linear} to obtain the regularized solutions for a different $\lambda$.
Similar arguments apply to the theorems afterwards as well.}
\footnotetext[2]{In this high probability result, the confidence parameter $\delta$ appears in a polynomial order, $\frac{1}{\sqrt{\delta}}$. However this is only due to the assumption of bounded variance of the noise and an application of Chebyshev's inequality. It is straightforward to obtain a logarithm dependence on $\delta$ by assuming the sub-Gaussianity of the noise and applying Hoeffding's inequality.
Similar arguments apply to the theorems afterwards as well.}

The proof is left in Supplementary Materials, Section~\ref{sec:proof-sgd-linear}.
A 2-D illustration for Theorem~\ref{thm:sgd-linear} is presented in Figure~\ref{fig:2dim-gd}.

Theorem~\ref{thm:sgd-linear} guarantees the method of obtaining adjustable $\ell_2$-regularization for free via iterate averaging.
Specifically, we first collect an SGD path $\set{w_k}_{k=0}^{\infty}$ for $L(w)$ under a learning rate schedule $\eta_k$ (it can be chosen in a broad range);
then for a regularization parameter $\lambda$, we compute an averaging scheme $\set{p_k}_{k=0}^{\infty}$ that converts the collected SGD path to the regularized solution, $\hat{w}_{\infty}$. Note that the learning rate schedule $\gamma_k$ is only for analysis and does not need to be known.

Specifically, when the learning rates are constants, i.e., $\eta_i=\eta$ and $\gamma_i=\gamma$, the first two conclusions in Theorem~\ref{thm:sgd-linear} recover the Proposition~1 and Proposition~2 in~\cite{neu2018iterate}.
Besides, the third claim in Theorem~\ref{thm:sgd-linear} characterizes the deviation of the averaged solution, which relies on the models, learning rates, and the regularization parameter, etc. 
And empirical studies in Section~\ref{sec:exp-mnist} do suggest that such a deviation is sufficiently small that it does not affect the induced regularization effect.

\begin{rmk}
We emphasize that the method of \citet{neu2018iterate} only applies to SGD with constant learning rate.
Moreover, their theory only guarantees the averaged solution has \emph{convergence in expectation}, which is not very useful since the averaged solution might not converge to the regularized solution almost surely, not even in probability (a.k.a. weak convergence) (see Section~\ref{sec:exp-mnist}).
Nevertheless, our theory carefully characterizes the deviation between the averaged solution and the regularized solution.
\end{rmk}

More interestingly, Theorem~\ref{thm:gd-kernel} shows that this method is also applicable to kernel ridge regression (in the dual space).

\begin{thm2}\label{thm:gd-kernel}
Let $K\in \Rbb^{n\times n}$ be a kernel, $K(i,j) = \phi(x_i)^\top \phi(x_j)$, where $\phi: \Rbb^d \to \Hcal$ is the kernel map.
Consider kernel ridge regression
\begin{equation*}
    \min_{\alpha\in \Rbb^n} L(\alpha,\lambda) := \half \norm{y-K\alpha}_2^2 + \frac{\lambda}{2} \alpha^\top K \alpha
\end{equation*}
where $y=(y_1,\dots,y_n)^T$ is the labels and $\alpha\in \Rbb^n$ is the dual parameter.
Let $\{\alpha_k\}_{k=0}^{\infty}$ and $\{\hat{\alpha}_k\}_{k=0}^{\infty}$ be the GD paths for the loss $L(\alpha,\lambda)$ with learning rate $\eta_k$, and the loss $L(\hat{\alpha},\hat{\lambda})$ with generalized learning rate $\gamma_k$, respectively.
Suppose $\hat{\lambda} > \lambda$,
$\gamma_k = \eta_k\left(I + (\hat{\lambda} - \lambda) \eta_k K \right)^{-1} $.
Let 
\begin{equation*}
    P_k := {\textstyle\sum}_{i=0}^k p_i = 1 - \Pi_{i=0}^k \left(\gamma_i / \eta_i\right).
\end{equation*}
Then for $\tilde{\alpha}_k = P_k^{-1}\sum_{i=0}^k p_i \alpha_i$ we have

1.
$P_k \tilde{\alpha}_k = \hat{\alpha}_k - (1-P_k) \alpha_k$.

2.
Both $\alpha_k$ and $\hat{\alpha}_k$ converge provided suitable learning rates. Moreover, we have
$\norm{\hat{\alpha}_k - \tilde{\alpha}_k}_2 \le \Ocal(C^k)$ where $C\in(0,1)$ is a constant decided by $K$, $\hat{\lambda}-\lambda$ and $\eta_k$.

\end{thm2}

\subsection{The effect of an averaged PSGD path}

\begin{figure}
\centering
\includegraphics[width=0.85\linewidth]{./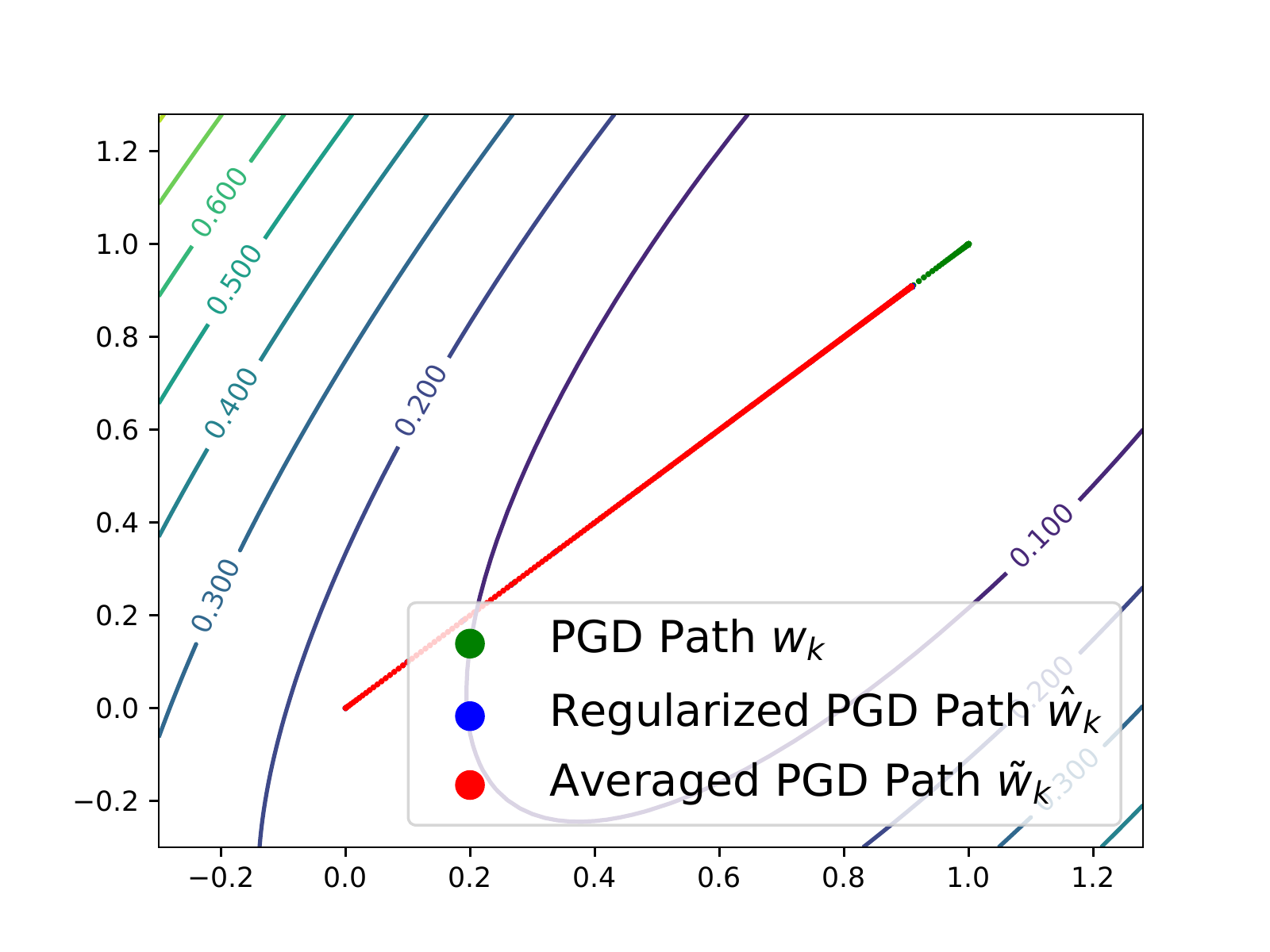}
\caption{\small A 2-D demonstration of the effect of an averaged PSGD path (Theorem~\ref{thm:psgd-linear}).
The Hessian is used as the preconditioning matrix.
{Green dots}: the vanilla PGD path $w_t$;
{blue dots}: the regularized PGD path $\hat{w}_t$;
{red dots}: the averaged PGD path $\tilde{w}_t$.
The red dots converge to the blue ones.}
\label{fig:2dim-pgd}
\end{figure}

In practice, we usually need many different regularizers.
And one important class of them is the \emph{generalized $\ell_2$-regularizers}, i.e.,
$R(w):=\half w^\top Q w$ for some positive definite matrix $Q$. 
But it is painful to adjust its regularization parameter $\lambda$ by re-training the model.
Luckily, we show that the solution biased by such a regularizer can also be obtained for ``free'' by averaging the optimization path of PSGD.
Our result is formally presented in the next theorem.

\begin{thm}[The effect of an averaged PSGD path]\label{thm:psgd-linear}
Consider loss function $L(w)=\frac{1}{2n}\sum_{i=1}^n\norm{w^\top x_i-y_i}_2^2$, and regularizer $R(w) = \half w^\top Q w$, where $Q$ is a positive definite matrix.
Let $\alpha$ and $\beta$ be such that $\alpha Q \preceq \Sigma = n^{-1}\sum_{i=1}^n x_i x_i^\top \preceq \beta Q$.
With $Q$ as the preconditioning matrix, let $\{w_k\}_{k=0}^{\infty}$ and $\{\hat{w}_k\}_{k=0}^{\infty}$ be the PSGD paths for the vanilla loss function $L(w)$ with learning rate $\eta_k$, and the regularized loss function $L(\hat{w})+ \lambda R(\hat{w})$ with learning rate $\gamma_k$, respectively.
Suppose
$1-\lambda \gamma_k = {\gamma_k}/{\eta_k}$, $\eta_k\in(\eta, 1/\beta)$, $\eta > 0$ and $\gamma := {\eta}/{(1+\lambda\eta)}$.
Let 
\begin{equation*}
    P_k := {\textstyle\sum}_{i=0}^k p_i = 1 - \Pi_{i=0}^k ({\gamma_i}/{\eta_i}).
\end{equation*}
Then for $\tilde{w}_k=P_k^{-1}\sum_{i=0}^k p_i w_i$ we have

1.
$P_k \cdot \Ebb[\tilde{w}_k] = \Ebb[\hat{w}_k] - (1-P_k)\cdot \Ebb[w_k]$.

2.
Both $\Ebb[w_k]$ and $\Ebb[\hat{w}_k]$ converge. Moreover, we have
$\norm{\Ebb[\hat{w}_k] - \Ebb[\tilde{w}_k]}_2 \le \Ocal((1-\lambda\gamma)^k)$.

3. 
If the noise $\epsilon_k = Q^{-1}(\grad \ell(x_k,y_k,w) -\grad L(w))$ has uniform bounded variance $\Ebb[\norm{\epsilon_k}_2^2] \le \sigma^2$, then for $k$ large enough, with probability at least $1-\delta$ we have
\begin{equation*}
    \norm{P_k \tilde{w}_k - P_k \Ebb[\tilde{w}_k]}_2 \le \epsilon,
\end{equation*}
where $\epsilon = \frac{\sigma  \norm{Q}_2 }{\gamma(\lambda+\alpha)(\lambda+\beta)^2}\sqrt{\frac{\lambda}{\delta\gamma(2-\lambda\gamma)}}$.
\end{thm}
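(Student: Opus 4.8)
The plan is to remove the preconditioner by a linear change of variables and then invoke Theorem~\ref{thm:sgd-linear}. Put $v:=Q^{1/2}w$, and likewise $v_k:=Q^{1/2}w_k$, $\hat v_k:=Q^{1/2}\hat w_k$; since the weighting scheme $\set{p_k}$ depends only on the learning rates, $Q^{1/2}\tilde w_k=P_k^{-1}\sum_{i=0}^k p_i v_i$, so averaging commutes with the change of variables. Substituting $w=Q^{-1/2}v$ into~\eqref{eq:psgd-general} and~\eqref{eq:psgd-general-regu}, a direct computation shows that $\set{v_k}$ and $\set{\hat v_k}$ are exactly the (plain) SGD paths, with learning rates $\eta_k$ and $\gamma_k$ respectively, of the linear-regression loss $\bar L(v):=\frac{1}{2n}\sum_{i=1}^n\norm{v^\top(Q^{-1/2}x_i)-y_i}_2^2$ together with the $\ell_2$-regularizer $\bar R(v):=\half\norm{v}_2^2$. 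The Hessian of $\bar L$ is $\bar\Sigma=Q^{-1/2}\Sigma Q^{-1/2}$, so the assumption $\alpha Q\preceq\Sigma\preceq\beta Q$ is equivalent to $\alpha I\preceq\bar\Sigma\preceq\beta I$, i.e.\ $\bar L$ is $\alpha$-strongly convex and $\beta$-smooth; moreover the conditions $1-\lambda\gamma_k=\gamma_k/\eta_k$, $\eta_k\in(\eta,1/\beta)$, $\gamma=\eta/(1+\lambda\eta)$ and the weighting scheme $P_k=1-\Pi_{i=0}^k(\gamma_i/\eta_i)$ are verbatim those in Theorem~\ref{thm:sgd-linear}. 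Hence Theorem~\ref{thm:sgd-linear} applies to $(\bar L,\bar R,v_k,\hat v_k)$, and I would pull its three conclusions back through $Q^{-1/2}$.

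For part~1, the first conclusion of Theorem~\ref{thm:sgd-linear} reads $P_k\,\Ebb[Q^{1/2}\tilde w_k]=\Ebb[Q^{1/2}\hat w_k]-(1-P_k)\Ebb[Q^{1/2}w_k]$; left-multiplying by the invertible matrix $Q^{-1/2}$ gives the claim. For part~2, convergence of $\Ebb[v_k]$ and $\Ebb[\hat v_k]$ transfers to $\Ebb[w_k]=Q^{-1/2}\Ebb[v_k]$ and $\Ebb[\hat w_k]=Q^{-1/2}\Ebb[\hat v_k]$; a short computation identifies the limit of $\Ebb[\hat v_k]$ with $(\bar\Sigma+\lambda I)^{-1}Q^{-1/2}b$, equivalently $\Ebb[\hat w_k]\to(\Sigma+\lambda Q)^{-1}b=\arg\min_w L(w)+\lambda R(w)$; and $\norm{\Ebb[\hat v_k]-\Ebb[Q^{1/2}\tilde w_k]}_2\le\Ocal((1-\lambda\gamma)^k)$ becomes $\norm{\Ebb[\hat w_k]-\Ebb[\tilde w_k]}_2\le\norm{Q^{-1/2}}_2\cdot\Ocal((1-\lambda\gamma)^k)=\Ocal((1-\lambda\gamma)^k)$.

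Part~3 is where the factor $\norm{Q}_2$ enters. The PSGD noise $\epsilon_k=Q^{-1}(\grad\ell(x_k,y_k,w)-\grad L(w))$ is precisely $Q^{-1/2}$ times the plain-SGD noise $\bar\epsilon_k$ of $\bar L$, i.e.\ $\bar\epsilon_k=Q^{1/2}\epsilon_k$, so $\Ebb[\norm{\bar\epsilon_k}_2^2]\le\norm{Q}_2\,\sigma^2$. Applying the third conclusion of Theorem~\ref{thm:sgd-linear} with effective noise level $\bar\sigma=\sqrt{\norm{Q}_2}\,\sigma$ bounds $\norm{P_k Q^{1/2}\tilde w_k-P_k\,\Ebb[Q^{1/2}\tilde w_k]}_2$ with probability at least $1-\delta$; multiplying on the left by $Q^{-1/2}$ loses a further $\norm{Q^{-1/2}}_2$ and reproduces the stated bound, the $Q$-dependent prefactor coming out as $\norm{Q^{-1/2}}_2\sqrt{\norm{Q}_2}$ (which is $\le\norm{Q}_2$ under the usual normalization $Q\succeq I$). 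The ``$k$ large enough'' qualifier and the Chebyshev-type $1/\sqrt{\delta}$ dependence carry over unchanged.

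I expect the only delicate points to be (i) confirming that the change of variables is faithful at the stochastic level---one must check that the \emph{mini-batch} gradients map as claimed and, in particular, that the noise transforms as $\bar\epsilon_k=Q^{1/2}\epsilon_k$, which is exactly the origin of the $\norm{Q}_2$ factor---and (ii) bookkeeping the $Q^{\pm1/2}$ operator-norm factors so the constant in part~3 comes out as advertised. If one prefers a self-contained argument paralleling Theorem~\ref{thm:sgd-linear} rather than reducing to it, the single new feature is that $A_k:=I-\eta_k Q^{-1}\Sigma$ is no longer symmetric, so every contraction estimate must be performed in the $Q$-weighted norm, where $\norm{A_k}_Q\le 1-\eta_k\alpha<1$ follows from $\alpha Q\preceq\Sigma\preceq\beta Q$ and $\eta_k<1/\beta$, and then converted back to $\norm{\cdot}_2$ at the end---again the source of the $Q$-dependent prefactor. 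Either way, the algebraic core (part~1) and the geometric-series summation behind the deviation bound (part~3) are the same routine computations as in the SGD case.
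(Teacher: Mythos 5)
Your proposal is correct and follows essentially the same route as the paper: the change of variables $v=Q^{1/2}w$, the reduction to Theorem~\ref{thm:sgd-linear} via $\bar\Sigma=Q^{-1/2}\Sigma Q^{-1/2}$ with $\alpha I\preceq\bar\Sigma\preceq\beta I$, the noise transformation $\bar\epsilon_k=Q^{1/2}\epsilon_k$ giving the $\norm{Q}_2\sigma^2$ variance bound, and the pull-back through $Q^{-1/2}$ are all exactly what the paper does. You even correctly flag the same subtlety the paper glosses over, namely that the final prefactor is really $\norm{Q^{-1/2}}_2\norm{Q^{1/2}}_2$, which is bounded by $\norm{Q}_2$ only under a normalization such as $Q\succeq I$.
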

The proof is left in Supplementary Materials, Section~\ref{sec:proof-psgd-linear}.
A 2-D illustration for Theorem~\ref{thm:psgd-linear} is presented in Figure~\ref{fig:2dim-pgd}.

The importance of Theorem~\ref{thm:psgd-linear} is two-folds.
On the one hand, averaging the PSGD path has an effect as the generalized $\ell_2$-regularizer.
And as before, this induced regularization is both adjustable and costless.
The considered PSGD algorithm applies to natural gradient descent and Newton's method in certain circumstances where the curvature matrices can be replaced by constant matrices~\cite{martens2014new,dennis1996numerical,bottou2008tradeoffs}.
On the other hand, to obtain a desired type of generalized $\ell_2$-regularization effect, we should store and average a PSGD path with the corresponding preconditioning matrix as indicated in Theorem~\ref{thm:psgd-linear}, instead of using a SGD path.

\subsection{The effect of an averaged NSGD path}

\begin{figure}
\centering
\includegraphics[width=0.85\linewidth]{./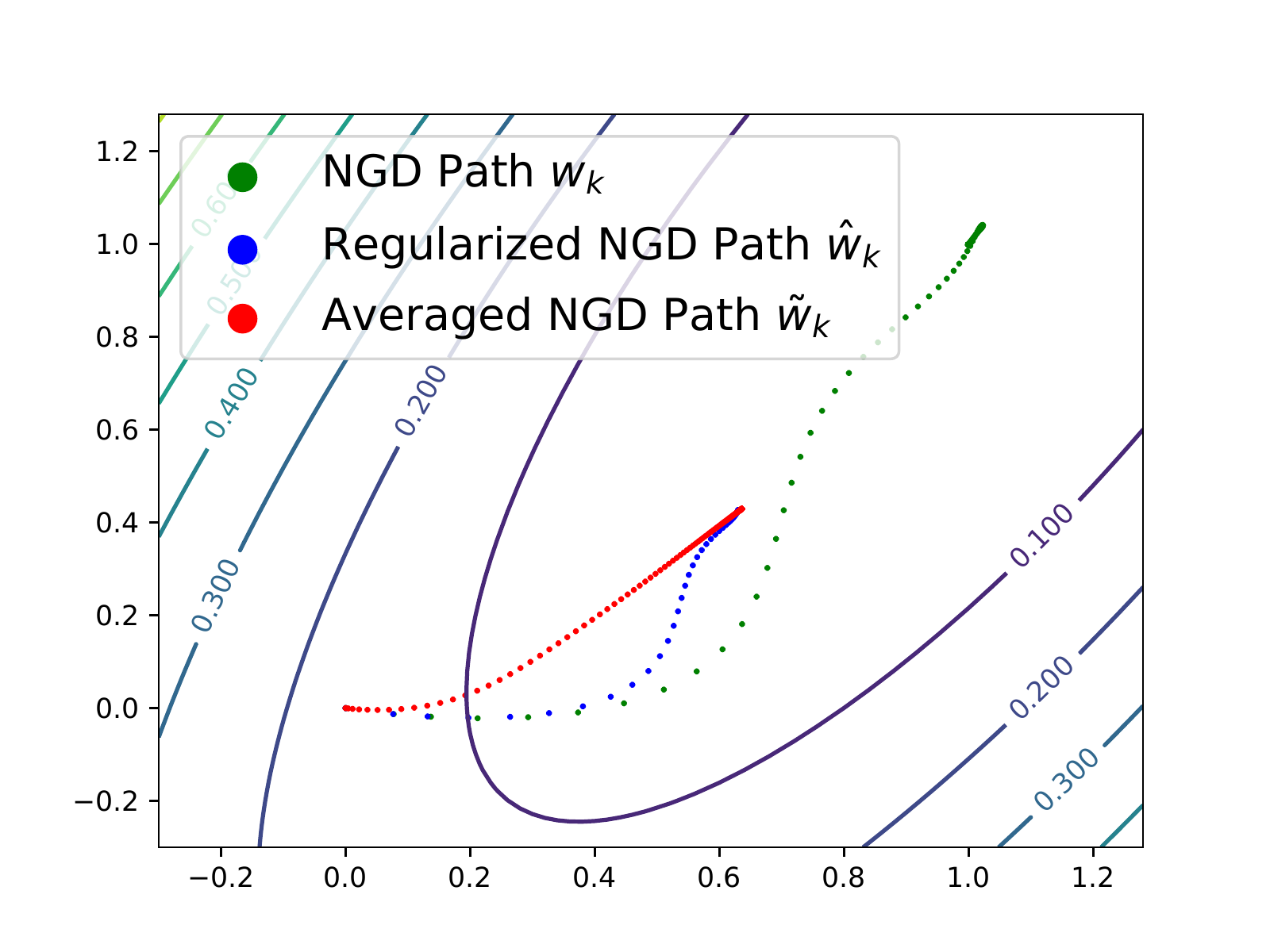}
\caption{\small A 2-D demonstration of the effect of an averaged NSGD path (Theorem~\ref{thm:nsgd-linear}).
{Green dots}: the vanilla NGD path $w_t$;
{blue dots}: the regularized NGD path $\hat{w}_t$;
{red dots}: the averaged NGD path $\tilde{w}_t$.
The red dots converge to the blue ones.}
\label{fig:2dim-ngd}
\end{figure}

In this part, we show how to obtain adjustable regularization effect by applying averaging schemes on the NSGD path.

\begin{thm}[The effect of an averaged NSGD path]\label{thm:nsgd-linear}
Consider loss function $L(w) = \frac{1}{2n}\sum_{i=1}^n\norm{w^\top x_i-y_i}_2^2$, and regularizer $R(w) = \half \norm{w}_2^2$. 
Let $\alpha$ and $\beta$ be such that $L(w)$ is $\alpha$-strongly convex and $\beta$-smooth.
Let $\{w_k\}_{k=0}^{\infty}$ and $\{\hat{w}_k\}_{k=0}^{\infty}$ be the NSGD paths for the vanilla loss function $L(w)$ with learning rate $\eta$, and the regularized loss function $L(\hat{w})+ \lambda R(\hat{w})$ with learning rate $\gamma$, respectively.
Suppose
$1-\lambda \gamma = {\gamma}/{\eta}$,
$\eta\in(0,1/\beta)$.
Let 
\begin{equation*}
    P_k := \sum_{i=0}^k p_k = 1 - \frac{\gamma}{\eta}\left( \frac{1-\sqrt{\gamma(\alpha+\lambda)}}{1-\sqrt{\eta\alpha}} \right)^{k-1}.
\end{equation*}
Then for $\tilde{w}_k=P_k^{-1}\sum_{i=0}^k p_i w_i$ we have

1. 
$P_k \cdot \Ebb[\tilde{w}_k] = \Ebb[\hat{w}_k] - (1-P_k)\cdot \Ebb[w_k]$.

2. 
$\Ebb[w_k]$ and $\Ebb[\hat{w}_k]$ converge. And 
$\norm{\Ebb[\hat{w}_k] - \Ebb[\tilde{w}_k]}_2 \le \Ocal\left(C^k\right)$, where $C = \frac{1-\sqrt{\gamma(\alpha+\lambda)}}{1-\sqrt{\eta\alpha}}\in (0,1)$.

3.
If the gradient noise $\epsilon_k = \grad \ell(x_k,y_k,w) - \grad L(w)$ has uniformly bounded variance $\Ebb[\norm{\epsilon_k}_2^2] \le \sigma^2$, then for $k$ large enough, with probability at least $1-\delta$ we have
\begin{equation*}
    \norm{P_k \tilde{w}_k - P_k \Ebb[\tilde{w}_k]}_2 \le \epsilon,
\end{equation*}
where $\epsilon$ depends on $\sigma,\alpha,\beta,\eta,\gamma$.

\end{thm}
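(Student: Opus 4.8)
The plan is to mirror the proofs of Theorems~\ref{thm:sgd-linear} and~\ref{thm:psgd-linear}, carrying the momentum term through a companion-matrix formulation. First I would pass to expectations: because $\grad\ell(x_k,y_k,v)=x_kx_k^\top v-y_kx_k$ is affine in $v$ and the sampling noise $\epsilon_k$ is mean-zero and independent of the past iterates, $\bar w_k:=\Ebb[w_k]$ and $\bar{\hat w}_k:=\Ebb[\hat w_k]$ satisfy the \emph{deterministic} NSGD recursions driven by the population gradients $\Sigma w-b$ and $(\Sigma+\lambda I)\hat w-b$, where $\Sigma=n^{-1}\sum_i x_ix_i^\top$ and $b=n^{-1}\sum_i y_ix_i$. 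Stacking current and previous iterates, $u_k:=(\bar w_k,\bar w_{k-1})$ obeys $u_{k+1}=Mu_k+g$, where $M$ is the $2\times2$ block companion matrix with first block-row $\big((1+\tau)(I-\eta\Sigma),\,-\tau(I-\eta\Sigma)\big)$, second block-row $(I,0)$, and $g=(\eta b,0)$; likewise $\hat u_{k+1}=\hat M\hat u_k+\hat g$ with $\hat\tau,\gamma,(\Sigma+\lambda I)$ replacing $\tau,\eta,\Sigma$ and $\hat g=(\gamma b,0)$. Since $u_0=\hat u_0=0$, this gives the closed forms $u_k=\big(\sum_{j=0}^{k-1}M^j\big)g$ and $\hat u_k=\big(\sum_{j=0}^{k-1}\hat M^j\big)\hat g$.

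The key step is an exact relation between $M$ and $\hat M$. The coupling $1-\lambda\gamma=\gamma/\eta$ yields the algebraic identity $1-\gamma(\mu+\lambda)=(\gamma/\eta)(1-\eta\mu)$ for every eigenvalue $\mu$ of $\Sigma$; combined with the tunings of $\tau$ and $\hat\tau$, a short computation on the $2\times2$ restrictions $M_\mu,\hat M_\mu$ shows $\operatorname{tr}(\hat M_\mu)=C\operatorname{tr}(M_\mu)$ and $\det(\hat M_\mu)=C^2\det(M_\mu)$ with $C=\frac{1-\sqrt{\gamma(\alpha+\lambda)}}{1-\sqrt{\eta\alpha}}$ --- crucially a \emph{single} scalar, independent of $\mu$ (the common factor $1-\eta\mu$ drops out, which is exactly what makes the coupling hypothesis suffice). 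This upgrades to the operator identity $\hat M=D^{-1}(C\,M)D$ with $D=\mathrm{blockdiag}(I,C\,I)$, together with $D\hat g=(\gamma/\eta)g$. Hence $\hat u_k=(\gamma/\eta)\,D^{-1}\big(\sum_{j=0}^{k-1}C^jM^j\big)g$, and since conjugation by $D$ leaves the first block untouched, $\bar{\hat w}_k=(\gamma/\eta)\sum_{j=0}^{k-1}C^j\,\Pi_1 M^j g$ where $\Pi_1$ projects onto the first block. On the other hand, substituting $\bar w_i=\Pi_1\big(\sum_{j<i}M^j\big)g$ into $\sum_{i\le k}p_i\bar w_i+(1-P_k)\bar w_k$ and swapping the order of summation gives $\sum_{j=0}^k(1-P_j)\,\Pi_1 M^jg$. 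Matching the two power series term by term forces $1-P_j=(\gamma/\eta)C^{\,j}$ (up to the index convention that the zero iterate $w_0=0$ carries no weight), which is exactly the weighting scheme in the statement; this is Part~1. Part~2 follows at once: the spectrum of $M$ is the union over $\mu\in\mathrm{spec}(\Sigma)$ of the roots of $r^2-(1+\tau)(1-\eta\mu)r+\tau(1-\eta\mu)$, whose maximum modulus --- attained at $\mu=\alpha$, where the two roots collide into a defective mode --- equals $1-\sqrt{\eta\alpha}<1$ under $\eta<1/\beta$; so $\bar w_k$ and $\bar{\hat w}_k$ converge, and from Part~1, $\norm{\bar{\hat w}_k-\bar{\tilde w}_k}_2=P_k^{-1}(1-P_k)\norm{\bar{\hat w}_k-\bar w_k}_2=\Ocal(C^k)$.

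For Part~3 I would use linearity once more: $w_k-\bar w_k$ is a fixed linear image of the independent mean-zero noises $\epsilon_0,\dots,\epsilon_{k-1}$ through partial sums of powers of $M$, so $P_k\tilde w_k-P_k\Ebb[\tilde w_k]=\sum_{i=0}^k p_i(w_i-\bar w_i)$ is one as well; collecting the coefficient of each $\epsilon_j$ (a $p_i$-weighted sum of powers of $M$) and bounding $\norm{M^j}_2\le\Ocal\big(j(1-\sqrt{\eta\alpha})^j\big)$ --- the factor $j$ absorbing the defective mode --- the variance of the sum telescopes into a convergent series in $\sigma^2$ times a rational function of $\alpha,\beta,\eta,\gamma$, and Chebyshev's inequality delivers the advertised bound with the $1/\sqrt{\delta}$ dependence.

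I expect the main obstacle to be Part~1: handling the non-normal, partly defective companion matrix $M$ while checking that the single scalar $C$ rescales every mode of $\hat M$ simultaneously, and then pinning down the exact weights --- including the off-by-one coming from the zero initialization --- by the power-series matching. Once that identity is in hand, Parts~2 and~3 are routine, modulo carrying polynomial-in-$j$ prefactors that account for the non-normality of $M$.
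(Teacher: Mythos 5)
Your proposal is correct and follows the same overall strategy as the paper: pass to expectations, show that the increments of the regularized mean path are the increments of the unregularized one scaled by a geometric factor, invoke the summation lemma (Lemma~\ref{thm:sufficient-condition}) for Part~1, and use the martingale/Chebyshev argument for Part~3. The difference is purely in how the increment-scaling identity is established. The paper solves the second-order recurrence for $z_k=\Ebb[w_{k+1}]-\Ebb[w_k]$ explicitly via its characteristic equation, obtaining the trigonometric closed form $z_k=\frac{\eta a}{\sin\theta}(-B)^{(k-1)/2}\sin(\theta k)$, and then checks $\hat\theta=\theta$ and $-\hat B=C^2(-B)$ under the coupling $1-\lambda\gamma=\gamma/\eta$. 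Your companion-matrix similarity $\hat M=D^{-1}(C M)D$ with $D=\mathrm{blockdiag}(I,CI)$ is exactly the coordinate-free repackaging of those two facts (same characteristic polynomial after rescaling by $C$: your trace and determinant computations check out, e.g.\ $\tfrac{\gamma}{\eta}\cdot\tfrac{1+\sqrt{\eta\alpha}}{1+\sqrt{\gamma(\alpha+\lambda)}}=C$), and it cleanly sidesteps the degenerate case $\sin\theta=0$ that the explicit formula must finesse. The off-by-one you flag is real and resolves the right way: with the paper's convention $w_0=w_1=0$ the matching gives $1-P_k=(\gamma/\eta)C^{k-1}$, which is the $P_k$ in the statement. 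Two minor points to tighten: the noises $\epsilon_k$ are a martingale difference sequence, not independent of the past (they depend on $v_k$ through the multiplicative term $(\Sigma-x_{k+1}x_{k+1}^\top)v_k$), but since your coefficient matrices $A_j$ are deterministic the variance still decomposes term by term; and your spectral bound $\norm{M^j}_2\le\Ocal(j(1-\sqrt{\eta\alpha})^j)$ is a valid, slightly more conservative substitute for the paper's sharper bound via $1/\sin\theta\le\sqrt{(1-\eta\alpha)/(\eta(\lambda_{\min}-\alpha))}$, which implicitly requires $\alpha<\lambda_{\min}(\Sigma)$ strictly.
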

The proof and the exact value of $\epsilon$ are given in Supplementary Materials, Section~\ref{sec:proof-nsgd-linear}.
A 2-D illustration for Theorem~\ref{thm:nsgd-linear} is presented in Figure~\ref{fig:2dim-ngd}.

Theorem~\ref{thm:nsgd-linear} affirmatively answers an open question raised by \citet{neu2018iterate}: there exists an averaging scheme for NSGD to achieve $\ell_2$-regularization in arbitrary strength.
In addition to the results for averaging SGD, Theorem~\ref{thm:nsgd-linear} provides us wider choices of applicable optimizers for obtaining adjustable $\ell_2$-regularization effect by iterate averaging.

\subsection{The effect of an averaged GD path for strongly convex and smooth loss functions}

In this section, we show that the iterate averaging methods work for not only simple optimization objectives like least square, but also a much broader set of loss functions.
In fact, we show that any strongly convex and smooth function admits an iterate averaging scheme, which brings $\ell_2$-regularization effect in a tunable manner.
More formally, in the problems~\eqref{eq:loss} and~\eqref{eq:reg-loss}, let $L(w)$ be $\alpha$-strongly convex and $\beta$-smooth, and $R(w) := \half \norm{w}_2^2$ be the $\ell_2$-regularizer.
For the sake of representation, we focus on gradient descent (GD) with constant learning rate applied on the loss functions.
Similar arguments can also be applied for SGD, PSGD and NSGD.
The GD takes update
\begin{equation*}
\begin{aligned}
    & w_{k+1} = w_k - \eta\grad L(w_k),\\ 
    & \hat{w}_{k+1, \lambda} = \hat{w}_{k,\lambda} - \gamma(\grad L(\hat{w}_{k,\lambda})+\lambda\hat{w}_{k,\lambda}),
\end{aligned}
\end{equation*}
for optimizing problems~\eqref{eq:loss} and~\eqref{eq:reg-loss}, respectively.
Let $b = -\grad L(w_0) = -\grad L(0)$.
Let us denote two iterations
\begin{equation*}
    u_{k+1}-u_k = -\eta(\alpha u_k-b),\quad
    v_{k+1}-v_k = -\eta(\beta v_k-b),
\end{equation*}
where $u_0=v_0=0$.
Consider an averaging scheme $P_k = \sum_{i=0}^k p_i = 1 - \left({\gamma}/{\eta}\right)^{k+1}$.
Let $\tilde{u}_k = {P_k}^{-1}\sum_{i=0}^k p_i u_i$, $\tilde{v}_k = {P_k}^{-1}\sum_{i=0}^k p_i v_i$,
and $\tilde{w}_k=P_k^{-1}\sum_{i=0}^k p_i w_i$.
Then the next theorem characterizes the regularization effect of a averaged GD path for general strongly convex and smooth loss functions.

\begin{thm}[The effect of an averaged GD path for strongly convex and smooth loss functions]\label{thm:sgd-general}
Without loss of generality, assume the unique minimum $w_*$ of $L(w)$ satisfies $w_* > w_0 = 0$ entry-wisely.
Suppose
${1}/{(2\beta-\alpha)} < \eta < {1}/{\beta}$,
$0<\gamma<{\eta}/{(\eta(\beta-\alpha)+1)}$.
Then for hyperparameters
\begin{equation*}
    \lambda_1 = {1}/{\gamma}-{1}/{\eta} + \beta-\alpha,\quad
    \lambda_2 = {1}/{\gamma}-{1}/{\eta} + \alpha-\beta,
\end{equation*}
we have

1.
$\hat{w}_{k,\lambda_1} + (1-P_k)(\tilde{v}_k-v_k) \le \tilde{w}_k \le \hat{w}_{k,\lambda_2} + (1-P_k)(\tilde{u}_k-u_k)$,
where the ``$\le$'' is defined entry-wisely.

2. $u_k, \tilde{u}_k, v_k, \tilde{v}_k, \hat{w}_{k,\lambda_1}, \hat{w}_{k,\lambda_2}$ converge.
Moreover let $m = (\hat{w}_{\infty,\lambda_2}+\hat{w}_{\infty,\lambda_1})/2$, $d = (\hat{w}_{\infty,\lambda_2}-\hat{w}_{\infty,\lambda_1})/2$ and $C = \max\{(1-\gamma(\alpha+\lambda_1), (1-\gamma(\alpha+\lambda_2), \frac{\gamma}{\eta}\} \in (0,1)$, then
$\norm{\tilde{w}_k - m}_2 \le \norm{d}_2 + \Ocal(C^k)$.

\end{thm}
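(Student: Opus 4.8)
The plan is to squeeze the nonlinear GD recursion between exactly solvable affine ones. Write $H(w):=\int_0^1\grad^2 L(tw)\,\mathrm{d}t$, so that $\grad L(w)=H(w)w-b$ with $b=-\grad L(0)$ and $\alpha I\preceq H(w)\preceq\beta I$; since $w_*>0$ and $H(w_*)w_*=b$ we get $b>0$ (coordinatewise). Then $w_{k+1}=(I-\eta H(w_k))w_k+\eta b$, $\hat w_{k+1,\lambda}=(I-\gamma(H(\hat w_{k,\lambda})+\lambda I))\hat w_{k,\lambda}+\gamma b$, while $u_k,v_k$ obey genuinely linear recursions with the constant coefficients $\alpha,\beta$. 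Set $\lambda_0:=1/\gamma-1/\eta$, so $\lambda_1=\lambda_0+(\beta-\alpha)$, $\lambda_2=\lambda_0-(\beta-\alpha)$, and note the matched endpoints $\alpha+\lambda_1=\beta+\lambda_0$ and $\beta+\lambda_2=\alpha+\lambda_0$. The step-size window $1/(2\beta-\alpha)<\eta<1/\beta$ and $0<\gamma<\eta/(\eta(\beta-\alpha)+1)$ is what makes the comparisons below go through: it forces $\gamma/\eta<1$, $\lambda_2>0$, and $0<1-\gamma(\alpha+\lambda_0),\,1-\gamma(\beta+\lambda_0)<1$, and it makes $p_i\ge0$, $P_k\in(0,1)$ for the scheme $P_k=1-(\gamma/\eta)^{k+1}$.

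For Part 1 I would argue in four steps. \emph{(i)} A monotone induction using $I-\eta H(w_k)\succeq0$ and $0\le v_k\le w_k\le u_k$ (nonnegativity is maintained since $b>0$) gives $v_k\le w_k\le u_k$ entrywise; averaging with the nonnegative weights $p_i$ then yields $\tilde v_k\le\tilde w_k\le\tilde u_k$. \emph{(ii)} The scheme $P_k=1-(\gamma/\eta)^{k+1}$ is exactly the weighting of Theorem~\ref{thm:sgd-linear} specialized to the quadratic $\tfrac\alpha2\norm{w}_2^2-b^\top w$ (resp.\ with $\beta$) and regularization strength $\lambda_0$ (the constraint $1-\lambda_0\gamma=\gamma/\eta$ is the definition of $\lambda_0$); hence $P_k\tilde u_k=\hat u_k-(1-P_k)u_k$ and $P_k\tilde v_k=\hat v_k-(1-P_k)v_k$, where $\hat u_k,\hat v_k$ are GD with step $\gamma$ on the $\lambda_0$-regularized $\alpha$- and $\beta$-quadratics, i.e.\ linear recursions with coefficients $\alpha+\lambda_0$ and $\beta+\lambda_0$. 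Equivalently $\hat u_k=\tilde u_k-(1-P_k)(\tilde u_k-u_k)$ and $\hat v_k=\tilde v_k-(1-P_k)(\tilde v_k-v_k)$, so (i) becomes $\hat v_k+(1-P_k)(\tilde v_k-v_k)\le\tilde w_k\le\hat u_k+(1-P_k)(\tilde u_k-u_k)$. \emph{(iii)} Compare the true regularized iterates with these linear ones: since $H(\hat w_{k,\lambda_2})+\lambda_2 I\preceq(\beta+\lambda_2)I=(\alpha+\lambda_0)I$, the $\lambda_2$-GD map has all Jacobian eigenvalues in $(0,1)$, is order preserving, and $\hat w_{k,\lambda_2}$ dominates $\hat u_k$; dually $H(\hat w_{k,\lambda_1})+\lambda_1 I\succeq(\alpha+\lambda_1)I=(\beta+\lambda_0)I$ makes $\hat w_{k,\lambda_1}\le\hat v_k$, this direction being shown via the error recursion $e_k:=\hat v_k-\hat w_{k,\lambda_1}$, which one checks satisfies $e_{k+1}=(1-\gamma(\beta+\lambda_0))e_k+\gamma(H(\hat w_{k,\lambda_1})-\alpha I)\hat w_{k,\lambda_1}\succeq0$ once $\hat w_{k,\lambda_1}\ge0$ is established (this is exactly where the lower cutoff $\eta>1/(2\beta-\alpha)$ on the step size is used). \emph{(iv)} Chaining (ii) and (iii) gives Part 1.

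For Part 2, $u_k,v_k$ are fixed-point iterations of the contractions $w\mapsto(1-\eta\alpha)w+\eta b$ and $w\mapsto(1-\eta\beta)w+\eta b$, hence converge geometrically; because $P_k\to1$ at rate $\gamma/\eta$, the averages $\tilde u_k,\tilde v_k$ converge and $(1-P_k)(\tilde u_k-u_k),(1-P_k)(\tilde v_k-v_k)$ vanish at rate $\gamma/\eta$; and $\hat w_{k,\lambda_1},\hat w_{k,\lambda_2}$ converge as GD on $(\alpha+\lambda_i)$-strongly-convex $(\beta+\lambda_i)$-smooth objectives, whose geometric rates, under the stated conditions, are all dominated by $C=\max\{1-\gamma(\alpha+\lambda_1),\,1-\gamma(\alpha+\lambda_2),\,\gamma/\eta\}\in(0,1)$ (one checks the remaining candidate rates, e.g.\ $|1-\gamma(\beta+\lambda_i)|$, are bounded by these three). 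Letting $k\to\infty$ in Part 1 gives $\hat w_{\infty,\lambda_1}-\Ocal(C^k)\le\tilde w_k\le\hat w_{\infty,\lambda_2}+\Ocal(C^k)$ entrywise, and since $m,d$ are the midpoint and half-width of $[\hat w_{\infty,\lambda_1},\hat w_{\infty,\lambda_2}]$ this forces $|\tilde w_k-m|\le|d|+\Ocal(C^k)$ entrywise, hence $\norm{\tilde w_k-m}_2\le\norm{d}_2+\Ocal(C^k)$.

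The main obstacle is step (iii), i.e.\ turning the spectral sandwich $\alpha I\preceq H(\cdot)\preceq\beta I$ into entrywise inequalities between the true regularized GD iterates and the constant-coefficient linear recursions valid for \emph{every} $k$. This is transparent in the scalar/separable setting but delicate in general, because the $\lambda_1$-GD map need not be a contraction in the directions where $H$ is near $\beta$ (so naive monotonicity of the update fails); the error-recursion argument and the precise step-size window $1/(2\beta-\alpha)<\eta<1/\beta$, $0<\gamma<\eta/(\eta(\beta-\alpha)+1)$ are engineered exactly to keep all the relevant comparisons and the iterate $\hat w_{\cdot,\lambda_1}$ under control. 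The rest — verifying $\lambda_2>0$, collapsing the contraction rates into $C$, and the geometric tail estimates — is routine.
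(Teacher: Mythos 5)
Your architecture matches the paper's: the same affine comparison iterations $u_k,v_k$ with coefficients $\alpha,\beta$, the same observation that with $P_k=1-(\gamma/\eta)^{k+1}$ the averaged $\tilde u_k$ (resp.\ $\tilde v_k$) is tied to the affine recursion with coefficient $\beta+\lambda_2=\alpha+\lambda_0$ (resp.\ $\alpha+\lambda_1=\beta+\lambda_0$) via Lemma~\ref{thm:sufficient-condition}, the same chaining to get Part~1, and essentially the same limit/rate bookkeeping for Part~2. The parameter identities and the collapse of the rates into $C$ are all correct.

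The genuine gap is exactly in your steps (i) and (iii), and your proposed fixes do not close it. In $\Rbb^d$ with $d\ge 2$, the spectral sandwich $\alpha I\preceq H(w)\preceq\beta I$ gives no entrywise information: a positive semidefinite matrix applied to an entrywise nonnegative vector can have negative entries (take $H-\alpha I=\bigl(\begin{smallmatrix}1&-1\\-1&1\end{smallmatrix}\bigr)$ and $\hat w=(1,2)^\top$, giving $(-1,1)^\top$), so the claimed nonnegativity of your error recursion $e_{k+1}=(1-\gamma(\beta+\lambda_0))e_k+\gamma(H(\hat w_{k,\lambda_1})-\alpha I)\hat w_{k,\lambda_1}$ fails. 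Likewise, $I-\eta H\succeq 0$ does not make $x\mapsto(I-\eta H)x$ monotone for the entrywise partial order — that would require $I-\eta H$ to be entrywise nonnegative, which is a completely different condition from positive semidefiniteness — so the induction giving $v_k\le w_k\le u_k$ does not go through. You correctly flag this as ``the main obstacle,'' but the proof as written asserts rather than resolves it. The paper avoids the issue by reducing to scalar dynamics: it projects onto a one-dimensional direction, invokes Lemma~\ref{thm:1dim-alpha-beta} to get the scalar gradient bounds $\alpha v-b\le\grad h(v)\le\beta v-b$ on the interval $(0,w_*^{(1)})$, and then all comparisons ($v_k\le w_k\le u_k$, $\hat v_{k,\lambda}\le\hat w_{k,\lambda}\le\hat u_{k,\lambda}$) are inequalities between real numbers handled by the discrete Gronwall inequality. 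To repair your proof you would need to run the entire argument coordinate-by-coordinate on these scalar recursions (which is also where the entrywise hypothesis $w_*>w_0=0$ actually enters), rather than attempting a matrix-level comparison from the Loewner ordering.
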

The proof is left in Supplementary Materials, Section~\ref{sec:proof-sgd-general} .

According to Theorem~\ref{thm:sgd-general}, for strongly convex and smooth objectives, the averaged GD path $\{\tilde{w}_k\}_{k=0}^{\infty}$ lies in the area between two regularized GD paths, $\{\hat{w}_{k,\lambda_1}\}_{k=0}^{\infty}$ and $\{\hat{w}_{k,\lambda_2}\}_{k=0}^{\infty}$.
Furthermore, $\tilde{w}_k$ converges to a hyper cube whose diagonal vertices are defined by $\hat{w}_{\infty,\lambda_1}$ and $\hat{w}_{\infty,\lambda_2}$.
In this way for this class of loss functions, averaging the GD path has an ``approximate'' $\ell_2$-regularization effect 
that is in between two $\ell_2$-regularizers with hyperparameters as $\lambda_1$ and $\lambda_2$ respectively.
In addition, $\lambda_1$ and $\lambda_2$ can be adjusted through changing the weighting scheme.
Finally, we note that $\norm{d}_2=\bigO{\beta-\alpha}$, thus when the objective is quadratic, we have $\alpha = \beta$ and $d = 0$ and thus the ``approximate'' $\ell_2$-regularization effect becomes the exact $\ell_2$-regularization by Theorem~\ref{thm:sgd-general}.

We therefore conjecture that, generally, \emph{for arbitrary loss functions and iterative optimizers, an iterate averaging scheme admits a specific yet unknown regularization effect.}
Indeed, our experiments in the next section empirically verifies such an effect by performing iterate averaging on deep neural networks, which are highly comprehensive.

\section{Experiments}
In this section we present our empirical studies.
The detailed setups are explained in Supplementary Materials, Section~\ref{sec:exp-setup}.
The code is available at \url{https://github.com/uuujf/IterAvg}.

\subsection{Two dimensional demonstration}
We first introduce a two dimensional toy example to demonstrate the regularization effect of iterate averaging.
The vanilla loss function is quadratic with a unique minimum at $(1,1)$, as shown in Figure~\ref{fig:2dim-gd}$\sim$\ref{fig:2dim-ngd}.
For the purpose of demonstration we only run deterministic algorithms with constant learning rates.
We plot the trajectories of the concerned optimizers for learning the vanilla loss function and the regularized loss function, as well as the averaged solutions.
All of the optimizers start iterations from zero.

In Figure~\ref{fig:2dim-gd}, the green and the blue dots represent the GD paths for optimizing the  vanilla/regularized loss functions respectively, while the red dots are the path of iterate averaged solutions.
We observe that the red dots do converge to the blue ones, indicating the averaged solution has the same effect of an $\ell_2$-regularizer, as suggested by Theorem~\ref{thm:sgd-linear}.
Similarly the phenomenon holds for averaging the NGD path, as indicated in Figure~\ref{fig:2dim-ngd}.
In Figure~\ref{fig:2dim-pgd}, the preconditioning matrix is set to be the Hessian.
And as predicted by Theorem~\ref{thm:psgd-linear}, the averaged solution converges to the solution biased by a generalized $\ell_2$-regularizer.

\subsection{Real data verification}\label{sec:exp-mnist}
We then present experiments on the MNIST dataset.

\begin{figure*}
\centering
\begin{tabular}{ccc}
\includegraphics[width=0.3\linewidth]{./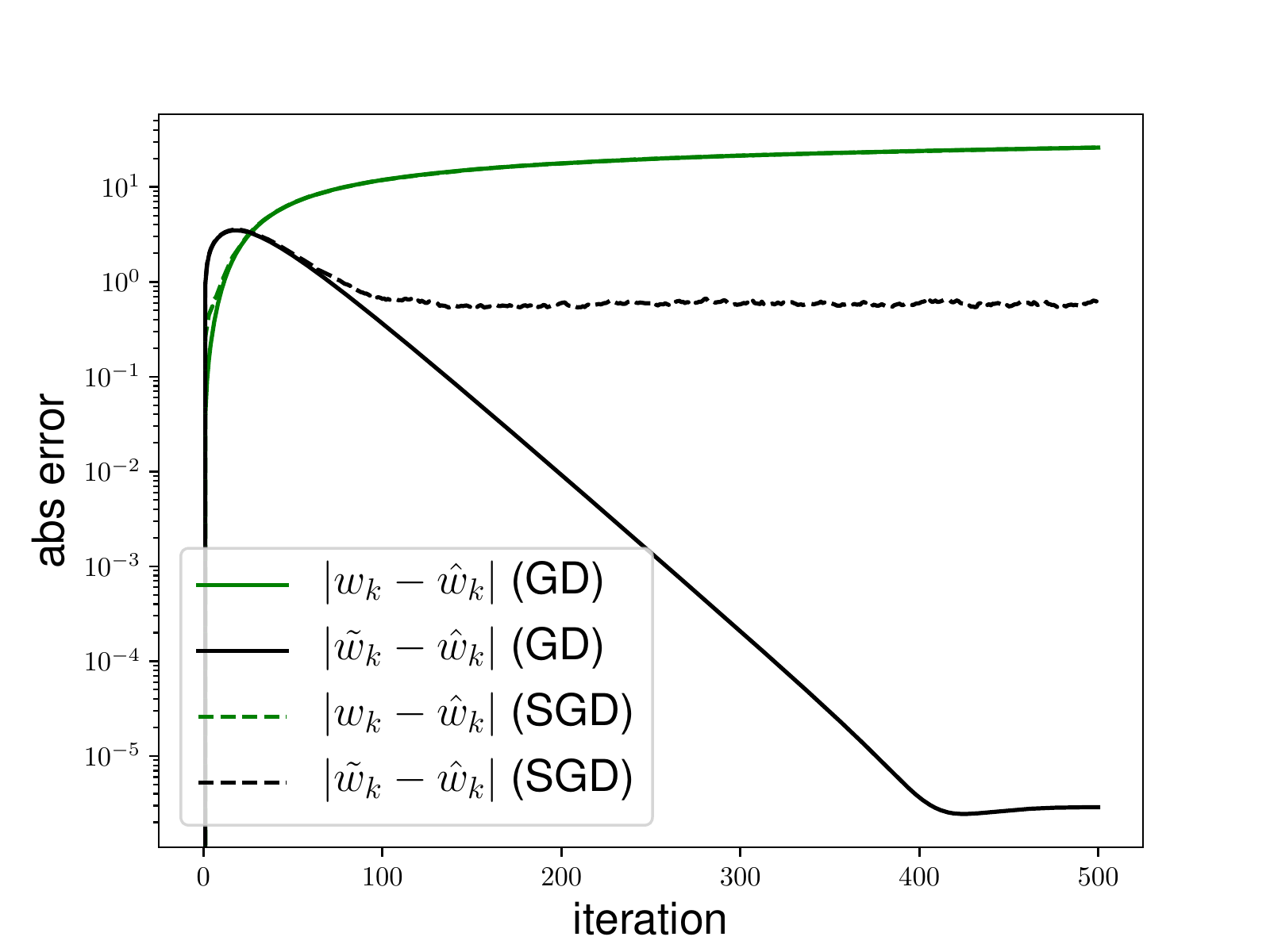} & 
\includegraphics[width=0.3\linewidth]{./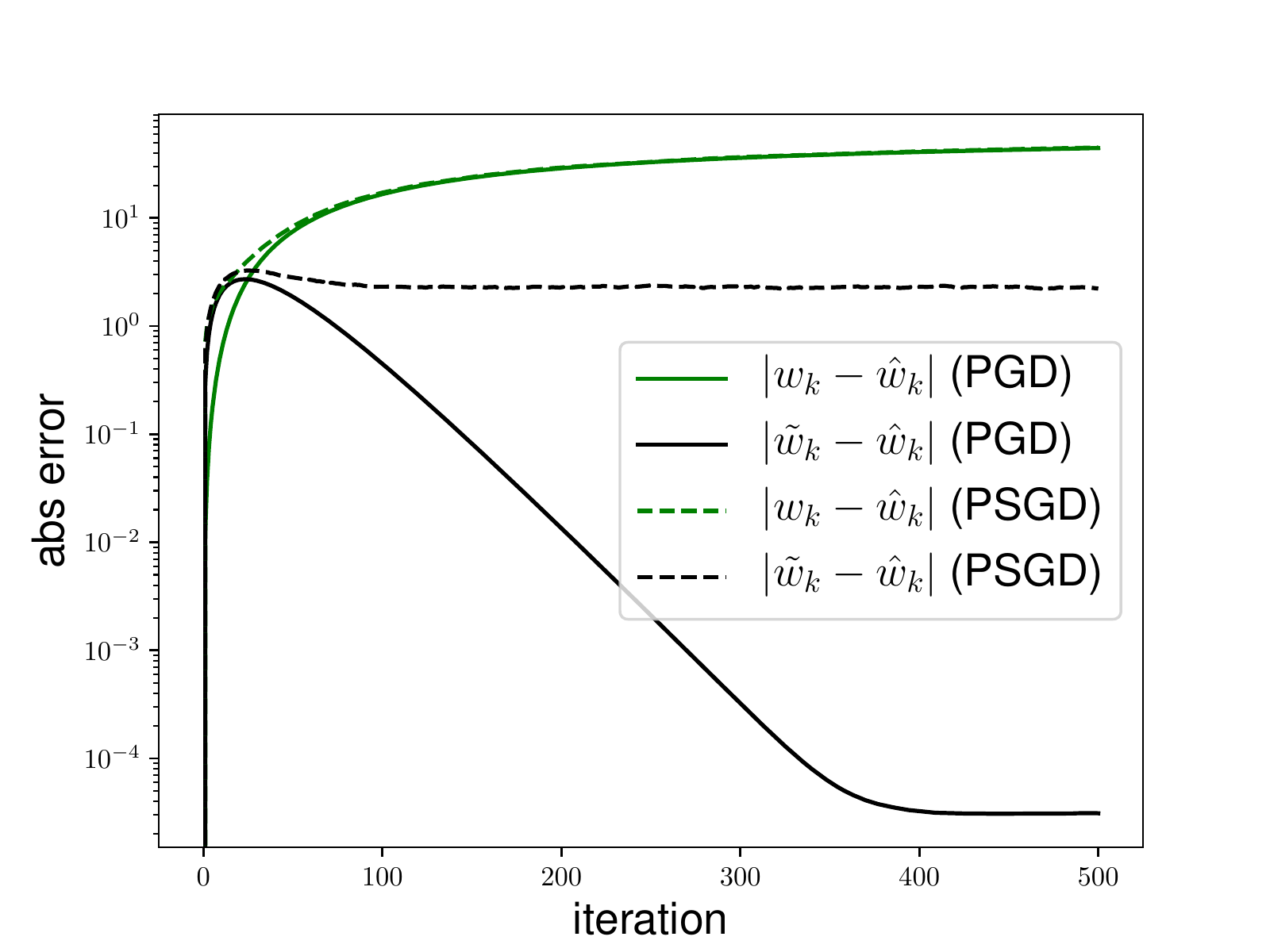} &
\includegraphics[width=0.3\linewidth]{./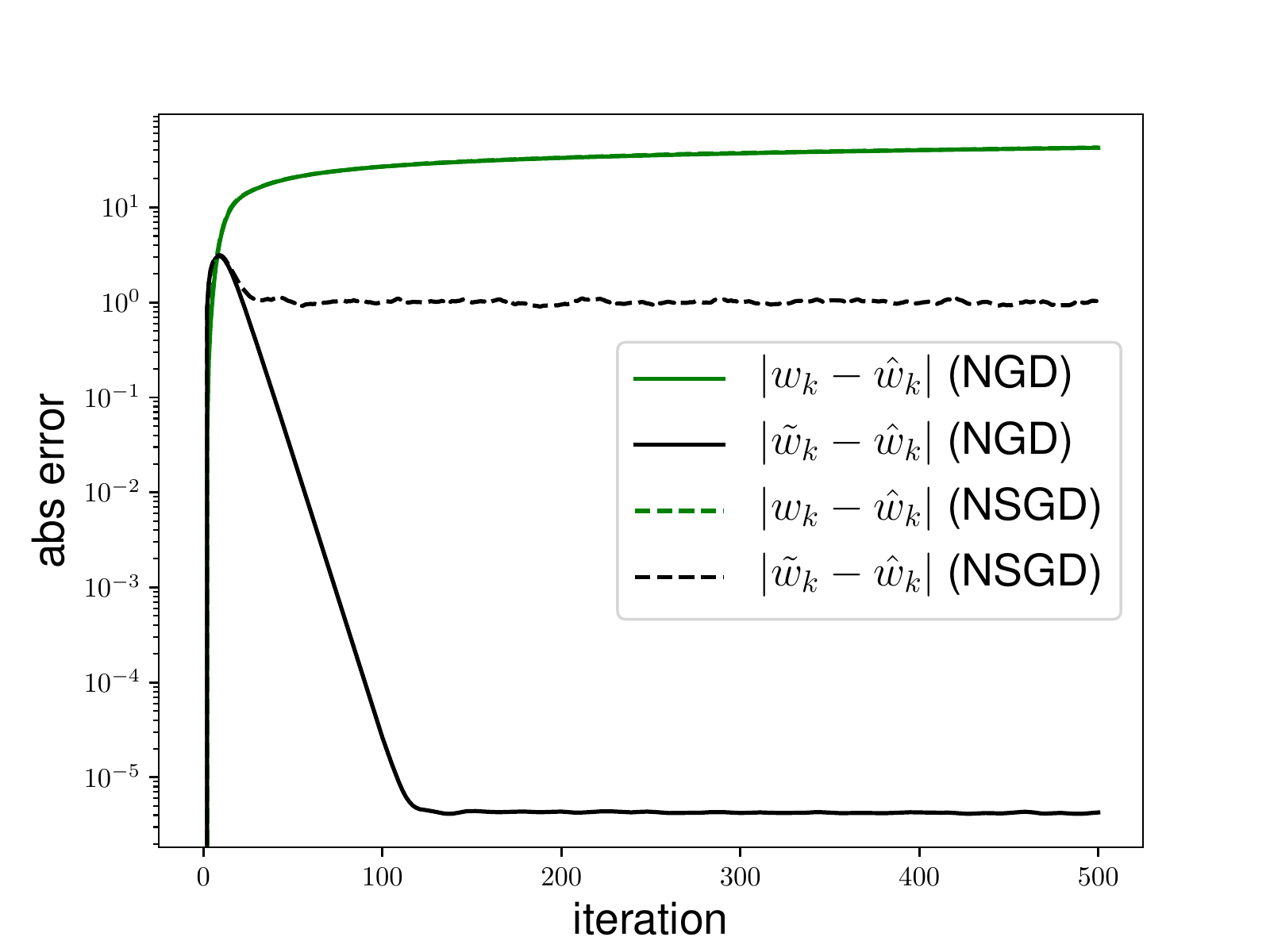} \\
(a) GD and SGD & (b) PGD and PSGD & (c) NGD and PNGD
\end{tabular}
\caption{\small
Linear regression on MNIST dataset.
X-axis: iteration; y-axis: the absolute approximation error in logarithmic scale.
Green lines represent $\norm{{w}_t - \hat{w}_t}_1$ and black lines represent $\norm{\tilde{w}_t - \hat{w}_t}_1$, where $w_t$, $\hat{w}_t$ and $\tilde{w}_t$ are the unregularized path, the regularized path and the iterate averaged path, respectively.
Solid lines and dashed lines are the results obtained by running deterministic and stochastic algorithms respectively.
For deterministic algorithms, the error between $\tilde{w}_t$ and $\hat{w}_t$ converges to zero.
For stochastic algorithms, the error between $\tilde{w}_t$ and $\hat{w}_t$ remains small.
}
\label{fig:mnist-linear-regression}
\end{figure*}

\paragraph{Linear regression}
Firstly, we study linear regression under quadratic loss functions and the regularization effects caused by averaging the optimization paths of (S)GD, P(S)GD and N(S)GD.
The learning rates are set to be constant. 
For P(S)GD, we set the preconditioning matrix as the Hessian, which is known as the Newton's method.

Our theories predict that averaging the (S)GD and N(S)GD paths leads to the solutions biased by $\ell_2$-regularizers (Theorem~\ref{thm:sgd-linear},~\ref{thm:nsgd-linear}), while averaging the P(S)GD path introduces an effect of the generalized $\ell_2$-regularization (Theorem~\ref{thm:psgd-linear}).
To verify the predictions, 
we generate the paths of the averaged solutions $\tilde{w}_k$ and the regularized solutions $\hat{w}_k$, and then compute the approximation errors between them.
The results are plotted in Figure~\ref{fig:mnist-linear-regression}.

In Figure~\ref{fig:mnist-linear-regression} (a), the solid lines clearly indicate that the averaged solution converges to the regularized solution when running GD, which also corresponds to the convergence in expectation in SGD cases, as predicted by Theorem~\ref{thm:sgd-linear}.
For SGD, however, the dashed lines in Figure~\ref{fig:mnist-linear-regression} (a) show that there is a small error between the averaged solution and the regularized solution.
The error exists since the convergence of the averaged solution does not hold in probability.
Luckily, the error would not grow large as the deviation of the averaged solution is controllable by Theorem~\ref{thm:sgd-linear}.
Hence by comparing the dashed green and black lines, we see that averaging the SGD path still leads to an effect of $\ell_2$-regularization ignoring a tolerable error.

Figure~\ref{fig:mnist-linear-regression} (b) shows the results for PGD and PSGD.
Again, averaging the PGD path causes a perfect generalized $\ell_2$-regularization effect, and there is a small gap for averaging the optimization path with noise.
These support Theorem~\ref{thm:psgd-linear}.

The results related to NGD and NSGD are shown in Figure~\ref{fig:mnist-linear-regression} (c).
Again, for the deterministic algorithm, the solid lines manifest the convergence between the averaged solution and the regularized solution, verifying our Theorem~\ref{thm:nsgd-linear}.
And the dashed lines once more suggest the stochastic algorithm causes a tolerable approximation error.

\begin{figure*}
\centering
\begin{tabular}{ccc}
\includegraphics[width=0.3\linewidth]{./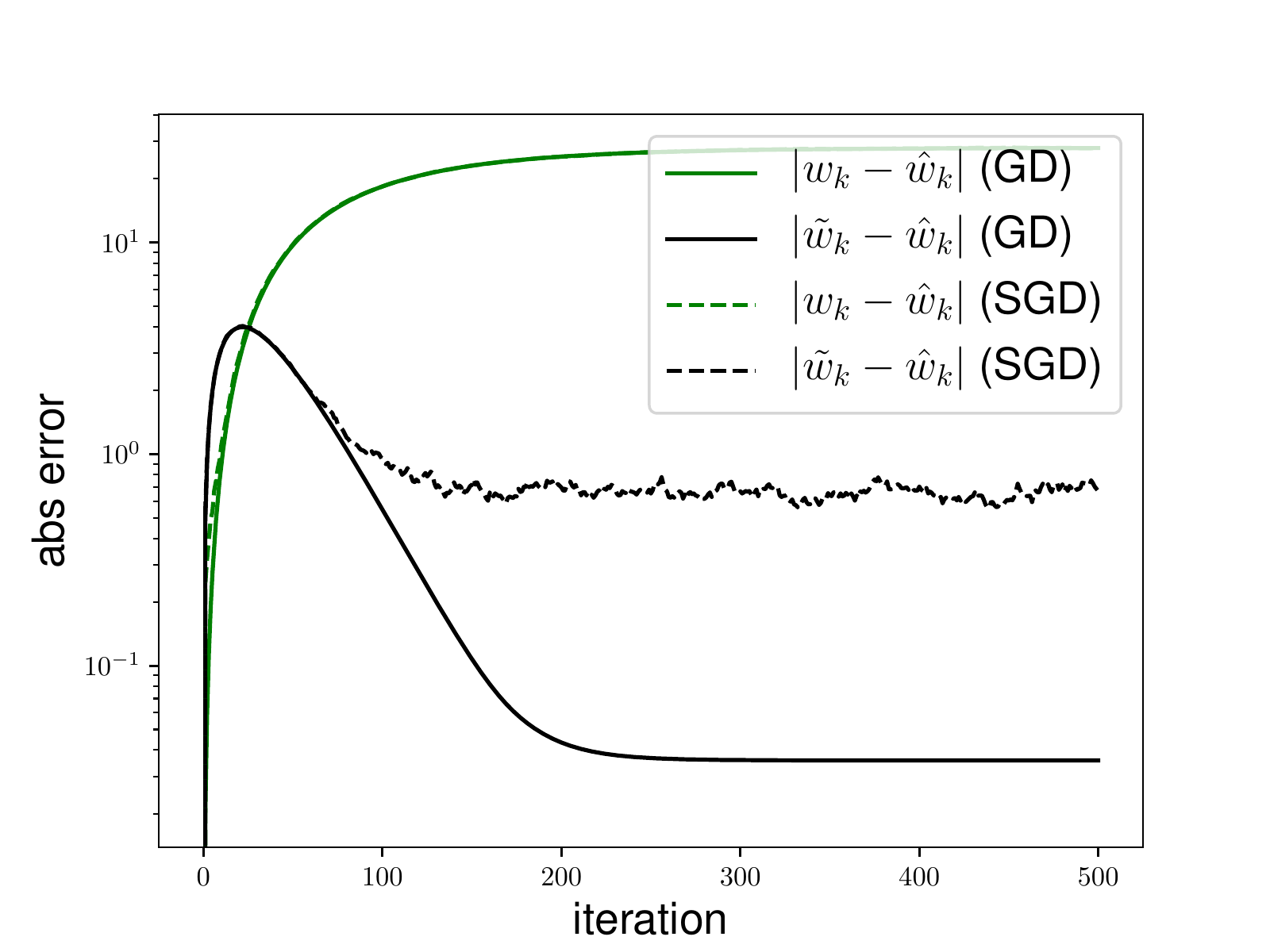} & 
\includegraphics[width=0.3\linewidth]{./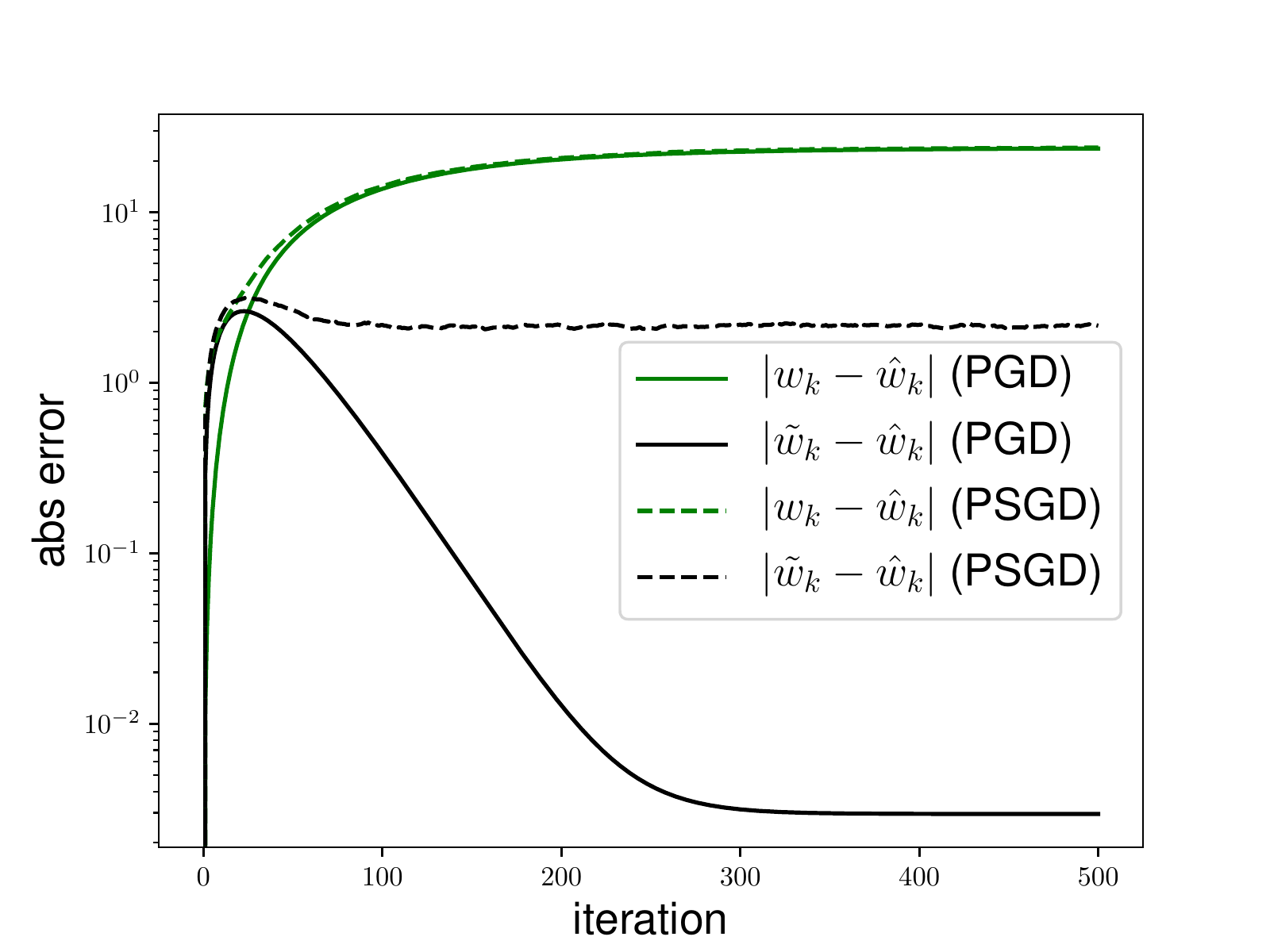} &
\includegraphics[width=0.3\linewidth]{./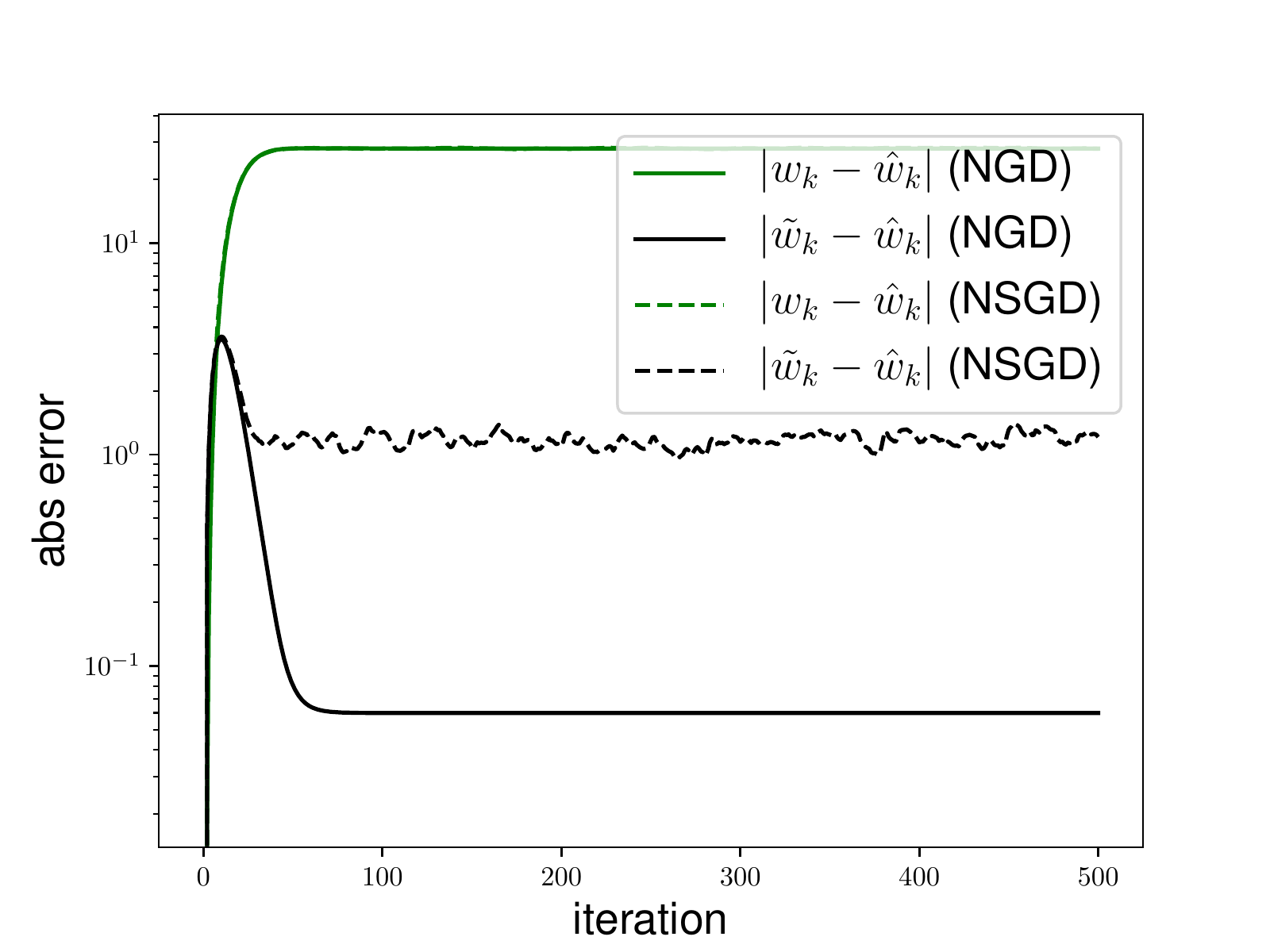} \\
(a) GD and SGD & (b) PGD and PSGD & (c) NGD and NSGD
\end{tabular}
\caption{\small
Logistic regression on MNIST dataset.
X-axis: iteration; y-axis: the absolute approximation error in logarithmic scale.
Green lines represent $\norm{{w}_t - \hat{w}_t}_1$ and black lines represent $\norm{\tilde{w}_t - \hat{w}_t}_1$, where $w_t$, $\hat{w}_t$ and $\tilde{w}_t$ are the unregularized path, the regularized path and the iterate averaged path, respectively.
Solid lines and dashed lines are the results obtained by running deterministic and stochastic algorithms respectively.
For both deterministic and stochastic algorithms, we see that the error between $\tilde{w}_t$ and $\hat{w}_t$ has a small upper bound.
Moreover, the error bounds for the paths generated by stochastic algorithms are relatively bigger.
}
\label{fig:mnist-logistic-regression}
\end{figure*}

\paragraph{Logistic regression}
Next we set the loss function $L(w)$ to be the logistic regression objective with a small $\ell_2$-regularizer, which is then strongly convex and smooth, as required by Theorem~\ref{thm:sgd-general}.
We firstly generate the unregularized paths and perform iterate averaging over them.
Next, since it is impossible to visualize a high dimensional cubic with vertices decided by Theorem~\ref{thm:sgd-general},
instead we set $\lambda={1}/{\gamma}- {1}/{\eta}$, and add an extra regularization term with this particular hyperparameter to obtain the regularized paths.
Lastly we measure the errors between the averaged solutions and the regularized solutions to verify the effect of iterate averaging applied on strongly convex and smooth loss functions.
The learning rates are set to be constant.
The approximation errors are plotted in Figure~\ref{fig:mnist-logistic-regression}.

In Figure~\ref{fig:mnist-logistic-regression} (a),
the solid black line measures the error between the averaged GD path and the regularized GD path,
and indeed the error is bounded and small as predicted by Theorem~\ref{thm:sgd-general};
the dashed black line is the result obtained by running SGD, which suggests that the approximation error, though increases a little due to randomness, is still small.

For completeness, we also test P(S)GD and N(S)GD with results shown in Figure~\ref{fig:mnist-logistic-regression} (b) and (c).
For P(S)GD, we use the Hessian in linear regression experiments as the preconditioning matrix (since the Hessian of the logistic loss varies during training).
Figure~\ref{fig:mnist-logistic-regression} (b) and (c) show that the averaged solutions approximately achieve the generalized/vanilla $\ell_2$-regularization effects respectively. 
And for the stochastic optimization paths, the approximation errors between the averaged paths and the regularized paths increase by a small amount due to the randomness of the algorithms.

\subsection{Application in deep neural networks}\label{sec:exp-dnn}

Lastly, we study the benefits of using iterate averaging in modern deep neural networks.

We train VGG-16~\cite{simonyan2014very} and ResNet-18~\cite{He_2016} on CIFAR-10 and CIFAR-100 datasets, with standard tricks including batch normalization, data augmentation, learning rate decay and weight decay. 
All experiments are repeated three times to obtain the mean and deviation.
The running times are measured by performing the experiments using a single GPU K80.
The models are trained for $300$ epochs using SGD.
We perform ``epoch averaging'' using the $240$ checkpoints saved from the $61$st to the $300$th epoch.
The first $60$ epochs are skipped since the models in the early phase are extremely unstable.
After averaging the parameters, we apply a trick proposed by~\citet{izmailov2018averaging} to handle the batch normalization statistics which are not trained by SGD.
Specifically, we make a forward pass on the training data to compute the activation statistics for the batch normalization layers.
For the choice of averaging scheme, we  test standard geometric distribution with success probability $p\in \{0.9999, 0.999, 0.99, 0.9\}$.

\begin{table}
\centering
\caption{\small CIFAR-10 and CIFAR-100 experiments}
\label{tab:cifar}
\begin{tabular}{c|c|c|c}
\hline
Dataset & \multicolumn{2}{c|}{CIFAR-10} & CIFAR-100\\
\hline
Model & {\small VGG-16} & {\small ResNet-18} & {\small ResNet-18}\\
\hhline{====}
\makecell{\small Accuracy after \\training ($\%$)} & \makecell{$92.54$\\$\pm 0.22$} & \makecell{$94.54$\\$\pm 0.04$} & \makecell{$75.62$\\$\pm 0.16$} \\
\hline
\makecell{\small Accuracy after \\averaging ($\%$)} & \makecell{$\bm{93.18}$\\$\pm 0.06$} & \makecell{$\bm{94.72}$\\$\pm 0.04$} & \makecell{$\bm{76.24}$\\$\pm 0.05$}\\
\hhline{====}
{\small Time of training} & $\sim 4.5$h & $\sim 8.3$h & $\sim 8.3$h \\
\hline
{\small Time of averaging\footnotemark} & $\sim 47$s & $\sim 56$s & $\sim 58$s \\
\hline
\end{tabular}
\end{table}
\footnotetext{The {time of averaging} contains the {time of IO} and {fixing BN}, which takes the major overhead. For example, in CIFAR-10 and VGG-16 experiments, IO takes $\sim 22$s, fixing BN takes $\sim 18$s, while performing averaging and evaluation take merely $\sim 7$s.}

The results are shown in Table~\ref{tab:cifar}.
We see that
(i) averaging the SGD path does improve performance since it introduce an implicit regularization by our understanding;
(ii) obtaining such regularization by iterate averaging is computationally cheap. 
It only takes \emph{a few seconds} to test a hyperparameter of the averaging scheme.
In contrast, several hours are required to test a hyperparameter for traditional explicit regularization since it requires re-training the model.
Finally, we emphasize that the space cost of our method is also affordable.
In fact, in our experiments, we perform epoch-wise averaging instead of iterate-wise averaging, 
thus we only need to store  a few hundreds of the checkpoints.

\section{Discussion}

\paragraph{$\ell_1$-regularization}
Notice that all our results obtain $\ell_2$-type regularization effects.
A natural follow-up question would be whether or not there is an averaging scheme that acts as an $\ell_1$-regularizer.
However, we here provide some evidence that this question is relatively hard.
As illustrated in Figure~\ref{fig:l1-regupath}, even for simple quadratic loss, the $\ell_1$-regularized solutions could lie outside of the convex hull of a SGD path.
Therefore, any averaging scheme with positive weights fails to obtain such $\ell_1$-regularized solutions.

\begin{figure}
\centering
\includegraphics[width=0.85\linewidth]{./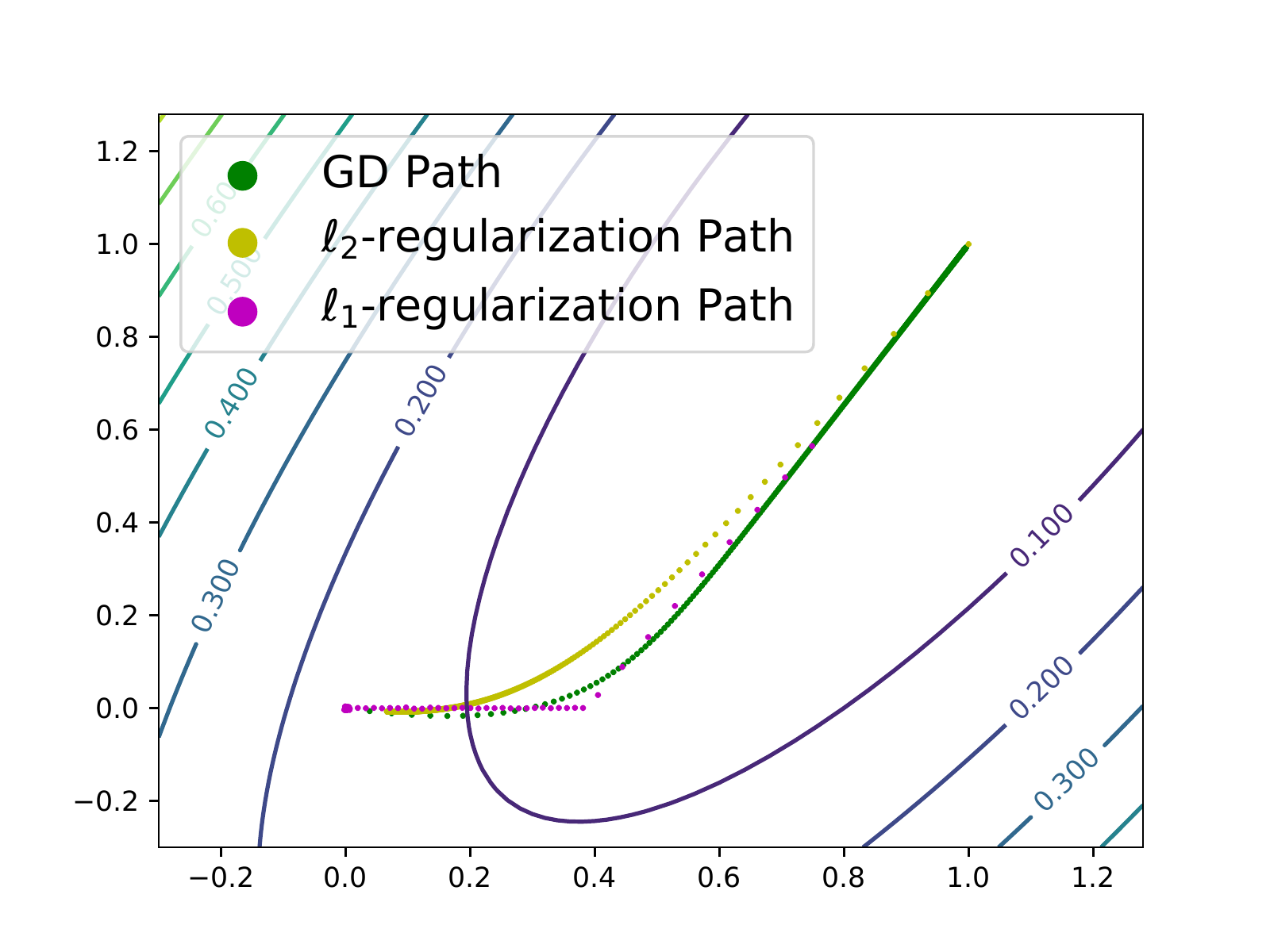}
\caption{\small 
A 2-D demonstration of the $\ell_1$-regularization path.
{Green dots}: the vanilla GD path $w_t$;
{yellow dots}: the $\ell_2$-regularization path $\hat{w}_{{\lambda}, \ell_2}$;
{purple dots}: the $\ell_1$-regularization path $\hat{w}_{{\lambda}, \ell_1}$.
There exist $\ell_1$-regularized solutions outside of the convex hull of the GD path, while all of the $\ell_2$-regularized solutions are inside of that.
}
\label{fig:l1-regupath}
\end{figure}

\paragraph{Infinite width neural network}
Recent works suggest that a sufficient wide neural network trained by SGD behaves like a quadratic model, i.e., the neural tangent kernel (NTK)~\cite{jacot2018neural,arora2019exact,cao2019generalization}.
Nonetheless, the NTK approximation fails when there is an explicit $\ell_2$-regularizer~\cite{wei2019regularization}.
Since our results hold for kernel ridge regression, we conjecture that iterate averaging could be a potential approach to achieve $\ell_2$-regularization for the NTK regime.
We leave further investigation of this issue in future works.

\section{Conclusions}
In this work, we establish averaging schemes for various optimization methods and objective functions to obtain adjustable $\ell_2$-type regularization effects, i.e., SGD with preconditioning and adaptive learning rate schedules, Nesterov's accelerated stochastic gradient descent, and  strongly convex and smooth objective functions.
Particularly, we resolve an open question in~\cite{neu2018iterate}.
The method of achieving regularization by iterate averaging requires little computation.
It is further shown experimentally that iterate averaging even benefits practical deep learning models.
Our theoretical and empirical results demonstrate the potential of adopting iterate averaging to obtain adjustable regularization for free in a much broader class of optimization methods and objective functions.

\section*{Acknowledgement}
This research is supported in part by NSF CAREER grant 1652257, ONR Award
N00014-18-1-2364 and the Lifelong Learning Machines program from DARPA/MTO.

\bibliography{references}

\begin{thebibliography}{43}
\providecommand{\natexlab}[1]{#1}
\providecommand{\url}[1]{\texttt{#1}}
\expandafter\ifx\csname urlstyle\endcsname\relax
  \providecommand{\doi}[1]{doi: #1}\else
  \providecommand{\doi}{doi: \begingroup \urlstyle{rm}\Url}\fi

\bibitem[Arora et~al.(2019)Arora, Du, Hu, Li, Salakhutdinov, and
  Wang]{arora2019exact}
Arora, S., Du, S.~S., Hu, W., Li, Z., Salakhutdinov, R., and Wang, R.
\newblock On exact computation with an infinitely wide neural net.
\newblock \emph{arXiv preprint arXiv:1904.11955}, 2019.

\bibitem[Bach \& Moulines(2013)Bach and Moulines]{bach2013non}
Bach, F. and Moulines, E.
\newblock Non-strongly-convex smooth stochastic approximation with convergence
  rate o (1/n).
\newblock In \emph{Advances in neural information processing systems}, pp.\
  773--781, 2013.

\bibitem[Beck \& Teboulle(2009)Beck and Teboulle]{beck2009fast}
Beck, A. and Teboulle, M.
\newblock A fast iterative shrinkage-thresholding algorithm for linear inverse
  problems.
\newblock \emph{SIAM journal on imaging sciences}, 2\penalty0 (1):\penalty0
  183--202, 2009.

\bibitem[Bottou \& Bousquet(2008)Bottou and Bousquet]{bottou2008tradeoffs}
Bottou, L. and Bousquet, O.
\newblock The tradeoffs of large scale learning.
\newblock In \emph{Advances in neural information processing systems}, pp.\
  161--168, 2008.

\bibitem[Cai et~al.(2018)Cai, Li, and Shen]{cai2018quantitative}
Cai, Y., Li, Q., and Shen, Z.
\newblock A quantitative analysis of the effect of batch normalization on
  gradient descent.
\newblock \emph{arXiv preprint arXiv:1810.00122}, 2018.

\bibitem[Cao \& Gu(2019)Cao and Gu]{cao2019generalization}
Cao, Y. and Gu, Q.
\newblock Generalization bounds of stochastic gradient descent for wide and
  deep neural networks.
\newblock \emph{arXiv preprint arXiv:1905.13210}, 2019.

\bibitem[Clark(1987)]{clark1987short}
Clark, D.~S.
\newblock Short proof of a discrete gronwall inequality.
\newblock \emph{Discrete applied mathematics}, 16\penalty0 (3):\penalty0
  279--281, 1987.

\bibitem[Dennis~Jr \& Schnabel(1996)Dennis~Jr and
  Schnabel]{dennis1996numerical}
Dennis~Jr, J.~E. and Schnabel, R.~B.
\newblock \emph{Numerical methods for unconstrained optimization and nonlinear
  equations}, volume~16.
\newblock Siam, 1996.

\bibitem[Devlin et~al.(2018)Devlin, Chang, Lee, and Toutanova]{devlin2018bert}
Devlin, J., Chang, M.-W., Lee, K., and Toutanova, K.
\newblock Bert: Pre-training of deep bidirectional transformers for language
  understanding.
\newblock \emph{arXiv preprint arXiv:1810.04805}, 2018.

\bibitem[Grandvalet \& Bengio(2005)Grandvalet and Bengio]{grandvalet2005semi}
Grandvalet, Y. and Bengio, Y.
\newblock Semi-supervised learning by entropy minimization.
\newblock In \emph{Advances in neural information processing systems}, pp.\
  529--536, 2005.

\bibitem[Granziol et~al.(2020)Granziol, Wan, and Roberts]{granziol2020iterate}
Granziol, D., Wan, X., and Roberts, S.
\newblock Iterate averaging helps: An alternative perspective in deep learning.
\newblock \emph{arXiv preprint arXiv:2003.01247}, 2020.

\bibitem[Gunasekar et~al.(2018)Gunasekar, Lee, Soudry, and
  Srebro]{gunasekar2018characterizing}
Gunasekar, S., Lee, J., Soudry, D., and Srebro, N.
\newblock Characterizing implicit bias in terms of optimization geometry.
\newblock \emph{arXiv preprint arXiv:1802.08246}, 2018.

\bibitem[He et~al.(2015)He, Zhang, Ren, and Sun]{he2015delving}
He, K., Zhang, X., Ren, S., and Sun, J.
\newblock Delving deep into rectifiers: Surpassing human-level performance on
  imagenet classification.
\newblock In \emph{Proceedings of the IEEE international conference on computer
  vision}, pp.\  1026--1034, 2015.

\bibitem[He et~al.(2016)He, Zhang, Ren, and Sun]{He_2016}
He, K., Zhang, X., Ren, S., and Sun, J.
\newblock Deep residual learning for image recognition.
\newblock \emph{2016 IEEE Conference on Computer Vision and Pattern Recognition
  (CVPR)}, Jun 2016.
\newblock \doi{10.1109/cvpr.2016.90}.
\newblock URL \url{http://dx.doi.org/10.1109/CVPR.2016.90}.

\bibitem[Hu et~al.(2017{\natexlab{a}})Hu, Li, Li, and Liu]{hu2017diffusion}
Hu, W., Li, C.~J., Li, L., and Liu, J.-G.
\newblock On the diffusion approximation of nonconvex stochastic gradient
  descent.
\newblock \emph{arXiv preprint arXiv:1705.07562}, 2017{\natexlab{a}}.

\bibitem[Hu et~al.(2017{\natexlab{b}})Hu, Li, and Su]{hu2017global}
Hu, W., Li, C.~J., and Su, W.
\newblock On the global convergence of a randomly perturbed dissipative
  nonlinear oscillator.
\newblock \emph{arXiv preprint arXiv:1712.05733}, 2017{\natexlab{b}}.

\bibitem[Hu et~al.(2020)Hu, Xiao, and Pennington]{hu2020provable}
Hu, W., Xiao, L., and Pennington, J.
\newblock Provable benefit of orthogonal initialization in optimizing deep
  linear networks.
\newblock \emph{arXiv preprint arXiv:2001.05992}, 2020.

\bibitem[Ioffe \& Szegedy(2015)Ioffe and Szegedy]{ioffe2015batch}
Ioffe, S. and Szegedy, C.
\newblock Batch normalization: Accelerating deep network training by reducing
  internal covariate shift.
\newblock \emph{arXiv preprint arXiv:1502.03167}, 2015.

\bibitem[Izmailov et~al.(2018)Izmailov, Podoprikhin, Garipov, Vetrov, and
  Wilson]{izmailov2018averaging}
Izmailov, P., Podoprikhin, D., Garipov, T., Vetrov, D., and Wilson, A.~G.
\newblock Averaging weights leads to wider optima and better generalization.
\newblock \emph{arXiv preprint arXiv:1803.05407}, 2018.

\bibitem[Jacot et~al.(2018)Jacot, Gabriel, and Hongler]{jacot2018neural}
Jacot, A., Gabriel, F., and Hongler, C.
\newblock Neural tangent kernel: Convergence and generalization in neural
  networks.
\newblock In \emph{Advances in neural information processing systems}, pp.\
  8571--8580, 2018.

\bibitem[Jain et~al.(2018)Jain, Kakade, Kidambi, Netrapalli, and
  Sidford]{jain2018parallelizing}
Jain, P., Kakade, S., Kidambi, R., Netrapalli, P., and Sidford, A.
\newblock Parallelizing stochastic gradient descent for least squares
  regression: mini-batching, averaging, and model misspecification.
\newblock \emph{Journal of Machine Learning Research}, 18, 2018.

\bibitem[Krogh \& Hertz(1992)Krogh and Hertz]{krogh1992simple}
Krogh, A. and Hertz, J.~A.
\newblock A simple weight decay can improve generalization.
\newblock In \emph{Advances in neural information processing systems}, pp.\
  950--957, 1992.

\bibitem[Lakshminarayanan \& Szepesvari(2018)Lakshminarayanan and
  Szepesvari]{lakshminarayanan2018linear}
Lakshminarayanan, C. and Szepesvari, C.
\newblock Linear stochastic approximation: How far does constant step-size and
  iterate averaging go?
\newblock In \emph{International Conference on Artificial Intelligence and
  Statistics}, pp.\  1347--1355, 2018.

\bibitem[Li et~al.(2017)Li, Tai, et~al.]{li2017stochastic}
Li, Q., Tai, C., et~al.
\newblock Stochastic modified equations and adaptive stochastic gradient
  algorithms.
\newblock In \emph{Proceedings of the 34th International Conference on Machine
  Learning-Volume 70}, pp.\  2101--2110. JMLR. org, 2017.

\bibitem[Martens(2014)]{martens2014new}
Martens, J.
\newblock New insights and perspectives on the natural gradient method.
\newblock \emph{arXiv preprint arXiv:1412.1193}, 2014.

\bibitem[Mohri et~al.(2018)Mohri, Rostamizadeh, and
  Talwalkar]{mohri2018foundations}
Mohri, M., Rostamizadeh, A., and Talwalkar, A.
\newblock \emph{Foundations of machine learning}.
\newblock MIT press, 2018.

\bibitem[Nesterov(1983)]{nesterov1983method}
Nesterov, Y.~E.
\newblock A method for solving the convex programming problem with convergence
  rate o (1/k\^{} 2).
\newblock In \emph{Dokl. akad. nauk Sssr}, volume 269, pp.\  543--547, 1983.

\bibitem[Neu \& Rosasco(2018)Neu and Rosasco]{neu2018iterate}
Neu, G. and Rosasco, L.
\newblock Iterate averaging as regularization for stochastic gradient descent.
\newblock \emph{arXiv preprint arXiv:1802.08009}, 2018.

\bibitem[Shi et~al.(2019)Shi, Du, Su, and Jordan]{shi2019acceleration}
Shi, B., Du, S.~S., Su, W., and Jordan, M.~I.
\newblock Acceleration via symplectic discretization of high-resolution
  differential equations.
\newblock In \emph{Advances in Neural Information Processing Systems}, pp.\
  5745--5753, 2019.

\bibitem[Silver et~al.(2017)Silver, Schrittwieser, Simonyan, Antonoglou, Huang,
  Guez, Hubert, Baker, Lai, Bolton, et~al.]{silver2017mastering}
Silver, D., Schrittwieser, J., Simonyan, K., Antonoglou, I., Huang, A., Guez,
  A., Hubert, T., Baker, L., Lai, M., Bolton, A., et~al.
\newblock Mastering the game of go without human knowledge.
\newblock \emph{Nature}, 550\penalty0 (7676):\penalty0 354, 2017.

\bibitem[Simonyan \& Zisserman(2014)Simonyan and Zisserman]{simonyan2014very}
Simonyan, K. and Zisserman, A.
\newblock Very deep convolutional networks for large-scale image recognition.
\newblock \emph{arXiv preprint arXiv:1409.1556}, 2014.

\bibitem[Soudry et~al.(2018)Soudry, Hoffer, Nacson, Gunasekar, and
  Srebro]{soudry2018implicit}
Soudry, D., Hoffer, E., Nacson, M.~S., Gunasekar, S., and Srebro, N.
\newblock The implicit bias of gradient descent on separable data.
\newblock \emph{The Journal of Machine Learning Research}, 19\penalty0
  (1):\penalty0 2822--2878, 2018.

\bibitem[Su et~al.(2014)Su, Boyd, and Candes]{su2014differential}
Su, W., Boyd, S., and Candes, E.
\newblock A differential equation for modeling nesterov’s accelerated
  gradient method: Theory and insights.
\newblock In \emph{Advances in Neural Information Processing Systems}, pp.\
  2510--2518, 2014.

\bibitem[Suggala et~al.(2018)Suggala, Prasad, and
  Ravikumar]{suggala2018connecting}
Suggala, A., Prasad, A., and Ravikumar, P.~K.
\newblock Connecting optimization and regularization paths.
\newblock In \emph{Advances in Neural Information Processing Systems}, pp.\
  10608--10619, 2018.

\bibitem[Tibshirani(1996)]{tibshirani1996regression}
Tibshirani, R.
\newblock Regression shrinkage and selection via the lasso.
\newblock \emph{Journal of the Royal Statistical Society: Series B
  (Methodological)}, 58\penalty0 (1):\penalty0 267--288, 1996.

\bibitem[Tikhonov \& Arsenin(1977)Tikhonov and Arsenin]{tikhonov1977solutions}
Tikhonov, A.~N. and Arsenin, V.~Y.
\newblock \emph{Solutions of ill-posed problems}.
\newblock V. H. Winston \& Sons, Washington, D.C.: John Wiley \& Sons, New
  York, 1977.
\newblock Translated from the Russian, Preface by translation editor Fritz
  John, Scripta Series in Mathematics.

\bibitem[Wei et~al.(2019)Wei, Lee, Liu, and Ma]{wei2019regularization}
Wei, C., Lee, J.~D., Liu, Q., and Ma, T.
\newblock Regularization matters: Generalization and optimization of neural
  nets vs their induced kernel.
\newblock In \emph{Advances in Neural Information Processing Systems}, pp.\
  9709--9721, 2019.

\bibitem[Wilson et~al.(2017)Wilson, Roelofs, Stern, Srebro, and
  Recht]{wilson2017marginal}
Wilson, A.~C., Roelofs, R., Stern, M., Srebro, N., and Recht, B.
\newblock The marginal value of adaptive gradient methods in machine learning.
\newblock In \emph{Advances in Neural Information Processing Systems}, pp.\
  4148--4158, 2017.

\bibitem[Yang et~al.(2018)Yang, Arora, Zhao, et~al.]{yang2018physical}
Yang, L., Arora, R., Zhao, T., et~al.
\newblock The physical systems behind optimization algorithms.
\newblock In \emph{Advances in Neural Information Processing Systems}, pp.\
  4372--4381, 2018.

\bibitem[Zhang et~al.(2016)Zhang, Bengio, Hardt, Recht, and
  Vinyals]{zhang2016understanding}
Zhang, C., Bengio, S., Hardt, M., Recht, B., and Vinyals, O.
\newblock Understanding deep learning requires rethinking generalization.
\newblock \emph{arXiv preprint arXiv:1611.03530}, 2016.

\bibitem[Zhang et~al.(2019)Zhang, Lucas, Ba, and Hinton]{zhang2019lookahead}
Zhang, M., Lucas, J., Ba, J., and Hinton, G.~E.
\newblock Lookahead optimizer: k steps forward, 1 step back.
\newblock In \emph{Advances in Neural Information Processing Systems}, pp.\
  9593--9604, 2019.

\bibitem[Zhou(2018)]{zhou2018fenchel}
Zhou, X.
\newblock On the fenchel duality between strong convexity and lipschitz
  continuous gradient.
\newblock \emph{arXiv preprint arXiv:1803.06573}, 2018.

\bibitem[Zhu et~al.(2018)Zhu, Wu, Yu, Wu, and Ma]{zhu2018anisotropic}
Zhu, Z., Wu, J., Yu, B., Wu, L., and Ma, J.
\newblock The anisotropic noise in stochastic gradient descent: Its behavior of
  escaping from minima and regularization effects.
\newblock \emph{arXiv preprint arXiv:1803.00195}, 2018.

\end{thebibliography}
\bibliographystyle{icml2020}

\onecolumn
\appendix

\section{Continuous analysis}\label{sec:continuous-analysis}
To motivate our proofs for the theorems in main text, let us first elaborate the continuous cases.
Then we will extend our analysis to the discrete circumstances.
One can safely skip this part and go directly to Section~\ref{sec:discrete-analysis} for the missing proofs in main text, which is self-consistent.

\paragraph{Continuous optimization paths}
To ease notations and preliminaries, in this part we only discuss gradient descent (GD) and Nesterov's accelerated gradient descent (NGD), and their strong continuous approximation via ordinary differential equations (ODEs).
For SGD and NSGD, existing works show that there are weak continuous approximation by stochastic differential equations (SDEs)~\cite{hu2017diffusion,hu2017global,li2017stochastic}.
Our analysis can be extended to SDEs, but we believe it serves better to motivate our discrete proofs by focusing on ODEs.

We consider loss $L(w)$ and $\ell_2$-regularizer $R(w)=\half \norm{w}_2^2$.
Let the learning rate $\eta\to 0$, the path of $L(w)$ optimized by GD converges to the following ODE~\cite{yang2018physical}
\begin{equation*}
    \dif w_t = -\grad L(w_t)\dif t.
\end{equation*}
Similarly the continuous GD optimization path of regularized loss admits
\begin{equation*}
    \dif \hat{w}_t = -\bracket{\grad L(\hat{w}_t)+\lambda \hat{w}_t}\dif t.
\end{equation*}

As for NGD, \citet{su2014differential,yang2018physical} show if the loss is $\alpha$-strongly convex, then the NGD optimization path converges to
\begin{equation*}
    w''_t+2\sqrt{\alpha}w'_t+L'(w_t) = 0.
\end{equation*}
Since $\hat{L}(\hat{w}) = L(\hat{w})+\frac{\lambda}{2}\norm{\hat{w}}_2^2$ is $(\alpha+\lambda)$-strongly convex, the NGD path of the regularized loss satisfies
\begin{equation*}
    \hat{w}''_t+2\sqrt{\alpha+\lambda}\hat{w}'_t+L'(\hat{w}_t)+\lambda\hat{w}_t = 0.
\end{equation*}

\paragraph{Continuous weighting scheme}
We define the continuous weighting scheme as
\begin{equation*}
    p_t\ge 0,\quad t\ge 0, \quad P_t = \int_0^t p(s)\dif s,\quad \lim_{t\to \infty}P_t = 1.
\end{equation*}

\begin{lem}\label{thm:continuous-sufficient-condition}
Given two continuous dynamic $x_t,\ \hat{x}_t,\ t\ge 0$. Let $\tilde{x}_t = P_t^\inv \int_0^t p_s x_s\dif s$. Suppose $x_0=\hat{x}_0=0$. If the continuous weighting scheme $P_t$ satisfies
\begin{equation*}
    \dif \hat{x}_t = (1-P_t)\dif x_t, \quad t\ge 0,
\end{equation*}
then we have
\begin{equation*}
    P_t (x_t-\tilde{x}_t) = x_t - \hat{x}_t,\quad t\ge 0,
\end{equation*}
and
\begin{equation*}
    \hat{x}_t - \tilde{x}_t = (1-P_t)(x_t - \tilde{x}_t), \quad t\ge 0.
\end{equation*}
\end{lem}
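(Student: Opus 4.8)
The plan is to verify the two claimed identities directly from the defining ODE for the weighting scheme, treating $P_t$ as an integrating-factor-like object. First I would observe that the averaged dynamic satisfies $P_t \tilde{x}_t = \int_0^t p_s x_s\,\dif s$, so differentiating gives $\dif(P_t \tilde{x}_t) = p_t x_t\,\dif t$, i.e. $P_t\,\dif\tilde{x}_t + p_t\,\dif t\cdot \tilde{x}_t = p_t x_t\,\dif t$. Rearranging, $P_t\,\dif\tilde{x}_t = p_t(x_t-\tilde{x}_t)\,\dif t$. This is the basic relation linking the rate of change of $\tilde{x}_t$ to the gap $x_t - \tilde{x}_t$.

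Next I would consider the quantity $g_t := P_t(x_t - \tilde{x}_t)$ and show $g_t = x_t - \hat{x}_t$ by checking they agree at $t=0$ and have the same derivative. At $t=0$ we have $P_0 = 0$ and $x_0 = \hat{x}_0 = 0$, so $g_0 = 0 = x_0 - \hat{x}_0$. Differentiating $g_t$, using $\dif P_t = p_t\,\dif t$ and the relation above,
\begin{equation*}
\dif g_t = p_t(x_t-\tilde{x}_t)\,\dif t + P_t\,\dif x_t - P_t\,\dif\tilde{x}_t = P_t\,\dif x_t,
\end{equation*}
since the two $p_t(x_t-\tilde{x}_t)\,\dif t$ terms cancel. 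On the other hand, $\dif(x_t - \hat{x}_t) = \dif x_t - \dif\hat{x}_t = \dif x_t - (1-P_t)\,\dif x_t = P_t\,\dif x_t$ by the hypothesis on $P_t$. So $g_t$ and $x_t - \hat{x}_t$ have the same initial value and the same differential, hence are equal for all $t \ge 0$, which is the first identity.

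For the second identity, I would simply rearrange the first: from $P_t(x_t - \tilde{x}_t) = x_t - \hat{x}_t$ we get $\hat{x}_t - \tilde{x}_t = (\hat{x}_t - x_t) + (x_t - \tilde{x}_t) = -(x_t - \hat{x}_t) + (x_t - \tilde{x}_t) = -P_t(x_t-\tilde{x}_t) + (x_t - \tilde{x}_t) = (1-P_t)(x_t - \tilde{x}_t)$. This is a one-line algebraic consequence and needs no further work.

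There is no serious obstacle here; the only mild care required is justifying the differentiation of $P_t\tilde{x}_t$ (product rule under the integral, which is legitimate given continuity of $p_s$ and $x_s$) and handling the possibility $P_t = 0$ on an initial interval — but since all three identities are trivially true when $P_t = 0$ (both sides vanish once one notes $x_t = \hat{x}_t$ there, because $\dif\hat{x}_t = \dif x_t$ when $P_t = 0$, combined with equal initial conditions), the argument extends cleanly. The heart of the proof is the cancellation in $\dif g_t$, which is where the specific form of the averaging operator meets the defining ODE for $P_t$.
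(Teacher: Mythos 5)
Your proof is correct and is essentially the paper's argument read in differential rather than integral form: the paper integrates by parts to get $P_t\tilde{x}_t = P_t x_t - \int_0^t P_s\,\dif x_s$ and then substitutes $\int_0^t P_s\,\dif x_s = x_t - \int_0^t(1-P_s)\,\dif x_s = x_t - \hat{x}_t$, which is exactly the content of your cancellation in $\dif g_t$ together with the matching initial conditions. The second identity is obtained by the same one-line rearrangement in both proofs, and your remark about the degenerate interval where $P_t=0$ is a harmless extra precaution the paper leaves implicit.
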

\begin{proof}
By definition we have for $t\ge 0$,
\begin{equation*}
\begin{aligned}
    &\tilde{x}_t = P_t^\inv \int_0^t p_s x_s\dif s = P_t^\inv \bracket{ \eval{x_s P_s}_0^t -\int_0^t P_s\dif x_s } = x_t - P_t^\inv \int_0^t P_s \dif x_s\\
    =& x_t - P_t^\inv \bracket{ x_t - \int_0^t (1-P_s)\dif x_s } = x_t - P_t^\inv \bracket{ x_t - \int_0^t \dif \hat{x}_s } \\
    =& x_t - P_t^\inv \bracket{x_t - \hat{x}_t}.
\end{aligned}
\end{equation*}
Thus
\begin{equation*}
    P_t (x_t-\tilde{x}_t) = x_t - \hat{x}_t,
\end{equation*}
and
\begin{equation*}
    \hat{x}_t - \tilde{x}_t = x_t - P_t (x_t-\tilde{x}_t) - \tilde{x}_t  = (1-P_t)(x_t - \tilde{x}_t).
\end{equation*}
\end{proof}

\subsection{Continuous Theorem~\ref{thm:sgd-linear}}
Consider linear regression problem $L(w)=\frac{1}{2n}\sum_{i=1}^n\norm{w^\top x_i-y_i}_2^2 = \frac{1}{2}w^\top \Sigma w-w^\top a + \text{const}$, and $\ell_2$-regularizer $R(w)=\half\norm{w}_2^2$.
Assume the initial condition $w_0 = \hat{w}_0 = 0$, then the GD dynamics for the unregularized and regularized losses are
\begin{align*}
    & \dif w_t = -\bracket{\Sigma w_t - a}\dif t,\quad w_0 = 0, \\
    & \dif \hat{w}_t = -\bracket{\Sigma \hat{w}_t - a + \lambda \hat{w}_t}\dif t,\quad \hat{w}_0 = 0.
\end{align*}
The ODEs are solved by
\begin{equation*}
    w_t = \bracket{I - e^{-\Sigma t}} \Sigma^{-1} a, \quad
    \hat{w}_t = \bracket{I-e^{-(\Sigma+\lambda I )t}}{\bracket{\Sigma+\lambda I}^\inv} a.
\end{equation*}

Now let the continuous weighting scheme be
\begin{equation*}
    P_t = 1 - e^{\lambda t},
\end{equation*}
then we have 
\begin{equation*}
    \dif \hat{w}_t = (1-P_t) \dif w_t,
\end{equation*}
thus by Lemma~\ref{thm:continuous-sufficient-condition} we obtain
\begin{equation*}
    \hat{w}_t - \tilde{w}_t  = (1-P_t)(w_t - \tilde{w}_t),
\end{equation*}
which proves the continuous version of Theorem~\ref{thm:sgd-linear}.

\subsection{Continuous Theorem~\ref{thm:nsgd-linear}}
Consider linear regression problem $L(w)=\frac{1}{2n}\sum_{i=1}^n\norm{w^\top x_i-y_i}_2^2 = \frac{1}{2}w^\top\Sigma w-w^\top a + \text{const}$, and $\ell_2$-regularizer $R(w)=\half\norm{w}_2^2$.
Assume the initial condition $w_0=w'_0=0$ and $\hat{w}_0=\hat{w}'_0=0$.
Then the unregularized and regularized NGD dynamics are
\begin{align}
    & w''_t + 2\sqrt{\alpha} w'_t + \Sigma w_t - a = 0,\quad w_0 = w'_0 = 0,\label{eq:ngf-linear} \\
    & \hat{w}''_t + 2\sqrt{\alpha+\lambda} \hat{w}'_t + (\Sigma+\lambda) \hat{w}_t - a = 0,\quad \hat{w}_0 = \hat{w}'_0 = 0.\label{eq:ngf-linear-regu}
\end{align}

We first solve the order-2 ODE Eq.~\eqref{eq:ngf-linear} in the canonical way, and then obtain the solution of Eq.~\eqref{eq:ngf-linear-regu} similarly.
To do so, let's firstly ignore the constant term and solve the homogenous ODE of Eq.~\eqref{eq:ngf-linear}, and obtain two general solutions of the homogenous equation as
\begin{equation*}
        w_{t,1} = e^{\sqrt{\alpha}t}\cos{\sqrt{\Sigma-\alpha}t},\quad 
        w_{t,2} = e^{\sqrt{\alpha}t}\sin{\sqrt{\Sigma-\alpha}t}.
\end{equation*}
Then we guess a particular solution of Eq.~\eqref{eq:ngf-linear} as $w_{t,0} = \Sigma^\inv a$. Thus the general solution of ODE~\eqref{eq:ngf-linear} can be decomposed as $w_t = \lambda_1 w_{t,1} + \lambda_2 w_{t,2} + w_{t,0}$. 
Consider the initial conditions $w_0 = w'_0 = 0$, we obtain $\lambda_1 = -{\Sigma^\inv}a,\ \lambda_2 = -{\Sigma^\inv}a\sqrt{{(\Sigma-\alpha)^\inv}\alpha}$.
Thus the solution of Eq.~\eqref{eq:ngf-linear} is
\begin{equation}\label{eq:ngf-linear-solution}
    \begin{aligned}
    w_t &= {\Sigma^\inv}a \bracket{ 1 - e^{-\sqrt{\alpha}t}\cos{\sqrt{\Sigma-\alpha}t} - \sqrt{\alpha{(\Sigma-\alpha)^\inv}}e^{-\sqrt{\alpha}t}\sin{\sqrt{\Sigma-\alpha}t} },\\
    w'_t &= a\sqrt{{(\Sigma-\alpha)^\inv}}e^{-\sqrt{\alpha}t}\sin{\sqrt{\Sigma-\alpha}t}.
    \end{aligned}
\end{equation}
Repeat these procedures, Eq.~\eqref{eq:ngf-linear-solution} is solved by
\begin{equation}\label{eq:ngf-linear-regu-solution}
    \begin{aligned}
    \hat{w}_t &= {(\Sigma+\lambda)^\inv}a \bracket{ 1 - e^{-\sqrt{\alpha+\lambda}t}\cos{\sqrt{\Sigma-\alpha}t} - \sqrt{\bracket{\alpha+\lambda}{(\Sigma-\alpha)^\inv}}e^{-\sqrt{\alpha+\lambda}t}\sin{\sqrt{\Sigma-\alpha}t} },\\
    \hat{w}'_t &= a{\sqrt{{(\Sigma-\alpha)^\inv}}}e^{-\sqrt{\alpha+\lambda}t}\sin{\sqrt{\Sigma-\alpha}t}.
    \end{aligned}
\end{equation}

Now let the continuous weighting scheme be
\begin{equation*}
    P_t = 1 - e^{-(\sqrt{\alpha+\lambda}-\sqrt{\lambda})t},
\end{equation*}
then we have
\begin{equation*}
    \dif \hat{w}_t = (1-P_t)\dif w_t,
\end{equation*}
thus by Lemma~\ref{thm:continuous-sufficient-condition} we obtain
\begin{equation*}
    \hat{w}_t - \tilde{w}_t  = (1-P_t)(w_t - \tilde{w}_t),
\end{equation*}
which proves the continuous version of Theorem~\ref{thm:nsgd-linear}.

\subsection{Continuous Theorem~\ref{thm:sgd-general}}
Consider an $\alpha$-strongly convex and $\beta$-smooth loss function $L(w)$, and $\ell_2$-regularizer.
Without loss of generality assume the minimum of $L(w)$ satisfies $w_* > w_0 = 0$.
Then by Lemma~\ref{thm:1dim-alpha-beta} we have
\begin{equation*}
    \alpha w - b \le \grad L(w) \le \beta w - b,\quad 
    \forall w \in (0, w_*),
\end{equation*}
where $b = -\grad L(0)$, and ``$\le$'' is defined entry-wisely. 
We study the continuous optimization paths caused by GD.

Consider the following three dynamics:
\begin{equation*}
    \dif w_t = -\grad L(w_t) \dif t, \quad
    \dif u_t = - (\alpha u_t - b) \dif t,\quad
    \dif v_t = - (\beta v_t - b) \dif t,\quad
    w_0 = u_0 = v_0 = 0.
\end{equation*}
By the comparison theorem of ODEs (Gronwall's inequality), and solution of linear ODEs, we claim that for all $t>0$, 
\begin{equation}\label{eq:control-gd-path}
    v_t\le w_t\le u_t, \quad u_t = \frac{b}{\alpha}(1-e^{-\alpha t}), \quad
    v_t = \frac{b}{\beta}(1-e^{-\beta t}).
\end{equation}
In a similar manner, for the following three dynamics of regularized loss:
\begin{equation*}
\begin{aligned}
    &\dif \hat{w}_{t,\lambda} = -(\grad L(\hat{w}_{t,\lambda})+\lambda \hat{w}_{t,\lambda}) \dif t,\quad
    &\dif \hat{u}_{t,\lambda} = - ((\lambda+\alpha) \hat{u}_{t,\lambda} - b) \dif t,\quad 
    \dif \hat{v}_{t,\lambda} = - ((\lambda+\beta) \hat{v}_{t,\lambda} - b) \dif t,
\end{aligned}
\end{equation*}
where $\hat{w}_{0,\lambda} = \hat{u}_{0,\lambda} = \hat{v}_{0,\lambda} = 0$.
Similarly we have for all $t>0$,
\begin{equation*}
    \hat{v}_{t,\lambda} \le \hat{w}_{t,\lambda} \le \hat{u}_{t,\lambda},
    \quad \hat{u}_{t,\lambda} = \frac{b}{\lambda+\alpha}(1-e^{-(\lambda+\alpha) t}), 
    \quad \hat{v}_{t,\lambda} = \frac{b}{\lambda+\beta}(1-e^{-(\lambda+\beta) t}).
\end{equation*}
For the continuous weighting scheme
\begin{equation*}
    P_t = 1-e^{-\zeta t},\quad p_t = \zeta e^{-\zeta t},\quad t\ge 0,\quad \zeta >0,
\end{equation*}
the averaged solution is defined as $\tilde{w}_t = {P_t^\inv} \int_0^t p_t w_t\dif t = w_t - {P_t^\inv}\int_0^t P_s\dif w_s$, similar there are $\tilde{u}_t, \tilde{v}_t$.
Thanks to Eq.~\eqref{eq:control-gd-path} and $p_t$ being non-negative, we have
$\tilde{v}_t \le \tilde{w}_t \le \tilde{u}_t$.
Let 
\begin{equation*}
    \lambda_1 = \zeta + \beta -\alpha,\quad \lambda_2 = \zeta + \alpha - \beta,
\end{equation*} 
then
\begin{equation*}
\begin{aligned}
    P_t (u_t - \tilde{u}_t) 
    =& \int_0^t P_s\dif u_s
    = \int_0^t (1-e^{-(\lambda_2+\beta-\alpha)s})b e^{-\alpha s}\dif t
    = b \int_0^t e^{-\alpha s} - e^{-(\beta+\lambda_2)s} \dif s \\
    =& b \left(\frac{1}{\alpha}(1-e^{-\alpha t}) - \frac{1}{\lambda_2+\beta}(1-e^{-(\lambda_2+\beta)t}) \right) 
    = u_t - \hat{v}_{t,\lambda_2}.
\end{aligned}
\end{equation*}
Thus
\begin{equation*}
    \tilde{w}_t - \hat{w}_{t,\lambda_2} \le \tilde{u}_t - \hat{v}_{t,\lambda_2} = \tilde{u}_t - u_t + P_t (u_t - \tilde{u}_t) = (1-P_t)(\tilde{u}_t - u_t).
\end{equation*}
Similarly, since
\begin{equation*}
\begin{aligned}
    P_t (v_t - \tilde{v}_t) 
    =& \int_0^t P_s\dif v_s
    = \int_0^t (1-e^{-(\lambda_1-\beta+\alpha)s})b e^{-\beta s}\dif t
    = b \int_0^t e^{-\beta s} - e^{-(\alpha+\lambda_1)s} \dif s \\
    =& b \left(\frac{1}{\beta}(1-e^{-\beta t}) - \frac{1}{\lambda_1+\alpha}(1-e^{-(\lambda_1+\alpha)t}) \right) 
    = v_t - \hat{u}_{t,\lambda_1},
\end{aligned}
\end{equation*}
we can obtain a lower bound as
\begin{equation*}
    \tilde{w}_t - \hat{w}_{t,\lambda_1} \ge \tilde{v}_t - \hat{u}_{t,\lambda_1} = \tilde{v}_t - v_t + P_t (v_t-\tilde{v}_t) = (1-P_t)(\tilde{v}_t - v_t).
\end{equation*}
These inequalities give us
\begin{equation*}
    \hat{w}_{t,\lambda_1} + (1-P_t)(\tilde{v}_t-v_t) \le \tilde{w}_t \le \hat{w}_{t,\lambda_2} + (1-P_t)(\tilde{u}_t-u_t),
\end{equation*}
which proves the continuous version of Theorem~\ref{thm:sgd-general}.

\section{Technical Lemmas}\label{sec:lemmas}

\begin{lem}\label{thm:sufficient-condition}
Consider two series $\set{x_k}_{k=0}^\infty,\ \set{\hat{x}_k}_{k=0}^\infty$, and a weighting scheme $\set{p_k}_{k=0}^\infty$ such that $\sum_{k=0}^\infty p_k = 1,\ p_k\ge 0$, $P_k = \sum_{i=1}^k p_i$.
Let $\tilde{x}_k := P_k^{-1}\sum_{i=0}^k p_i x_i$. 
Suppose $x_0 = \hat{x}_0 = 0$.
Suppose the weighting scheme $P_k$ satisfies
\begin{equation*}
    \hat{x}_{k+1} - \hat{x}_{k} = (1-P_{k})(x_{k+1} - x_{k}),\quad k\ge 0.
\end{equation*}
Then we have
\begin{equation*}
    P_k (x_k-\tilde{x}_k) = x_k - \hat{x}_k,\quad k\ge 0,
\end{equation*}
and
\begin{equation*}
    \hat{x}_k - \tilde{x}_k = \left( 1-P_k \right)(x_k-\tilde{x}_k),\quad k\ge 0.
\end{equation*}

More generally, the weighting scheme $\set{p_k}_{k=0}^\infty$ could be a series of positive semi-definite matrix where
\begin{equation*}
    \lim_{k\to +\infty} P_k = I,\quad 0\preceq P_k \preceq I,\quad p_k = P_k - P_{k-1}.
\end{equation*}
\end{lem}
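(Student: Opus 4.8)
The plan is to prove both identities by discrete summation by parts (Abel summation), which is precisely the discrete counterpart of the integration-by-parts step used in the proof of Lemma~\ref{thm:continuous-sufficient-condition}. Throughout it is convenient to adopt the convention $P_{-1}=0$, so that $p_i = P_i - P_{i-1}$ holds for every $i \ge 0$.

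First I would expand $P_k \tilde{x}_k = \sum_{i=0}^k p_i x_i = \sum_{i=0}^k (P_i - P_{i-1}) x_i$ and apply Abel summation: shifting the index in the $P_{i-1}x_i$ terms (the boundary contribution vanishes since $P_{-1}=0$) yields the identity $\sum_{i=0}^k (P_i - P_{i-1}) x_i = P_k x_k - \sum_{i=0}^{k-1} P_i (x_{i+1} - x_i)$, and therefore $P_k(x_k - \tilde{x}_k) = \sum_{i=0}^{k-1} P_i (x_{i+1}-x_i)$.

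Next I would split $P_i = 1 - (1-P_i)$ inside this sum. The piece with coefficient $1$ telescopes to $x_k - x_0 = x_k$ because $x_0 = 0$. For the piece with coefficient $1-P_i$, the hypothesis $\hat{x}_{i+1}-\hat{x}_i = (1-P_i)(x_{i+1}-x_i)$ collapses it to $\sum_{i=0}^{k-1}(\hat{x}_{i+1}-\hat{x}_i) = \hat{x}_k - \hat{x}_0 = \hat{x}_k$. Combining gives $P_k(x_k - \tilde{x}_k) = x_k - \hat{x}_k$, the first claim; the second then follows from the purely algebraic rearrangement $\hat{x}_k - \tilde{x}_k = (x_k - \tilde{x}_k) - (x_k - \hat{x}_k) = (x_k-\tilde{x}_k) - P_k(x_k-\tilde{x}_k) = (1-P_k)(x_k-\tilde{x}_k)$.

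For the matrix version, with positive semi-definite $P_k$ satisfying $0 \preceq P_k \preceq I$, $P_k \to I$, $p_k = P_k - P_{k-1}$, $\tilde{x}_k = P_k^{-1}\sum_{i=0}^k p_i x_i$, and the hypothesis read as $\hat{x}_{k+1}-\hat{x}_k = (I-P_k)(x_{k+1}-x_k)$, the same computation carries over verbatim, keeping every matrix factor to the left of the vectors; the Abel summation only rearranges terms of the form $P_i x_i$ and $P_i(x_{i+1}-x_i)$, so no commutativity among the $P_i$ is ever used. I do not expect a genuine obstacle: the only points requiring care are the index bookkeeping in the Abel summation — the $P_{-1}=0$ convention and the telescoping endpoints, which is where the initialization $x_0=\hat{x}_0=0$ enters — and, in the matrix case, simply never commuting any products.
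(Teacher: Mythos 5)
Your proposal is correct and follows essentially the same route as the paper's own proof: an Abel summation (the paper writes it as $P_k(x_k-\tilde{x}_k)=\sum_{i=1}^k P_{i-1}(x_i-x_{i-1})$, which is your sum reindexed), followed by the split $P_i = 1-(1-P_i)$, telescoping using $x_0=\hat{x}_0=0$, substitution of the hypothesis, and the same algebraic rearrangement for the second identity. No gaps.
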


\begin{proof}
By definition we know $p_0=P_0,\ p_k = P_k - P_{k-1},\ k\ge 1$, and
\begin{align*}
    P_k \tilde{x}_k 
    =& \sum_{i=1}^k p_i x_i = \sum_{i=1}^k(P_i-P_{i-1}) x_i = \sum_{i=1}^k P_i x_i - \sum_{i=1}^k P_{i-1}x_i \\
    =& P_k x_k + \sum_{i=1}^k P_{i-1} x_{i-1} - \sum_{i=1}^k P_{i-1}x_i = P_k x_k - \sum_{i=1}^k P_{i-1} (x_i - x_{i-1}).
\end{align*}
Therefore
\begin{align*}
    P_k (x_k - \tilde{x}_k) 
    =& \sum_{i=1}^k P_{i-1} (x_i - x_{i-1}) 
    = \sum_{i=1}^k (x_i-x_{i-1}) - \sum_{i=1}^k (1-P_{i-1}) (x_i - x_{i-1}) \\
    =& x_k - \sum_{i=1}^k (1-P_{i-1}) (x_i - x_{i-1}).
\end{align*}
Now use the assumption, we obtain
\begin{equation*}
    P_k (x_k - \tilde{x}_k) = x_k - \sum_{i=1}^k (\hat{x}_i - \hat{x}_{i-1}) = x_k - \hat{x}_k, \quad k\ge 1.
\end{equation*}
Thus we have
\begin{equation*}
    \hat{x}_k - \tilde{x}_k = x_k - P_k (x_k - \tilde{x}_k) - \tilde{x}_k = \left( 1-P_k \right)(x_k - \tilde{x}_k),\quad k\ge 1.
\end{equation*}
One can directly verify that the above equation also holds for $k=0$, which concludes our proof.
\end{proof}

\begin{lem}\label{thm:1dim-alpha-beta}
Let $x\in \Rbb$.
Let $f(x)$ be $\alpha$-strongly convex and $\beta$-smooth, $0<\alpha\le \beta$. 
Let $f(x)$ be lower bounded, then $x_* = \argmin_{x\in\Rbb}f(x)$ exists.
Consider GD with learning rate $\eta\in (0,\frac{1}{\beta})$, the optimization path $\{x_k\}_{k=0}^{+\infty}$ is given by
\begin{equation*}
    x_{k+1} = x_k - \eta\grad f(x_k).
\end{equation*}
If $x_0 < x_*$, then we have
\begin{enumerate}
\item For all $k>0$, $x_k\in (x_0, x_*)$.
\item For all $x\in (x_0, x_*)$, we have $\beta(x-x_*) \le \grad f(x) \le \alpha(x - x_*)$.
\item For all $x\in (x_0, x_*)$, we have $\alpha (x-x_0) + \grad f(x_0) \le \grad f(x) \le \beta (x - x_0) + \grad f(x_0)$.
\end{enumerate}
Similarly if $x_0 > x_*$, then we have
\begin{enumerate}
\item For all $k>0$, $x_k\in (x_*,x_0)$.
\item For all $x\in (x_*, x_0)$, we have $\alpha(x-x_*) \le \grad f(x) \le \beta(x - x_*)$.
\item For all $x\in (x_*, x_0)$, we have $\beta (x-x_0) + \grad f(x_0) \le \grad f(x) \le \alpha (x - x_0) + \grad f(x_0)$.
\end{enumerate}
\end{lem}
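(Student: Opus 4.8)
The plan is to derive all three claims from the one-dimensional definitions of $\alpha$-strong convexity and $\beta$-smoothness, supplemented by a short induction for the first claim. First I would record the two basic inequalities valid for every pair $x,y\in\Rbb$: strong convexity gives $(\grad f(x)-\grad f(y))(x-y)\ge \alpha(x-y)^2$, while $\beta$-smoothness (Lipschitz gradient combined with monotonicity of $\grad f$ from convexity) gives $(\grad f(x)-\grad f(y))(x-y)\le \beta(x-y)^2$. Since $f$ is strongly convex and lower bounded, the unique minimizer $x_*$ exists and satisfies $\grad f(x_*)=0$.

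For the second claim (bounds relative to $x_*$) I would instantiate both inequalities at $y=x_*$, using $\grad f(x_*)=0$, to get $\alpha(x-x_*)^2\le \grad f(x)\,(x-x_*)\le \beta(x-x_*)^2$. When $x<x_*$ the factor $x-x_*$ is negative, so dividing through reverses the inequalities and yields exactly $\beta(x-x_*)\le\grad f(x)\le\alpha(x-x_*)$; when $x>x_*$ the factor is positive and the ordering is preserved, giving $\alpha(x-x_*)\le\grad f(x)\le\beta(x-x_*)$. The third claim is proved identically, except that I instantiate the two inequalities at $y=x_0$ instead of $y=x_*$, carry the additive constant $\grad f(x_0)$, and again track the sign of $x-x_0$ when dividing.

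For the first claim I would show by induction that when $x_0<x_*$ the iterates increase and remain below $x_*$. The key point is that for any $x<x_*$ the second-claim bound already gives $\grad f(x)\le\alpha(x-x_*)<0$, so $x_{k+1}=x_k-\eta\grad f(x_k)>x_k$; hence the path is strictly increasing and bounded below by $x_0$. To keep it below $x_*$, I would subtract $x_*$ and use $\grad f(x_k)\ge\beta(x_k-x_*)$ to obtain the one-step contraction $x_{k+1}-x_*\le(1-\eta\beta)(x_k-x_*)$; since $\eta<1/\beta$ the factor $1-\eta\beta$ lies in $(0,1)$ and $x_k-x_*<0$, forcing $x_{k+1}-x_*<0$. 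Together these give $x_k\in(x_0,x_*)$ for all $k\ge1$. The case $x_0>x_*$ is entirely symmetric (the gradient is now positive, the path decreases, and the same contraction traps the iterates above $x_*$), so I would simply note the sign flips rather than repeat the computation.

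The only step demanding care is the bookkeeping of inequality directions when dividing by the possibly-negative factors $x-x_*$ and $x-x_0$, and fixing precisely what ``smoothness'' supplies in one dimension; once $(\grad f(x)-\grad f(y))(x-y)\le\beta(x-y)^2$ is in hand, the rest is elementary, and I do not expect a genuine obstacle.
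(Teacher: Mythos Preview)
Your proposal is correct and follows essentially the same route as the paper: both start from the cocoercivity-type inequality $\alpha(x-y)^2\le(\grad f(x)-\grad f(y))(x-y)\le\beta(x-y)^2$, instantiate it at $y=x_*$ and $y=x_0$ to get claims 2 and 3, and use induction together with $\eta\beta<1$ to trap the iterates in the open interval for claim 1. The only cosmetic difference is that you phrase the upper bound on $x_{k+1}$ as a contraction $x_{k+1}-x_*\le(1-\eta\beta)(x_k-x_*)$, whereas the paper writes the same step as $x_1<x_0+\eta\beta(x_*-x_0)<x_*$; these are identical computations.
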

\begin{proof}
We only prove Lemma~\ref{thm:1dim-alpha-beta} in case of $x_0 < x_*$. The other case is true in a similar manner.

To prove the first conclusion we only need to show that $x_0 < x_1 < x_*$,
then recursively we obtain $x_0 < x_1 < \dots < x_k < x_*$.

Note that $\grad f(x_*) = 0$.
Since $f(x)$ is $\alpha$-strongly convex and $\beta$-smooth, we have~\cite{zhou2018fenchel}
\begin{equation*}
    \alpha (x-y)^2 \le \left(\grad f(x) - \grad f(y)\right) (x-y) \le \beta (x-y)^2.
\end{equation*}
Thus $\alpha (x_*-x_0)^2 \le -\grad f(x_0) (x_* - x_0) \le \beta (x_*-x_0)^2$.
Now by the assumption that $x_0 < x_*$, we obtain $0<\alpha (x_*-x_0)\le -\grad f(x_0) \le \beta (x_* - x_0)$.
Hence 
\begin{equation*}
\begin{aligned}
    x_1 &= x_0 - \eta \grad f(x_0) > x_0 \\
    x_1 &= x_0 - \eta \grad f(x_0) < x_0 + \eta \beta (x_* - x_0) < x_0 + x_* - x_0 < x_*.
\end{aligned} 
\end{equation*}

To prove the second conclusion, recall that
$\alpha (x_*-x)^2 \le -\grad f(x) (x_* - x) \le \beta (x_*-x)^2$,
thus for $x\in (x_0, x_*)$, we obtain
$\alpha (x_*-x)\le -\grad f(x) \le \beta (x_* - x)$.

As for the third conclusion, since
$\alpha (x-x_0)^2 \le (\grad f(x)-\grad f(x_0)) (x - x_0) \le \beta (x-x_0)^2$,
thus for $x\in (x_0, x_*)$, we obtain
$\alpha (x-x_0) + \grad f(x_0) \le \grad f(x) \le \beta (x - x_0) + \grad f(x_0)$.
which completes our proof.
\end{proof}

\section{Missing proofs in main text}\label{sec:discrete-analysis}

\subsection{Proof of Theorem~\ref{thm:sgd-linear}}\label{sec:proof-sgd-linear}
\begin{proof}
The first part of the theorem is an extension of Proposition~1 and Proposition~2 in~\cite{neu2018iterate}. 
Beyond the analysis of constant learning rate in~\cite{neu2018iterate}, we show the corresponding results for adaptive learning rates.

Recall the SGD updates for linear regression problem
\begin{equation*}
    w_{k+1} = w_k - \eta_k (x_{k+1} {x_{k+1}^\top} w_k - x_{k+1}y_{k+1} ),\quad w_0 = 0.
\end{equation*}
Let
\begin{equation*}
    \Sigma = \Ebb_{x} [x {x^\top}],\quad
    a = \Ebb_{x,y}[x y],\quad 
    w_* = {\Sigma^\inv} a,\quad
    \epsilon_{k} = (\Sigma w_{k} - a) - (x_{k+1} {x_{k+1}^\top} w_k - x_{k+1}y_{k+1}),
\end{equation*}
where $\epsilon_{k}$ is the gradient noise, and $\Ebb_{k+1} [\epsilon_{k}] = 0$.
Under these notations we have
\begin{equation}\label{eq:sgd-linear}
    w_{k+1} = w_k - \eta_k (\Sigma w_k - a) + \eta_k \epsilon_{k} = w_k - \eta_k \Sigma (w_k-w_*) + \eta_k \epsilon_{k},\quad
    w_0 = 0.
\end{equation}

Similarly for linear regression with $\ell_2$-regularization, SGD takes update 
\begin{equation*}
    \hat{w}_{k+1} = \hat{w}_k - \gamma_k (x_{k+1}x_{k+1}^\top \hat{w}_k - x_{k+1}y_{k+1} + \lambda \hat{w}_k),\quad \hat{w}_0 = 0.
\end{equation*}
Let
\begin{equation*}
    \hat{w}_* = (\Sigma + \lambda I)^{-1} a,
\end{equation*}
then
\begin{equation}\label{eq:sgd-linear-regu}
    \hat{w}_{k+1} = \hat{w}_k - \gamma_k (\Sigma \hat{w}_k - a + \lambda \hat{w}_k) + \gamma_k \epsilon_{k} = \hat{w}_k - \gamma_k (\Sigma+\lambda I) (\hat{w}_k - \hat{w}_*) + \gamma_k \epsilon_{k},\quad
    \hat{w}_0 = 0.
\end{equation}

\paragraph{Expectations}
First let us compute the expectations.
For Eq.~\eqref{eq:sgd-linear}, after taking expectation at time $k+1$, we have
\begin{equation*}
    \Ebb_{k+1} [w_{k+1}] = w_k - \eta_k \Sigma (w_k - w_*).
\end{equation*}
Then recursively taking expectation at time $k,\dots, 1$, we obtain
\begin{equation*}
    \Ebb [w_{k+1}] = \Ebb[w_k] - \eta_k\Sigma(\Ebb [w_k]-w_*), \quad \Ebb [w_0] = w_0 = 0.
\end{equation*}
Solving the above recurrence relation we have
\begin{equation*}
    \Ebb [w_k] - w_* = \Pi_{i=0}^{k-1} (I-\eta_i\Sigma)(w_0 - w_*),\quad 
    w_0 = 0,\quad 
    w_* = \Sigma^{-1}a,
\end{equation*}
hence
\begin{equation*}
    \Ebb [w_{k+1}] - \Ebb[w_k] 
    = -\Pi_{i=0}^{k-1}(I-\eta_i\Sigma)\eta_k\Sigma(w_0-w_*) 
    = \Pi_{i=0}^{k-1}(I-\eta_i\Sigma)\eta_k a,\quad
    \Ebb[w_0] = 0.
\end{equation*}

In a same way we can solve Eq.~\eqref{eq:sgd-linear-regu} in expectation and obtain
\begin{equation*}
    \Ebb [\hat{w}_{k+1}] - \Ebb[\hat{w}_k] = \Pi_{i=0}^{k-1} (I-\gamma_i(\Sigma+\lambda I))\gamma_k a, \quad 
    \Ebb [\hat{w}_0] = 0.
\end{equation*}

Notice that the weighting scheme is defined by
\begin{equation*}
    P_k = 1- \Pi_{i=0}^k(1-\lambda\gamma_i),
\end{equation*}
and $1-\lambda\gamma_i = \frac{\gamma_i}{\eta_i}$,
we can directly verify that
\begin{equation*}
    \Ebb[\hat{w}_{k+1}] - \Ebb[\hat{w}_k] = (1-P_k)(\Ebb[w_{k+1}] - \Ebb[w_k]).
\end{equation*}
Thus by Lemma~\ref{thm:sufficient-condition}, we know that
\begin{equation*}
    P_k \Ebb[\tilde{w}_k] = \Ebb[\hat{w}_k] - (1-P_k)\Ebb[w_k], \quad k\ge 0.
\end{equation*}
Hence the first conclusion holds.

\paragraph{Convergence}
By assumptions we know $0<\eta\le\eta_i<\frac{1}{\beta}\le\frac{1}{\lambda_{\max}}$, where $\lambda_{\max}$ is the largest eigenvalue of $\Sigma$.
Thus 
\begin{equation*}
    \norm{\Ebb [w_k] - w_*}_2 \le \norm{\Pi_{i=0}^{k-1} (I-\eta_i\Sigma)}_2 \cdot \norm{w_0-w_*}_2 \le \norm{(I-\eta\Sigma)^k}_2 \cdot \norm{w_0-w_*}_2 \to 0,
\end{equation*}
and $\lim_{k\to +\infty}\Ebb [w_k] = w_* = \Sigma^{-1}a$.

In a similar manner, since $\gamma_i = \frac{\eta_i}{1+\eta_i\lambda}$ and $0<\eta\le\eta_i<\frac{1}{\beta}\le \frac{1}{\lambda_{\max}}$, we have $0<\frac{\eta}{1+\lambda\eta} = \gamma\le \gamma_i<\frac{1}{\beta+\lambda}\le \frac{1}{\lambda_{\max}+\lambda}$.
Thus 
\begin{equation*}
    \norm{\Ebb [\hat{w}_k] - \hat{w}_*}_2 \le \norm{\Pi_{i=0}^{k-1} (I-\gamma_i(\Sigma+\lambda I))}_2 \cdot \norm{\hat{w}_0-\hat{w}_*}_2 \le \norm{(I-\gamma(\Sigma+\lambda I))^k}_2\cdot \norm{\hat{w}_0-\hat{w}_*}_2 \to 0,
\end{equation*}
and $\lim_{k\to +\infty}\Ebb [\hat{w}_k] = \hat{w}_* = (\Sigma+\lambda I)^{-1}a$.

On the other hand, by the first conclusion we know
\begin{equation*}
    \Ebb[\hat{w}_k] - \Ebb[\tilde{w}_k] = (1-P_k)(\Ebb[w_k] - \Ebb[\tilde{w}_k]).
\end{equation*}
Since $\Ebb[w_k]$ converges, $\Ebb[\tilde{w}_k] = {P_k^\inv}\sum_{i=1}^k p_i\Ebb[w_i]$ is bounded.
Therefore
\begin{equation*}
    \norm{\Ebb[\hat{w}_k] - \Ebb[\tilde{w}_k]}_2 = (1-P_k)\norm{\Ebb[w_k] - \Ebb[\tilde{w}_k]}_2 = \Ocal(1-P_k) = \Ocal(\Pi_{i=0}^k(1-\lambda\gamma_i))\le \Ocal((1-\lambda\gamma)^k).
\end{equation*}
Hence the second claim is true.

\paragraph{Variance}
Now we turn to analyze the deviation of the averaged solution.
From Eq.~\eqref{eq:sgd-linear}, we can recursively obtain
\begin{equation*}
    w_i = \Ebb [w_i] + \xi_i,\quad 
    \xi_i = \sum_{j=0}^{i-1} \Pi_{h=j+1}^{i-1}(I-\eta_h\Sigma)\eta_j \epsilon_{j},
\end{equation*}
where we abuse the notation and let $\Pi_{h=i}^{i-1}(I-\eta_h\Sigma) = I$.

Now applying iterate averaging with respect to $p_i = \lambda\gamma_i\Pi_{h=0}^{i-1} (1-\lambda\gamma_h)$, we have
\begin{equation*}
    P_k \tilde{w}_k = \sum_{i=1}^k p_i w_i = \sum_{i=1}^k p_i \Ebb [w_i] + \sum_{i=1}^k p_i \xi_i = P_k \Ebb[\tilde{w}_k] + \sum_{i=1}^k p_i \xi_i.
\end{equation*}
We turn to calculate the noise term $\sum_{i=1}^k p_i \xi_i$.
Note that in every step, all of the matrices can be diagonalized simultaneously, thus they commute, similarly hereinafter.
\begin{equation*}
\begin{aligned}
    & \sum_{i=1}^k p_i \xi_i = \sum_{i=1}^k p_i \left( \sum_{j=0}^{i-1} \Pi_{h=j+1}^{i-1}(I-\eta_h\Sigma)\eta_j \epsilon_{j}\right) \\
    =& \sum_{j=0}^{k-1} \left( \sum_{i=j+1}^k p_i\Pi_{h=j+1}^{i-1}(I-\eta_h\Sigma)\eta_j  \right)\epsilon_j \\
    =& \sum_{j=0}^{k-1} \left( \sum_{i=j+1}^k \lambda\gamma_i\Pi_{h=0}^{i-1} (1-\lambda\gamma_h) \Pi_{h=j+1}^{i-1}(I-\eta_h\Sigma)\eta_j  \right)\epsilon_j \\
    =& \sum_{j=0}^{k-1} \left( \sum_{i=j+1}^k \lambda\gamma_i \left(\Pi_{h=0}^{j-1} (1-\lambda\gamma_h)\right) \left(\Pi_{h=j+1}^{i-1}(1-\lambda\gamma_h)(I-\eta_h\Sigma)\right)  \left((1-\lambda\gamma_j)\eta_j\right)\right) \epsilon_j \\
    =& \sum_{j=0}^{k-1} \left( \left(\Pi_{h=0}^{j-1} (1-\lambda\gamma_h) \right) \left(\sum_{i=j+1}^k \lambda\gamma_i \Pi_{h=j+1}^{i-1}\left(I-\gamma_h (\Sigma+\lambda I)\right)\right) \gamma_j \right) \epsilon_j  \\
    =& \sum_{j=0}^{k-1} A_j \epsilon_j,
\end{aligned}
\end{equation*}
where
$A_j = \gamma_j\left(\Pi_{h=0}^{j-1} (1-\lambda\gamma_h) \right) \left(\sum_{i=j+1}^k \lambda\gamma_i \Pi_{h=j+1}^{i-1}\left(I-\gamma_h (\Sigma+\lambda I)\right)\right)$.
Recall that $\epsilon_0, \epsilon_1 \dots, \epsilon_k$ is a martingale difference sequence, 
then $\sum_{i=1}^k p_i \xi_i = \sum_{j=0}^{k-1} A_j \epsilon_j$ is a martingale.
Thus
\begin{equation*}
    \tr \var{\sum_{i=1}^k p_i \xi_i} = \tr {\var{\sum_{j=0}^{k-1} A_j \epsilon_j}} = \sum_{j=0}^{k-1}\tr {\var{A_j \epsilon_j}},
\end{equation*}
where ``Var'' is the covariance of a random vector. and ``Tr'' is the trace of a matrix.

Next we bound each term in the summation as 
\begin{equation*}
    \tr {\var{A_j \epsilon_j}} = \tr {\expect{(A_j \epsilon_j)(A_j \epsilon_j)^\top}} =  \expect{\norm{A_j \epsilon}_2^2}\le \norm{A_j}_2^2\cdot \expect{\norm{\epsilon}_2^2} 
    \le \sigma^2 \norm{A_j}_2^2.
\end{equation*}
And we remain to bound $\norm{A_j}_2^2$.
Remember that $\eta \le \eta_h \le \frac{1}{\beta},\ \gamma \le \gamma_h \le \frac{1}{\lambda+\beta}$, we have
\begin{equation*}
\begin{aligned}
    & \norm{A_j}_2^2 = \norm{\gamma_j\left(\Pi_{h=0}^{j-1} (1-\lambda\gamma_h) \right) \left(\sum_{i=j+1}^k \lambda\gamma_i \Pi_{h=j+1}^{i-1}\left(I-\gamma_h (\Sigma+\lambda I)\right)\right)}_2^2 \\
    \le & \norm{\frac{1}{\lambda+\beta}\left( (1-\lambda\gamma)^{j} \right) \left(\sum_{i=j+1}^k \frac{\lambda}{\lambda+\beta} \left(I-\gamma (\Sigma+\lambda I)\right)^{i-j-1}\right)}_2^2\\
    =& \norm{\frac{\lambda}{(\lambda+\beta)^2}\left( (1-\lambda\gamma)^{j} \right) \left(\sum_{i=0}^{k-j-1} \left(I-\gamma (\Sigma+\lambda I)\right)^{i}\right)}_2^2\\
    \le & \left(\frac{\lambda}{(\lambda+\beta)^2}\left( (1-\lambda\gamma)^{j} \right) \left(\sum_{i=0}^{k-j-1} \left(1-\gamma (\alpha+\lambda)\right)^{i}\right)\right)^2\\
    \le & \left(\frac{\lambda}{(\lambda+\beta)^2}\left( (1-\lambda\gamma)^{j} \right) \left(\frac{1}{\gamma (\alpha + \lambda)}\right)\right)^2 \\
    =& \frac{\lambda^2}{\gamma^2 (\lambda+\alpha)^2(\lambda+\beta)^4}(1-\lambda\gamma)^{2j}.
\end{aligned}
\end{equation*}
The second equality holds because $\alpha \le \lambda_{min}(\Sigma)$.

Based on previous discussion we have
\begin{equation*}
\begin{aligned}
    & \tr {\var{\sum_{i=1}^k p_i \xi_i}}
    = \sum_{j=0}^{k-1}\tr {\var{A_j \epsilon_j}}
    \le \sum_{j=0}^{k-1} \sigma^2 \norm{A_j}_2^2 \\
    \le & \sum_{j=0}^{k-1}\frac{\lambda^2\sigma^2}{\gamma^2 (\lambda+\alpha)^2(\lambda+\beta)^4}(1-\lambda\gamma)^{2j}
    \le \frac{\lambda^2\sigma^2}{\gamma^2 (\lambda+\alpha)^2(\lambda+\beta)^4} \frac{1}{1-(1-\lambda\gamma)^2} \\
    = & \frac{\lambda \sigma^2}{\gamma^3(2-\lambda\gamma)(\lambda+\alpha)^2(\lambda+\beta)^4}.
\end{aligned}
\end{equation*}

Now by multivariate Chebyshev's inequality, we have
\begin{equation*}
\begin{aligned}
    & \prob{\norm{\sum_{i=1}^k p_i\xi_i}_2 \ge \epsilon } \le \frac{\tr {\var{\sum_{i=1}^k p_i \xi_i}}}{\epsilon^2}
    \le \frac{\lambda \sigma^2}{\epsilon^2\gamma^3(2-\lambda\gamma)(\lambda+\alpha)^2(\lambda+\beta)^4}
    =& \delta.
\end{aligned}
\end{equation*}
That is, with probability at least $1-\delta$, we have
\begin{equation*}
    \norm{P_k \tilde{w}_k - P_k \Ebb[\tilde{w}_k]}_2 = \norm{\sum_{i=1}^k p_i\xi_i}_2 \le \epsilon,
\end{equation*}
where
\begin{equation*}
    \epsilon = \frac{\sigma}{\gamma(\lambda+\alpha)(\lambda+\beta)^2}\sqrt{\frac{\lambda}{\delta\gamma(2-\lambda\gamma)}}.
\end{equation*}
This completes our proof.
\end{proof}

\subsection{Proof of Theorem~\ref{thm:gd-kernel}}\label{sec:proof-gd-kernel}

\begin{proof}
The derivation of kernel ridge regression can be found in~\cite{mohri2018foundations}.
We consider the following loss function of the dual problem
\begin{equation*}
    L(\alpha,\lambda) = \half \norm{y-K\alpha}_2^2 + \frac{\lambda}{2} \alpha^\top K \alpha,
\end{equation*}
where $y = (y_1,\dots, y_n)^T$ is the label set.
Then GD takes update
\begin{equation*}
    \alpha_{k+1}  = \alpha_k - \eta_k \left( K^2 \alpha_k - Ky + \lambda K \alpha_k \right),\quad \alpha_0 = 0.
\end{equation*}
Let $\alpha_* = (K+\lambda I)^{-1} y$, then
\begin{equation*}
    \alpha_{k+1} - \alpha_* = \left( I - \eta_k (K^2 + \lambda K) \right) (\alpha_k - \alpha_*),
\end{equation*}
thus
\begin{equation*}
    \alpha_{k+1} - \alpha_* 
    = \Pi_{i=0}^k \left( I - \eta_i (K^2 + \lambda K) \right) (\alpha_0 - \alpha_*),
\end{equation*}
and
\begin{equation*}
    \alpha_{k+1} - \alpha_k 
    = \Pi_{i=0}^{k-1} \left( I - \eta_i (K^2 + \lambda K) \right) \cdot \eta_k (K^2 + \lambda K) \cdot (K+\lambda I)^{-1}y
    = \Pi_{i=0}^{k-1} \left( I - \eta_i (K^2 + \lambda K) \right) \eta_k K y.
\end{equation*}

Similarly for $\hat{\alpha}_k$, i.e., the GD path for $L(\hat{\alpha}, \hat{\lambda})$ with learning rate $\gamma_k$, we have
\begin{equation*}
    \hat{\alpha}_{k+1} - \hat{\alpha}_k = \Pi_{i=0}^{k-1} \left( I - \gamma_i (K^2 + \hat{\lambda} K) \right) \gamma_k K y.
\end{equation*}
We emphasize that the generalized learning rate
$\gamma_k = \left( I + (\hat{\lambda} - \lambda)\eta_k K \right)^{-1} \eta_k$
commutes with $K$.
And 
\begin{equation*}
    I - \gamma_k (\hat{\lambda} - \lambda)K = \frac{\gamma_k}{\eta_k}.    
\end{equation*}
Thus for the generalized weighting scheme $P_K = 1- \Pi_{i=0}^k(\gamma_i / \eta_i)$
we have
\begin{align*}
    & (1-P_k)(\alpha_{k+1} - \alpha_k) = \Pi_{i=0}^{k-1} \left(\frac{\gamma_i}{ \eta_i} \left( I-\eta_i (K^2 + \lambda K) \right) \right)\frac{\gamma_k}{\eta_k}\eta_k K y\\
    =& \Pi_{i=0}^{k-1} \left(\frac{\gamma_i}{ \eta_i} -\gamma_i (K^2 + \lambda K) \right) \gamma_k K y
    = \Pi_{i=0}^{k-1} \left(I - \gamma_i (\hat{\lambda} - \lambda)K -\gamma_i (K^2 + \lambda K) \right) \gamma_k K y \\
    =& \Pi_{i=0}^{k-1} \left( I - \gamma_i (K^2 + \hat{\lambda} K) \right) \gamma_k K y
    = \hat{\alpha}_{k+1} - \hat{\alpha}_k .
\end{align*}
Therefore by Lemma~\ref{thm:sufficient-condition} we have
\begin{equation*}
    P_k \tilde{\alpha}_k = \hat{\alpha}_k - (1-P_k) \alpha_k.
\end{equation*}

Let $\lambda_{\max}$ and $\lambda_{\min}$ be the maximal and minimal eigenvalue of $K$ respectively. Then if
\begin{equation*}
    \eta \le \eta_k \le \max\left\{ \frac{1}{\lambda_{\max} (\lambda_{\max} + \lambda)},\ \frac{1}{\lambda_{\max} (\lambda_{\max} + 2\hat{\lambda} - \lambda)} \right\},\quad
    \gamma = \left( I + (\hat{\lambda} - \lambda)\eta K \right)^{-1} \eta,
\end{equation*}
we have
\begin{equation*}
    \eta (K^2 + \lambda K) \preceq \eta_k (K^2 + \lambda K) \prec I,\quad
    \gamma (K^2 + \hat{\lambda} K) \preceq \gamma_k (K^2 + \hat{\lambda} K ) \prec I,
\end{equation*}
which guarantees the convergence of $\alpha_k$ and $\hat{\alpha}_k$.
Hence both $\alpha_k$ and $\tilde{\alpha}_k$ are bounded.
And the convergence rate is given by
\begin{align*}
    \norm{\hat{\alpha}_k - \tilde{\alpha}_k}_2 = \norm{\left(1-P_k\right) (\alpha_k - \tilde{\alpha}_k)}_2  =\bigO{\norm{1-P_k}_2} \le \bigO{\norm{\gamma/\eta}_2^k} = \bigO{ (1+(\hat{\lambda}-\lambda)\eta\lambda_{\min})^{-k} }.
\end{align*}

\end{proof}

\subsection{Proof of Theorem~\ref{thm:psgd-linear}}\label{sec:proof-psgd-linear}

\begin{proof}
Let us consider changing of variable $v_k = Q^{\half} w_k$, then
\begin{equation*}
\begin{aligned}
    & v_{k+1} = Q^{\half} w_{k+1} 
    = Q^{\half} w_k - \eta_k Q^{-\half}(x_k x_k^\top  w_k - x_k y_k) \\
    =& Q^{\half} w_k - \eta_k (Q^{-\half}x_k x_k^TQ^{-\half} Q^{\half} w_k - Q^{-\half}x_k y_k) \\
    =& v_k - \eta_k (Q^{-\half}x_k x_k^\top Q^{-\half} v_k - Q^{-\half}x_k y_k).
\end{aligned}
\end{equation*}
Similarly let $\hat{v}_k = Q^{\half} \hat{w}_k$, then
\begin{equation*}
\begin{aligned}
    & \hat{v}_{k+1} = Q^{\half} \hat{w}_{k+1} 
    = Q^{\half} \hat{w}_k - \gamma_k Q^{-\half}(x_k x_k^\top \hat{w}_k - x_k y_k - \lambda Q \hat{w}_k) \\
    =& Q^{\half} \hat{w}_k - \gamma_k (Q^{-\half}x_k x_k^\top Q^{-\half} Q^{\half} \hat{w}_k - Q^{-\half}x_k y_k - \lambda Q^{\half} \hat{w}_k) \\
    =& \hat{v}_k - \gamma_k (Q^{-\half} x_k x_k^\top Q^{-\half} \hat{v}_k - Q^{-\half}x_k y_k - \lambda \hat{v}_k).
\end{aligned}
\end{equation*}

Let us denote
\begin{equation*}
    \Sigma = \Ebb_{x} [x x^T],\quad
    a = \Ebb_{x,y}[x y],\quad 
    w_* = \Sigma^{-1} a,\quad
    \hat{w}_* = (\Sigma + \lambda I)^{-1} a,\quad
    \epsilon_{k} = (\Sigma w_{k} - a) - (x_{k+1}x_{k+1}^\top w_k - x_{k+1}y_{k+1}),
\end{equation*}
and correspondingly,
\begin{equation*}
    \Lambda = Q^{-\half} \Sigma Q^{-\half},\quad
    b = Q^{-\half} a,\quad
    v_* = Q^{-\half} w_*,\quad
    \hat{v}_* = Q^{-\half} \hat{w}_*,\quad
    \iota_{k} = Q^{-\half} \epsilon_k.
\end{equation*}
Under these notations we have
\begin{equation}\label{eq:psgd-linear}
    v_{k+1} = v_k - \eta_k (\Lambda v_k - b) + \eta_k \iota_{k},\quad
    v_0 = 0.
\end{equation}
and
\begin{equation}\label{eq:psgd-linear-regu}
    \hat{v}_{k+1} = \hat{v}_k - \gamma_k (\Lambda \hat{v}_k - b + \lambda \hat{v}_k) + \gamma_k \iota_{k},\quad
    \hat{v}_0 = 0.
\end{equation}
We can see that Eq.~\eqref{eq:psgd-linear} and Eq.~\eqref{eq:psgd-linear-regu} are exactly what we have studied in Theorem~\ref{thm:sgd-linear}.
Also by assumption we know 
\begin{equation*}
    \alpha I \preceq \Lambda \preceq \beta I.
\end{equation*}
Thus by Theorem~\ref{thm:sgd-linear} we have the following conclusions:
\begin{enumerate}
\item In expectation for any $k >0$,
\begin{equation*}
    P_k \Ebb[\tilde{v}_k] = \Ebb[\hat{v}_k] - (1-P_k)\Ebb[v_k].
\end{equation*}
\item
Both $\Ebb[v_k]$ and $\Ebb[\hat{v}_k]$ converge. 
And there exists a constant $K$ such that for all $k>K$,
\begin{equation*}
    \norm{\Ebb[\hat{v}_k] - \Ebb[\tilde{v}_k]}_2 \le \Ocal((1-\lambda\gamma)^k).
\end{equation*}
Hence the limitation of $\Ebb[\tilde{v}_k]$ exists and $\lim_{k\to\infty} \Ebb[\tilde{v}_k] = \lim_{k\to\infty} \Ebb[\hat{v}_k]$.
\item
If the noise $\iota_k$ has uniform bounded variance
\begin{equation*}
    \Ebb[\norm{\tilde{\iota}_k}_2^2] \le \norm{Q}_2 \sigma^2,\quad \forall k.
\end{equation*}
Then for $k$ large enough, with probability at least $1-\delta$, we have
\begin{equation*}
    \norm{P_k \tilde{v}_k - P_k \Ebb[\tilde{v}_k]}_2 \le \epsilon,
\end{equation*}
where
\begin{equation*}
    \epsilon = \frac{\norm{Q}_2^{\half}\sigma}{\gamma(\lambda+\alpha)(\lambda+\beta)^2}\sqrt{\frac{\lambda}{\delta\gamma(2-\lambda\gamma)}}.
\end{equation*}
\end{enumerate}

Now let $w_k = Q^{-\half} v_k$, $\hat{w}_k = Q^{-\half} \hat{v}_k$, then $\tilde{w}_k = \frac{1}{P_k} \sum_{i=1}^k p_i w_i = Q^{-\half} \frac{1}{P_k} \sum_{i=1}^k p_i v_i = Q^{-\half} \tilde{v}_k$.
Hence we have
\begin{enumerate}
\item In expectation for any $k >0$,
\begin{equation*}
    P_k \Ebb[\tilde{w}_k] = \Ebb[\hat{w}_k] - (1-P_k)\Ebb[w_k].
\end{equation*}
\item
Both $\Ebb[w_k]$ and $\Ebb[\hat{w}_k]$ converge. 
And there exists a constant $K$ such that for all $k>K$,
\begin{equation*}
    \norm{\Ebb[\hat{w}_k] - \Ebb[\tilde{w}_k]}_2 \le \Ocal((1-\lambda\gamma)^k).
\end{equation*}
Hence the limitation of $\Ebb[\tilde{w}_k]$ exists and $\lim_{k\to\infty} \Ebb[\tilde{w}_k] = \lim_{k\to\infty} \Ebb[\hat{w}_k]$.
\item
If the PSGD noise $Q^{-1} \epsilon_k$ has uniform bounded variance
\begin{equation*}
    \Ebb[\norm{Q^{-1} \epsilon_i}_2^2] \le \sigma^2,\quad \forall i.
\end{equation*}
Then for $k$ large enough, with probability at least $1-\delta$, we have
\begin{equation*}
    \norm{P_k \tilde{w}_k - P_k \Ebb[\tilde{w}_k]}_2 \le \epsilon,
\end{equation*}
where
\begin{equation*}
    \epsilon = \frac{\sigma \norm{Q^{-\half}}_2 \cdot \norm{Q^{\half}}_2 }{\gamma(\lambda+\alpha)(\lambda+\beta)^2}\sqrt{\frac{\lambda}{\delta\gamma(2-\lambda\gamma)}}
    \le \frac{\sigma \norm{Q}_2 }{\gamma(\lambda+\alpha)(\lambda+\beta)^2}\sqrt{\frac{\lambda}{\delta\gamma(2-\lambda\gamma)}}.
\end{equation*}
\end{enumerate}
Hence our claims are proved.

\end{proof}

\subsection{Proof of Theorem~\ref{thm:nsgd-linear}}
\begin{proof}\label{sec:proof-nsgd-linear}
First, provided $0<\eta<\frac{1}{\beta}<\frac{1}{\alpha}$ and $\gamma = \frac{1}{\frac{1}{\eta}+\lambda}$, we have
\begin{equation*}\label{eq:nsgd-lr}
    \frac{\eta\alpha}{\alpha+\lambda} = \frac{1}{\frac{1}{\eta}+\frac{\lambda}{\eta\alpha}} 
    < \frac{1}{\frac{1}{\eta}+\lambda} = \gamma <\frac{1}{\beta+\lambda}\le \frac{1}{\alpha+\lambda}.
\end{equation*}
Therefore $0<\frac{1-\sqrt{\gamma(\alpha+\lambda)}}{1-\sqrt{\eta\alpha}}<1$, and
\begin{equation*}
    P_k = 1 - \frac{\gamma}{\eta}\left( \frac{1-\sqrt{\gamma(\alpha+\lambda)}}{1-\sqrt{\eta\alpha}} \right)^{k-1},
    \quad p_k = P_k - P_{k-1},
\end{equation*}
is a well defined weighting scheme, i.e., $P_k$ is non-negative, non-decreasing and $\lim_{k\to\infty}P_k = 1$.

Recall the NSGD updates for linear regression problem
\begin{equation*}
    w_{k+1} = v_k - \eta (x_{k+1} x_{k+1}^\top v_k - x_{k+1} y_{k+1}), \quad v_k = w_k + \tau (w_k - w_{k-1}),\quad w_0=w_1=0,
\end{equation*}
where $\tau = \frac{1-\sqrt{\eta\alpha}}{1+\sqrt{\eta\alpha}}$.

Let 
\begin{equation*}
    \Sigma = \Ebb_x [xx^\top],\quad a=\Ebb_{x,y}[xy],\quad \epsilon_k = (\Sigma v_k - a) - (x_{k+1} x_{k+1}^\top v_k - x_{k+1}y_{k+1} ),
\end{equation*}
where $\epsilon_k$ is the gradient noise, and $\Ebb_{k+1}[\epsilon_k]=0$. Under these notations we have
\begin{equation*}
    w_{k+1} = v_k - \eta (\Sigma v_k - a) + \eta\epsilon_k, \quad v_k = w_k + \tau (w_k - w_{k-1}),\quad w_0=w_1=0.
\end{equation*}
Thus
\begin{equation}\label{eq:nsgd-linear}
    w_{k+1} = (1+\tau)(1-\eta\Sigma) w_k - \tau(1-\eta\Sigma)w_{k-1} + \eta a + \eta \epsilon_k,\quad w_0=w_1=0.
\end{equation}

Similarly for the linear regression with $\ell_2$-regularization, NSGD takes update
\begin{equation*}
    \hat{w}_{k+1} = \hat{v}_k - \gamma \left((x_{k+1} x_{k+1}^T+\lambda) \hat{v}_k-x_{k+1}y_{k+1}\right), \quad \hat{v}_k = \hat{w}_k + \hat{\tau} (\hat{w}_k-\hat{w}_{k-1}),\quad \hat{w}_0=\hat{w}_1=0,
\end{equation*}
where $\hat{\tau} = \frac{1-\sqrt{\gamma(\alpha+\lambda)}}{1+\sqrt{\gamma(\alpha+\lambda)}}$.

And we have
\begin{equation}\label{eq:nsgd-linear-regu}
    \hat{w}_{k+1} = (1+\hat{\tau})\bracket{1-\gamma(\Sigma+\lambda)} \hat{w}_k - \hat{\tau}\bracket{1-\gamma(\Sigma+\lambda)}\hat{w}_{k-1} + \gamma a + \gamma \epsilon_k,\quad \hat{w}_0=\hat{w}_1=0.
\end{equation}

\paragraph{Expectation}
First let us compute the expectations. 
Let $z_k = \Ebb[w_{k+1}] - \Ebb[w_k],\ \hat{z}_k = \Ebb[\hat{w}_{k+1}] - \Ebb[\hat{w}_k]$, we aim to show that
\begin{equation}\label{eq:lemma-condition}
    (1-P_k) z_k = \hat{z}_k, \quad k\ge 0.
\end{equation}
Then according to Lemma~\ref{thm:sufficient-condition}, we prove the first conclusion in Theorem~\ref{thm:nsgd-linear}.

We begin with solving $z_k$.

For Eq.~\eqref{eq:nsgd-linear}, taking expectation with respect to the random mini-batch sampling procedure, we have
\begin{equation*}
    \Ebb[w_{k+1}] = (1+\tau)(1-\eta\Sigma)\Ebb[w_k] - \tau(1-\eta\Sigma)\Ebb[w_{k-1}] + \eta a,\quad \Ebb[w_0]=\Ebb[w_1]=0.
\end{equation*}
Thus $z_k = \Ebb[w_{k+1}]-\Ebb[w_k]$ satisfies
\begin{equation}\label{eq:nsgd-z-iter}
    z_{k+1} = (1+\tau)(1-\eta\Sigma)z_k - \tau(1-\eta\Sigma)z_{k-1},\quad z_0=0,\quad z_1 = \eta a.
\end{equation}

Without loss of generality, let us assume $\Sigma$ is diagonal in the following. Otherwise consider its eigenvalue decomposition $\Sigma=U\Lambda U^T$, and replace $z_k$ with $U^\top z_k$.
All of the operators in the following are defined entry-wisely.

Eq.~\eqref{eq:nsgd-z-iter} defines a homogeneous linear recurrence relation with constant coefficients, which could be solved in a standard manner.
Let 
\begin{equation*}\label{eq:nsgd-AB}
    A = (1+\tau)(1-\eta\Sigma) = \frac{2(1-\eta\Sigma)}{1+\sqrt{\eta\alpha}},
    \quad B = -\tau(1-\eta\Sigma) = \frac{-(1-\sqrt{\eta\alpha})(1-\eta\Sigma)}{1+\sqrt{\eta\alpha}},
\end{equation*}
then the characteristic function of Eq.~\eqref{eq:nsgd-z-iter} is 
\begin{equation}\label{eq:nsgd-char-func}
    r^2 - Ar - B=0.
\end{equation}
Since $\Sigma$ is diagonal, $0<\eta<\frac{1}{\alpha}$, and $\alpha$ is no greater than the smallest eigenvalue of $\Sigma$, we have
\begin{equation*}
    A^2+4B = \frac{4\eta(1-\eta\Sigma)(\alpha-\Sigma)}{(1+\sqrt{\eta\alpha})^2} \le 0.
\end{equation*}
Thus the characteristic function~\eqref{eq:nsgd-char-func} has two conjugate complex roots $r_1$ and $r_2$ (they might be equal). Suppose $r_{1,2} = s\pm ti$.
Then the solution of Eq.~\eqref{eq:nsgd-z-iter} can be written as
\begin{equation*}
    z_k = 2(-B)^{\frac{k}{2}}\left( E\cos(\theta k) + F\sin(\theta k) \right),\quad k\ge 0,
\end{equation*}
where $E$ and $F$ are constants decided by initial conditions $z_0=0,\ z_1=\eta a$, and $\theta$ satisfies
\begin{equation*}
    \cos\theta = \frac{s}{\sqrt{s^2+t^2}}, \quad \sin\theta = \frac{t}{\sqrt{s^2+t^2}},\quad  r_{1,2} = s\pm ti.
\end{equation*}
Since $2s=r_1+r_2 = A, s^2+t^2 = r_1\dot r_2 = -B$, we have 
\begin{equation*}\label{eq:nsgd-theta}
    \cos\theta = \frac{A}{2\sqrt{-B}} = \sqrt{\frac{1-\eta\Sigma}{1-\eta\alpha}},\quad
    \sin\theta = \frac{\sqrt{-4B-A^2}}{2\sqrt{-B}} = \sqrt{\frac{\eta(\Sigma-\alpha)}{1-\eta\alpha}}.
\end{equation*}
Because $z_0=0, z_1 = \eta a$, we know that \begin{equation*}
    E = 0,\quad 2F = \frac{\eta a}{(-B)^{\half}\sin\theta}.
\end{equation*}
Thus
\begin{equation}\label{eq:nsgd-z-solution}
    z_k = \frac{\eta a}{\sin\theta}(-B)^{\frac{k-1}{2}}\sin(\theta k),\quad k\ge 0.
\end{equation}
where
\begin{equation*}
    B = \frac{-(1-\sqrt{\eta\alpha})(1-\eta\Sigma)}{1+\sqrt{\eta\alpha}},\quad
    \cos\theta = \sqrt{\frac{1-\eta\Sigma}{1-\eta\alpha}},\quad
    \sin\theta = \sqrt{\frac{\eta(\Sigma-\alpha)}{1-\eta\alpha}}.
\end{equation*}
One can directly verify that Eq.~\eqref{eq:nsgd-z-solution} solves the recurrence relation~\eqref{eq:nsgd-z-iter}.

Then we solve $\hat{z}_k$.

Similarly treat Eq.~\eqref{eq:nsgd-linear-regu}, we know $\hat{z}_k = \Ebb[\hat{w}_{k+1}]-\Ebb[\hat{w}_k]$ satisfies
\begin{equation*}\label{eq:nsgd-z-hat-iter}
    \hat{z}_{k+1} - (1+\hat{\tau})\bracket{1-\gamma(\Sigma+\lambda)}\hat{z}_k + \hat{\tau}\bracket{1-\gamma(\Sigma+\lambda)}\hat{z}_{k-1} = 0,\quad \hat{z}_0=0,\quad \hat{z}_1=-\gamma a.
\end{equation*}
Repeat the calculation, we obtain
\begin{equation*}\label{eq:nsgd-z-hat-solution}
    \hat{z}_k = \frac{\gamma a}{\sin\hat{\theta}}(-\hat{B})^{\frac{k-1}{2}}\sin(\hat{\theta} k),\quad k\ge 0,
\end{equation*}
where
\begin{equation*}
\begin{aligned}
    &\hat{B} = \frac{-\bracket{1-\sqrt{\gamma(\alpha+\lambda})}\bracket{1-\gamma(\Sigma+\lambda)}}{1+\sqrt{\gamma(\alpha+\lambda)}},\\
    &\cos\hat{\theta} = \sqrt{\frac{1-\gamma(\Sigma+\lambda)}{1-\gamma(\alpha+\lambda)}},\quad
    \sin\hat{\theta} = \sqrt{\frac{\gamma(\Sigma-\alpha)}{1-\gamma(\alpha+\lambda)}}.
\end{aligned}
\end{equation*}

Finally we verify the sufficient condition in Lemma~\ref{thm:sufficient-condition} (Eq.~\eqref{eq:lemma-condition}).

First we show that if $1-\lambda \gamma = \frac{\gamma}{\eta}$, we have $\theta \equiv \hat{\theta}\ (\text{mod}\ 2\pi)$.
To see this, we only need to verify that $\cos\hat{\theta} = \cos\theta,\ \sin\hat{\theta}=\sin\theta$. This is because
\begin{align*}
    &\cos\hat{\theta} = \sqrt{\frac{1-\gamma\lambda-\gamma\Sigma}{1-\gamma\lambda-\gamma\alpha}} = \sqrt{\frac{\frac{\gamma}{\eta}-\gamma\Sigma}{\frac{\gamma}{\eta}-\gamma\alpha}} = \sqrt{\frac{1-\eta\Sigma}{1-\eta\alpha}} = \cos\theta;\\
    &\sin\hat{\theta} = \sqrt{\frac{\gamma(\Sigma-\alpha)}{1-\gamma\lambda-\gamma\alpha}} = \sqrt{\frac{\gamma(\Sigma-\alpha)}{\frac{\gamma}{\eta}-\gamma\alpha}} = \sqrt{\frac{\eta(\Sigma-\alpha)}{1-\eta\alpha}} = \sin\theta.
\end{align*}
Therefore we have
\begin{equation*}
    z_k = \frac{\eta a}{\sin\theta}(-B)^{\frac{k-1}{2}}\sin(\theta k),\quad
    \hat{z}_k = \frac{\gamma a}{\sin\theta}(-\hat{B})^{\frac{k-1}{2}}\sin({\theta} k).
\end{equation*}
Since
\begin{equation*}
    1-P_k = \frac{\gamma}{\eta}\left( \frac{1-\sqrt{\gamma(\alpha+\lambda)}}{1-\sqrt{\eta\alpha}} \right)^{k-1}, \quad \frac{\gamma}{\eta} = 1-\lambda\gamma,
\end{equation*} 
we have
\begin{equation*}
\begin{aligned}
    &\frac{\eta}{\gamma}(1-P_k)(-B)^{\frac{k-1}{2}} 
    = \left(\frac{\bracket{1-\sqrt{\gamma(\alpha+\lambda)}}^2}{(1-\sqrt{\eta\alpha})^2}\cdot\frac{(1-\sqrt{\eta\alpha})(1-\eta\Sigma)}{1+\sqrt{\eta\alpha}}\right)^{\frac{k-1}{2}}\\
    =& \left( \frac{\bracket{1-\sqrt{\gamma(\alpha+\lambda)}}^2(1-\eta\Sigma)}{1-\eta\alpha} \right)^{\frac{k-1}{2}}
    = \left( \frac{\bracket{1-\sqrt{\gamma(\alpha+\lambda)}}^2(1-\gamma(\Sigma+\lambda))}{1-\gamma(\alpha+\lambda)} \right)^{\frac{k-1}{2}}\\
    =& \left( \frac{\bracket{1-\sqrt{\gamma(\alpha+\lambda)}}(1-\gamma(\Sigma+\lambda))}{1+\sqrt{\gamma(\alpha+\lambda)}} \right)^{\frac{k-1}{2}} = (-\hat{B})^{\frac{k-1}{2}}.\\
\end{aligned}
\end{equation*}
Thus $(1-P_k)z_k = \hat{z}_k$.
And according to Lemma~\ref{thm:sufficient-condition}, we have
\begin{equation*}\label{eq:nsgd-proof-error}
    \Ebb[\hat{w}_k] - \Ebb[\tilde{w}_k] = \left( 1-P_k \right)(\Ebb[w_k]-\Ebb[\tilde{w}_k]),\quad k\ge 0.
\end{equation*}
Hence the first conclusion holds.

\paragraph{Convergence}
Since $L(w)$ is $\beta$-smooth, and the corresponding learning rate $\eta<\frac{1}{\beta}$, $\Ebb[w_k]$ converges~\cite{beck2009fast}.
Similarly, $\hat{L}(\hat{w}) = L(\hat{w})+\frac{\lambda}{2}\norm{\hat{w}}_2^2$ is $(\beta+\lambda)$-smooth, and the corresponding learning rate $\gamma = \frac{1}{\frac{1}{\eta}+\lambda}< \frac{1}{\beta+\lambda}$, thus $\Ebb[\hat{w}_k]$ converges~\cite{beck2009fast}.
Specially for linear regression, these can be also verified by noticing that $0<-B<1$ because $\eta<\frac{1}{\beta}$ and
\begin{equation*}
     \sum_{i=1}^k |z_i| = \sum_{i=1}^k \left|\frac{\eta a}{\sin\theta}(-B)^{\frac{i-1}{2}}\sin(\theta i)\right| \le \sum_{i=1}^k \left|\frac{\eta a}{\sin\theta}(-B)^{\frac{i-1}{2}}\right| < +\infty,
\end{equation*}
i.e., the right hand side of the above series converge, which implies that $\Ebb[w_k] = \sum_{i=1}^k z_i$ converges absolutely, hence it converges.
In a same manner $\Ebb[\hat{w}_k]$ converges.
Thus there exist constants $M$ and $K$ such that for all $k>K$, $\norm{\Ebb[w_k]}_2 \le M$, $\norm{\Ebb[\hat{w}_k]}_2 \le M$.
Hence
\begin{equation*}
    \norm{\Ebb[\hat{w}_k] - \Ebb[\tilde{w}_k]}_2 = (1-P_k)\norm{\Ebb[w_k]-\Ebb[\hat{w}_k]}_2
    \le \frac{\gamma}{\eta}C^{k-1}\cdot 2M = \Ocal(C^k),
\end{equation*}
where $C=\frac{1-\sqrt{\gamma(\alpha+\lambda)}}{1-\sqrt{\eta\alpha}}\in (0,1)$, thus by taking limitation in both sides we obtain
\begin{equation*}
    \lim_{k\to\infty} \Ebb[\tilde{w}_k] = \lim_{k\to\infty} \Ebb[\hat{w}_k],
\end{equation*}
Hence the second conclusion holds.

\paragraph{Variance}
Next we turn to analyze the deviation of the averaged solution. 

Let $w_i = \Ebb[w_i] + \xi_i$.
Based on Eq.~\eqref{eq:nsgd-linear}, we first prove that
\begin{equation}\label{eq:nsgd-noise}
    \xi_i = \sum_{j=1}^{i-1} a_{i-j} \eta \epsilon_j,\quad i\ge 1,
\end{equation}
where
\begin{equation}\label{eq:nsgd-noise-coeff}
    a_{k+1} = A a_k + B a_{k-1},\quad a_0=0,\quad a_1=1.
\end{equation}

We prove Eq.~\eqref{eq:nsgd-noise} by mathematical induction.

For $i=1,2$, by Eq.~\eqref{eq:nsgd-linear} we know $\xi_1 = w_1 - \Ebb[w_1] = 0$ and $\xi_2 = w_2 - \Ebb[w_2] = \eta \epsilon_1$, thus Eq.~\eqref{eq:nsgd-noise} holds.
Now suppose Eq.~\eqref{eq:nsgd-noise} holds for $i-1$ and $i$, then we consider $i+1$.
In Eq.~\eqref{eq:nsgd-linear}, since $\xi_i = w_i - \Ebb[w_i]$, taking difference we have
\begin{equation*}
    \xi_{i+1} = A \xi_i + B \xi_{i-1} + \eta \epsilon_i.
\end{equation*}
Now combining the induction assumptions we have
\begin{align*}
    \xi_{i+1} 
    =& A \sum_{j=1}^{i-1} a_{i-j} \eta \epsilon_j + B \sum_{j=1}^{i-2} a_{i-j-1} \eta \epsilon_j + \eta \epsilon_i \\
    =& \sum_{j=1}^{i-2} (A a_{i-j} + B a_{i-j-1})\eta \epsilon_j + A a_1 \eta \epsilon_{i-1} + \eta \epsilon_i \\
    =& \sum_{j=1}^{i-2} a_{i-j+1}\eta \epsilon_j + a_2 \eta \epsilon_{i-1} + a_1 \eta \epsilon_i \\
    =& \sum_{j=1}^{i} a_{i-j+1}\eta \epsilon_j.
\end{align*}
Thus by mathematical induction Eq.~\eqref{eq:nsgd-noise} is true for all $i\ge 1$.

Similarly to solve $z_k$, we can solve the recurrence relation Eq.~\eqref{eq:nsgd-noise-coeff} and obtain
\begin{equation}\label{eq:nsgd-noise-coeff-solution}
    a_k = \frac{1}{\sin\theta} (-B)^{\frac{k-1}{2}} \sin(\theta k),\quad k\ge 0,
\end{equation}
where
\begin{equation*}
    B = \frac{-(1-\sqrt{\eta\alpha})(1-\eta\Sigma)}{1+\sqrt{\eta\alpha}},\quad
    \cos\theta = \sqrt{\frac{1-\eta\Sigma}{1-\eta\alpha}},\quad
    \sin\theta = \sqrt{\frac{\eta(\Sigma-\alpha)}{1-\eta\alpha}}.
\end{equation*}
Thus 
\begin{align*}
    \sqrt{-B} =& \sqrt{\frac{(1-\sqrt{\eta\alpha})(1-\eta\Sigma)}{1+\sqrt{\eta\alpha}}}
    = (1-\sqrt{\eta\alpha})\sqrt{\frac{1-\eta\Sigma}{1-\eta\alpha}},\\
    \frac{1}{\sin\theta} =& \sqrt{\frac{1-\eta\alpha}{\eta(\Sigma-\alpha)}} \preceq \sqrt{\frac{1-\eta\alpha}{\eta(\lambda_{\min}-\alpha)}} I,
\end{align*}
where $\lambda_{\min}$ is the smallest eigenvalue of $\Sigma$.

Now apply iterate averaging with respect to 
\begin{equation*}
    p_i = P_i - P_{i-1} = \frac{\gamma }{\eta} \left( \frac{\sqrt{\gamma(\alpha+\lambda)} - \sqrt{\eta\alpha}}{1-\sqrt{\eta\alpha}} \right) \left( \frac{1-\sqrt{\gamma(\alpha + \lambda)}}{1-\sqrt{\eta\alpha}} \right)^{i-2},
\end{equation*}
we have
\begin{equation*}
    P_k \tilde{w}_k = \sum_{i=1}^k p_i w_i = \sum_{i=1}^k p_i \Ebb [w_i] + \sum_{i=1}^k p_i \xi_i = P_k \Ebb[\tilde{w}_k] + \sum_{i=1}^k p_i \xi_i.
\end{equation*}

We turn to calculate the noise term $\sum_{i=1}^k p_i \xi_i$.
Note that in every step, all of the matrices can be diagonalized simultaneously, thus they commute, similarly hereinafter.
\begin{align*}
    \sum_{i=1}^k p_i \xi_i
    =& \sum_{i=1}^k p_i \sum_{j=1}^{i-1} a_{i-j} \eta \epsilon_j \\
    =& \sum_{j=1}^{k-1} \left( \sum_{i=j+1}^{k} p_{i} a_{i-j} \right) \eta \epsilon_j \\
    =& \sum_{j=1}^{k-1} A_j \epsilon_j,
\end{align*}
where $A_j = \eta \sum_{i=j+1}^{k} p_{i} a_{i-j}$.
Recall that $\epsilon_0, \epsilon_1 \dots, \epsilon_k$ is a martingale difference sequence, $\sum_{i=1}^k p_i \xi_i = \sum_{j=0}^{k-1} A_j \epsilon_j$ is a martingale.
Thus
\begin{equation*}
    \tr {\var{\sum_{i=1}^k p_i \xi_i}} = \tr {\var{\sum_{j=1}^{k-1} A_j \epsilon_j}} = \sum_{j=1}^{k-1}\tr { \var{A_j \epsilon_j}}.
\end{equation*}
Next we bound each term in the summation as 
\begin{equation*}
    \tr {\var{A_j \epsilon_j}} = \tr {\expect{ (A_j \epsilon_j)(A_j \epsilon_j)^\top }} =  \expect{\norm{A_j \epsilon}_2^2}\le \norm{A_j}_2^2\cdot \expect{\norm{\epsilon}_2^2} 
    \le \sigma^2 \norm{A_j}_2^2.
\end{equation*}
And we remain to bound $\norm{A_j}_2^2$:
\begin{align*}
    \norm{A_j}_2^2 
    =& \norm{ \eta \sum_{i=j+1}^{k} p_{i} a_{i-j} }_2^2 \\
    =& \norm{ \frac{\gamma}{\sin\theta} \frac{\sqrt{\gamma(\alpha+\lambda)} - \sqrt{\eta\alpha}}{1-\sqrt{\eta\alpha}} \sum_{i=j+1}^{k} \left( \frac{1-\sqrt{\gamma(\alpha+\lambda)}}{1-\sqrt{\eta\alpha}} \right)^{i-2} (-B)^{\frac{i-j-1}{2}} \sin(\theta (i-j)) }_2^2 \\
    \le& \norm{ \frac{\gamma}{\sin\theta}  \frac{\sqrt{\gamma(\alpha+\lambda)} - \sqrt{\eta\alpha}}{1-\sqrt{\eta\alpha}} \sum_{i=j+1}^{k} \left( \frac{1-\sqrt{\gamma(\alpha+\lambda)}}{1-\sqrt{\eta\alpha}} \right)^{i-2} \left((1-\sqrt{\eta\alpha})\sqrt{\frac{1-\eta\Sigma}{1-\eta\alpha}} \right)^{i-j-1} }_2^2 \\
    \le& \bracket{ \frac{\gamma}{\sin\theta} \frac{\sqrt{\gamma(\alpha+\lambda)} - \sqrt{\eta\alpha}}{1-\sqrt{\eta\alpha}} \sum_{i=j+1}^{k} \left( \frac{1-\sqrt{\gamma(\alpha+\lambda)}}{1-\sqrt{\eta\alpha}} \right)^{i-2} \left(1-\sqrt{\eta\alpha} \right)^{i-j-1} }^2 \\
    =& \bracket{ \frac{\gamma}{\sin\theta} \frac{\sqrt{\gamma(\alpha+\lambda)} - \sqrt{\eta\alpha}}{1-\sqrt{\eta\alpha}} \left(1-\sqrt{\eta\alpha} \right)^{1-j}  \sum_{i=j+1}^{k} \left(1-\sqrt{\gamma(\alpha+\lambda)}\right)^{i-2} }^2 \\
    \le& \left(\gamma  \sqrt{\frac{1-\eta\alpha}{\eta(\lambda_{\min}-\alpha)}}\cdot \frac{\sqrt{\gamma(\alpha+\lambda)} - \sqrt{\eta\alpha} }{\left(1-\sqrt{\eta\alpha} \right)^{j}} \cdot \frac{\left(1-\sqrt{\gamma(\alpha+\lambda)}\right)^{j-1}}{\sqrt{\gamma(\alpha+\lambda)}} \right)^2 \\
    =&  \frac{\gamma (1-\eta\alpha) \left( \sqrt{\gamma(\alpha+\lambda)} - \sqrt{\eta\alpha} \right)^2 }{\eta(\lambda_{\min} -\alpha) (\alpha+\lambda) \left(1-\sqrt{\gamma(\alpha+\lambda)}\right)^2 } \left(\frac{ 1-\sqrt{\gamma(\alpha+\lambda)}}{1-\sqrt{\eta\alpha}} \right)^{2j}.
\end{align*}
The first inequality is because $\sin(\theta (i-j)) \le 1$, and the second inequality is because $\alpha< \lambda_{\min}(\Sigma)$.

Based on previous discussion we have
\begin{equation*}
\begin{aligned}
    & \tr \var{\sum_{i=1}^k p_i \xi_i}
    = \sum_{j=1}^{k-1}\tr \var{A_j \epsilon_j}
    \le \sum_{j=1}^{k-1} \sigma^2 \norm{A_j}_2^2 \\
    \le & \sum_{j=1}^{k-1}\frac{\sigma^2 \gamma (1-\eta\alpha) \left( \sqrt{\gamma(\alpha+\lambda)} - \sqrt{\eta\alpha} \right)^2 }{\eta(\lambda_{\min} -\alpha) (\alpha+\lambda) \left(1-\sqrt{\gamma(\alpha+\lambda)}\right)^2 } 
    \left(\frac{ 1-\sqrt{\gamma(\alpha+\lambda)}}{1-\sqrt{\eta\alpha}} \right)^{2j} \\
    \le& \frac{\sigma^2 \gamma (1-\eta\alpha) \left( \sqrt{\gamma(\alpha+\lambda)} - \sqrt{\eta\alpha} \right)^2 }{\eta(\lambda_{\min} -\alpha) (\alpha+\lambda) \left(1-\sqrt{\gamma(\alpha+\lambda)}\right)^2 }
    \cdot \frac{\left(\frac{ 1-\sqrt{\gamma(\alpha+\lambda)}}{1-\sqrt{\eta\alpha}} \right)^{2}}{1-\left(\frac{ 1-\sqrt{\gamma(\alpha+\lambda)}}{1-\sqrt{\eta\alpha}} \right)^{2}}  \\
    \le& \frac{\sigma^2 \gamma (1-\eta\alpha) \left( \sqrt{\gamma(\alpha+\lambda)} - \sqrt{\eta\alpha} \right)^2 }{\eta(\lambda_{\min} -\alpha) (\alpha+\lambda) \left(1-\sqrt{\gamma(\alpha+\lambda)}\right)^2 }
    \cdot \frac{\left( 1-\sqrt{\gamma(\alpha+\lambda)} \right)^{2}}{\bracket{2-\sqrt{\eta\alpha} - \sqrt{\gamma(\alpha+\lambda)}} \bracket{\sqrt{\gamma(\alpha+\lambda)}-\sqrt{\eta\alpha}}} \\
    =& \frac{\sigma^2 \gamma (1-\eta\alpha) \left( \sqrt{\gamma(\alpha+\lambda)} - \sqrt{\eta\alpha} \right) }{\eta(\lambda_{\min} -\alpha) (\alpha+\lambda) \bracket{2-\sqrt{\eta\alpha} - \sqrt{\gamma(\alpha+\lambda)}}}.
\end{aligned}
\end{equation*}

Now by multivariate Chebyshev's inequality, we have
\begin{equation*}
\begin{aligned}
    & \Pbb\left(\norm{\sum_{i=1}^k p_i\xi_i}_2 \ge \epsilon \right) 
    \le \frac{\tr \Var[\sum_{i=1}^k p_i \xi_i]}{\epsilon^2}
    \le \frac{\sigma^2 \gamma (1-\eta\alpha) \left( \sqrt{\gamma(\alpha+\lambda)} - \sqrt{\eta\alpha} \right) }{\epsilon^2 \eta(\lambda_{\min} -\alpha) (\alpha+\lambda) \bracket{2-\sqrt{\eta\alpha} - \sqrt{\gamma(\alpha+\lambda)}}}
    =: \delta.
\end{aligned}
\end{equation*}
That is, with probability at least $1-\delta$, we have
\begin{equation*}
    \norm{P_k \tilde{w}_k - P_k \Ebb[\tilde{w}_k]}_2 = \norm{\sum_{i=1}^k p_i\xi_i}_2 \le \epsilon,
\end{equation*}
where
\begin{equation*}
    \epsilon = \sqrt{\frac{\sigma^2 \gamma (1-\eta\alpha) \left( \sqrt{\gamma(\alpha+\lambda)} - \sqrt{\eta\alpha} \right) }{\delta \eta(\lambda_{\min} -\alpha) (\alpha+\lambda) \bracket{2-\sqrt{\eta\alpha} - \sqrt{\gamma(\alpha+\lambda)}}}}.
\end{equation*}
This completes our proof.

\end{proof}

\subsection{Proof of Theorem~\ref{thm:sgd-general}}\label{sec:proof-sgd-general}

\begin{proof}
We will prove a stronger version of Theorem~\ref{thm:sgd-general} by showing the conclusions hold for any 1-dim projection direction $v_1\in \Rbb^d$.
Concisely, given a unit vector $v_1 \in \Rbb^d$, we can extend it to a group of orthogonal basis, $v_1, v_2, \dots, v_d$.
For $w\in \Rbb^d$, we denote its decomposition as
\begin{equation*}
    w = w^{(1)}v_1 + w^{(2)}v_2 + \dots + w^{(d)}v_d,\quad w^{(i)}\in \Rbb.
\end{equation*}
Define $h(w^{(1)}) = L(w) = L(w^{(1)}v_1 + \dots + w^{(d)}v_d)$, then $\grad h(w^{(1)}) = v_1^\top \grad L(w)$.
Now for one step of GD,
\begin{equation*}\label{eq:gd-iter}
    w_{k+1} = w_k - \eta \grad L(w_k),
\end{equation*}
by multiplying $v_1$ in both sides, we obtain
\begin{equation}\label{eq:1dim-gd-iter}
    w_{k+1}^{(1)} = v_1^\top w_{k+1} = v_1^\top w_k - \eta v_1^\top \grad L(w_k) = w_k^{(1)} - \eta \grad h(w_k^{(1)}).
\end{equation}
We turn to study GD along direction $v_1$ by analyzing Eq.~\eqref{eq:1dim-gd-iter}.

Firstly $h(w^{(1)})$ is $\alpha$-strongly convex, $\beta$-smooth and lower bounded since $L(w)$ is $\alpha$-strongly convex, $\beta$-smooth, and lower bounded.
Let $w_{*}$ be the unique minimum of $L(w)$, then $w_{*}^{(1)} = v_1^\top w_*$ is the minimum of $h(w^{(1)})$.
Without loss of generality, assume
\begin{equation*}
    w_*^{(1)} > 0 = w_0^{(1)}.
\end{equation*}
Then by Lemma~\ref{thm:1dim-alpha-beta}, we know the optimization path of Eq.~\eqref{eq:1dim-gd-iter} lies between $(0, w_*^{(1)})$, and for any $v\in (0, w_*^{(1)})$, we have
\begin{equation*}
    \alpha v - b \le \grad h(v) \le \beta v - b,\quad b = -\grad h (0).
\end{equation*}
Thus for Eq.~\eqref{eq:1dim-gd-iter} we have
\begin{equation*}
\begin{aligned}
    w_{k+1}^{(1)} - w_k^{(1)} =& - \eta \grad h(w_k^{(1)}) \le -\eta (\alpha w_k^{(1)} - b),\\
    w_{k+1}^{(1)} - w_k^{(1)} =& - \eta \grad h(w_k^{(1)}) \ge -\eta (\beta w_k^{(1)} - b).
\end{aligned}
\end{equation*}
Define the following dynamics:
\begin{equation*}
    u_{k+1}^{(1)}-u_k^{(1)} = -\eta (\alpha u_k^{(1)} - b),\quad
    v_{k+1}^{(1)}-v_k^{(1)} = -\eta (\beta v_k^{(1)} - b),\quad u_0^{(1)} = v_0^{(1)} = 0.
\end{equation*}
By the discrete Gronwall's inequality~\cite{clark1987short}, we have
\begin{equation*}
    v_k^{(1)} \le w_k^{(1)} \le u_k^{(1)}.
\end{equation*}
Furthermore, $u_k^{(1)}$ and $v_k^{(1)}$ satisfy two first order recurrence relations respectively, thus they can be solved by
\begin{equation*}
    u_k^{(1)} = \eta \sum_{i=1}^k(1-\eta\alpha)^{i-1}b,\quad
    v_k^{(1)} = \eta \sum_{i=1}^k(1-\eta\beta)^{i-1}b.
\end{equation*}
Since $\eta < \frac{1}{\beta}\le \frac{1}{\alpha}$, $u_k^{(1)}$ and $v_k^{(1)}$ converge.
And $w_k^{(1)}$ also converges since $h(\cdot)$ is $\beta$-smooth convex and $\eta<\frac{1}{\beta}$.

In a same way, for the regularized path,
\begin{equation*}
    \hat{w}_{k+1,\lambda}^{(1)} = \hat{w}_{k,\lambda}^{(1)} - \gamma (\grad h(\hat{w}_{k,\lambda}^{(1)})+\lambda \hat{w}_{k,\lambda}^{(1)}), \quad \hat{w}_{0,\lambda}^{(1)} = 0,
\end{equation*}
we have
\begin{equation*}
\begin{aligned}
    \hat{w}_{k+1,\lambda}^{(1)} - \hat{w}_{k,\lambda}^{(1)} =& - \gamma (\grad h(\hat{w}_{k,\lambda}^{(1)})+\lambda \hat{w}_{k,\lambda}^{(1)}) \le -\gamma\left((\alpha+\lambda)\hat{w}_{k,\lambda}^{(1)} - b \right), \\
    \hat{w}_{k+1,\lambda}^{(1)} - \hat{w}_{k,\lambda}^{(1)} =& - \gamma (\grad h(\hat{w}_{k,\lambda}^{(1)})+\lambda \hat{w}_{k,\lambda}^{(1)}) \ge -\gamma\left((\beta+\lambda)\hat{w}_{k,\lambda}^{(1)} - b \right).
\end{aligned}
\end{equation*}
Consider the following dynamics:
\begin{equation*}
    \hat{u}_{k+1,\lambda}^{(1)} - \hat{u}_{k,\lambda}^{(1)} = -\gamma((\alpha+\lambda) \hat{u}_{k,\lambda}^{(1)} - b),\quad
    \hat{v}_{k+1,\lambda}^{(1)} - \hat{v}_{k,\lambda}^{(1)} = -\gamma((\beta+\lambda) \hat{v}_{k,\lambda}^{(1)} - b),
\end{equation*}
where $\hat{u}_{0,\lambda}^{(1)} = \hat{v}_{0,\lambda}^{(1)} = 0$.
Then by the discrete Gronwall's inequality~\cite{clark1987short} and the solution of the first order recurrence relation we obtain
\begin{equation*}
    \hat{v}_{k,\lambda}^{(1)} \le  \hat{w}_{k,\lambda}^{(1)} \le  \hat{u}_{k,\lambda}^{(1)},\quad
    \hat{u}_{k,\lambda}^{(1)} = \gamma\sum_{i=1}^k(1-\gamma(\alpha+\lambda))^{i-1}b,\quad 
    \hat{v}_{k,\lambda}^{(1)} = \gamma\sum_{i=1}^k(1-\gamma(\beta+\lambda))^{i-1}b.
\end{equation*}

Now we turn to bound the iterate averaged solution. 
Consider
\begin{equation*}
    \lambda_1 = \frac{1}{\gamma}-\frac{1}{\eta}+\beta-\alpha,\quad \lambda_2 = \frac{1}{\gamma}-\frac{1}{\eta}+\alpha-\beta,
\end{equation*}
since $\beta\ge \alpha$ and $0< \gamma < \frac{1}{\beta-\alpha+1/\eta}$ we know $\lambda_1 \ge \lambda_2 > 0$.
Notice that
\begin{equation*}
    0 < \gamma (\alpha+\lambda_2) \le \{\gamma(\alpha+\lambda_1),\ \gamma(\beta+\lambda_2)\} \le \gamma(\beta+\lambda_1) = 1-\gamma(-\frac{1}{\eta}+2\beta-\alpha)<1,
\end{equation*}
where the last inequality is because $\eta > \frac{1}{2\beta-\alpha}$.
Thus $\hat{u}_{k,\lambda_1}^{(1)}$, $\hat{u}_{k,\lambda_2}^{(1)}$, $\hat{v}_{k,\lambda_1}^{(1)}$, $\hat{v}_{k,\lambda_2}^{(1)}$ converge.
Further 
$\hat{w}_{k,\lambda_1}$ and $\hat{w}_{k,\lambda_2}$ also converge since $\gamma < \frac{1}{\beta+\lambda_1} \le \frac{1}{\beta+\lambda_2}$ and the corresponding regularized losses are $(\beta+\lambda_1)$ and  $(\beta+\lambda_2)$-smooth, respectively.

Next let us consider the weighting scheme $P_k = 1-\left(\frac{\gamma}{\eta}\right)^{k+1}$, which is well defined since $0<\gamma<\frac{1}{\beta-\alpha+1/\eta}\le\eta$.

One can directly verify that $\tilde{u}_k^{(1)} = \frac{1}{P_k}\sum_{i=1}^k p_i u_i^{(1)},\ \tilde{v}_k^{(1)} = \frac{1}{P_k}\sum_{i=1}^k p_i v_i^{(1)}$ converge, and
\begin{equation*}
    (1-P_{k})(u_{k+1}^{(1)}-u_k^{(1)})=\hat{v}_{k+1,\lambda_2}^{(1)}-\hat{v}_{k,\lambda_2}^{(1)}, \quad
    (1-P_{k})(v_{k+1}^{(1)}-v_k^{(1)})=\hat{u}_{k+1,\lambda_1}^{(1)}-\hat{u}_{k,\lambda_1}^{(1)}.
\end{equation*}
Thus according to Lemma~\ref{thm:sufficient-condition} we have
\begin{equation*}
    P_k (u_k^{(1)}-\tilde{u}_k^{(1)}) = u_k^{(1)} - \hat{v}_{k,\lambda_2}^{(1)},\quad
    P_k (v_k^{(1)}-\tilde{v}_k^{(1)}) = v_k^{(1)} - \hat{u}_{k,\lambda_1}^{(1)}.
\end{equation*}
Therefore
\begin{equation*}
\begin{aligned}
    &\tilde{w}_k^{(1)} - \hat{w}_{k,\lambda_2}^{(1)} \le \tilde{u}_k^{(1)} - \hat{v}_{k,\lambda_2}^{(1)} = \tilde{u}_k^{(1)} - u_k^{(1)} + P_k (u_k^{(1)} - \tilde{u}_k^{(1)}) = (1-P_k)(\tilde{u}_k^{(1)} - u_k^{(1)}),\\
    &\tilde{w}_k^{(1)} - \hat{w}_{k,\lambda_1}^{(1)} \ge \tilde{v}_k^{(1)} - \hat{u}_{k,\lambda_1}^{(1)} = \tilde{v}_k^{(1)} - v_k^{(1)} + P_k (v_k^{(1)} - \tilde{v}_k^{(1)}) = (1-P_k)(\tilde{v}_k^{(1)} - v_k^{(1)}),
\end{aligned}
\end{equation*}
which implies that
\begin{equation}\label{eq:thm2-box}
    \hat{w}_{k,\lambda_1}^{(1)} + (1-P_k)(\tilde{v}_k^{(1)} - v_k^{(1)}) \le \tilde{w}_k^{(1)} \le \hat{w}_{k,\lambda_2}^{(1)} + (1-P_k)(\tilde{u}_k^{(1)} - u_k^{(1)}).
\end{equation}

Note that $u_k^{(1)},\ \tilde{u}_k^{(1)},\ v_k^{(1)},\ \tilde{v}_k^{(1)},\ \hat{w}_{k,\lambda_1}^{(1)},\ \hat{w}_{k,\lambda_2}^{(1)}$ converge, therefore there is a constant $M$ controlling their $\ell_2$-norm.
Define $m_k^{(1)} = (\hat{w}_{k,\lambda_2}^{(1)} + \hat{w}_{k,\lambda_1}^{(1)}) / 2$, $d_k^{(1)} = (\hat{w}_{k,\lambda_2}^{(1)} - \hat{w}_{k,\lambda_1}^{(1)}) / 2$.
Recall that $\hat{w}_{k,\lambda_1}^{(1)}$ are the GD optimization path of a $(\alpha+\lambda_1)$-strongly convex and $(\beta+\lambda_1)$-smooth loss, thus $\hat{w}_{k,\lambda_1}^{(1)}$ converges in rate $\Ocal\left((1-\gamma(\alpha+\lambda_1))^k\right)$.
Similarly $\hat{w}_{k,\lambda_2}^{(1)}$ converges in rate $\Ocal\left((1-\gamma(\alpha+\lambda_2))^k\right)$.
Thus triangle inequality we have
\begin{equation*}
    \norm{m_k^{(1)} - m^{(1)}}_2 \le \half \norm{\hat{w}_{k,\lambda_2}^{(1)} - \hat{w}_{\infty,\lambda_2}^{(1)}}_2 + \half \norm{\hat{w}_{k,\lambda_1}^{(1)} - \hat{w}_{\infty,\lambda_1}^{(1)}}_2 \le \Ocal\left((1-\gamma(\alpha+\lambda_1))^k\right) + \Ocal\left((1-\gamma(\alpha+\lambda_2))^k\right).
\end{equation*}
\begin{equation*}
    \norm{d_k^{(1)}-d^{(1)}}_2 \le \half \norm{\hat{w}_{k,\lambda_2}^{(1)} - \hat{w}_{\infty,\lambda_2}^{(1)}}_2 + \half \norm{\hat{w}_{k,\lambda_1}^{(1)} - \hat{w}_{\infty,\lambda_1}^{(1)}}_2 \le \Ocal\left((1-\gamma(\alpha+\lambda_1))^k\right) + \Ocal\left((1-\gamma(\alpha+\lambda_2))^k\right).
\end{equation*}

By Eq.~\eqref{eq:thm2-box} we obtain
\begin{equation*}
\begin{aligned}
    \tilde{w}_k^{(1)} - m_k^{(1)} 
    &\le d_k^{(1)} + (1-P_k)(\tilde{u}_k^{(1)} - u_k^{(1)}) \le d_k^{(1)} + 2M \left(\frac{\gamma}{\eta}\right)^{k+1} \\
    &\le d^{(1)} - d^{(1)} + d_k^{(1)} + \Ocal\left(\left(\frac{\gamma}{\eta}\right)^{k}\right) \\
    \le& d^{(1)} + \Ocal\left((1-\gamma(\alpha+\lambda_1))^k\right) + \Ocal\left((1-\gamma(\alpha+\lambda_2))^k\right) + \Ocal\left(\left(\frac{\gamma}{\eta}\right)^{k}\right),
\end{aligned}
\end{equation*}
and
\begin{equation*}
\begin{aligned}
    \tilde{w}_k^{(1)} - m_k^{(1)} 
    &\ge d_k^{(1)} + (1-P_k)(\tilde{v}_k^{(1)} - v_k^{(1)}) \ge d_k^{(1)} - 2M \left(\frac{\gamma}{\eta}\right)^{k+1} \\
    &\ge d^{(1)} - d^{(1)} + d_k^{(1)} - \Ocal\left(\left(\frac{\gamma}{\eta}\right)^{k}\right) \\
    \ge& d^{(1)} - \Ocal\left((1-\gamma(\alpha+\lambda_1))^k\right) - \Ocal\left((1-\gamma(\alpha+\lambda_2))^k\right) - \Ocal\left(\left(\frac{\gamma}{\eta}\right)^{k}\right).
\end{aligned}
\end{equation*}
Thus
\begin{equation*}
    \norm{\tilde{w}_k^{(1)} - m_k^{(1)}}_2 \le d^{(1)} + \Ocal(C^k),\quad C = \max\{(1-\gamma(\alpha+\lambda_1), (1-\gamma(\alpha+\lambda_2), \frac{\gamma}{\eta}\}.
\end{equation*}
In conclusion we have
\begin{equation*}
    \norm{\tilde{w}_k^{(1)} - m^{(1)}}_2 \le  \norm{\tilde{w}_k^{(1)} - m_k^{(1)}}_2 + \norm{m_k^{(1)} - m^{(1)}}_2 \le d^{(1)} + \Ocal(C^k).
\end{equation*}

\end{proof}

\section{Experiments setups}\label{sec:exp-setup}
The code is available at \url{https://github.com/uuujf/IterAvg}.

The experiments are conducted using one GPU K80 and \texttt{PyTorch 1.3.1}.

\subsection{Two dimensional toy example}
The loss function is 
\begin{align*}
    & L(w) = \half (w-w_*)^\top \Sigma (w-w_*),\quad
    w_* = (1,1)^\top,\quad
    \Sigma = U \diag\bracket{0.1, 1} U^T,\\
    & U = \begin{pmatrix}
    \cos \theta & -\sin \theta\\
    \sin \theta & \cos \theta
    \end{pmatrix},\quad
    \theta = \frac{\pi}{3}.
\end{align*}
All the algorithms are initiated from zero.
The learning rate for the unregularized problem is $\eta=0.1$.
The hyperparameter for the vanilla/generalized $\ell_2$-regularization is $\lambda = 0.1$.
And the learning rate for the regularized problem is $\gamma = \frac{1}{\lambda + 1/\eta}$.
The preconditioning matrix is set to be $Q = \Sigma$.
We run the algorithms for $500$ iterations.
For NGD and NSGD, we set the strongly convex coefficient to be $\alpha = 0.05$.

\subsection{MNIST dataset}
\paragraph{Dataset}
\url{http://yann.lecun.com/exdb/mnist/}

\paragraph{Linear regression}
The image data is scaled to $[0,1]$. 
The label data is one-hotted.
The loss function is standard linear regression under squared loss, without bias term, $L(w) = \frac{1}{2n}\sum_{i=1}^n \norm{w^T x_i - y_i}_2^2$.
All the algorithms are initiated from zero.
The learning rate for the unregularized problem is $\eta=0.01$.
The hyperparameter for the vanilla/generalized $\ell_2$-regularizer is $\lambda = 4.0$.
And the learning rate for the regularized problem is $\gamma = \frac{1}{\lambda + 1/\eta}$.
The preconditioning matrix is set to be $Q = \frac{1}{n}\sum_{i=1}^n x_i x_i^\top$.
The batch size for the stochastic algorithms are $b=500$.
We run the algorithms for $500$ iterations.
For NGD and NSGD, we set the strongly convex coefficient to be $\alpha=1.0$.

\paragraph{Logistic regression}
The image data is scaled to $[0,1]$. 
The label data is one-hotted.
The loss function is standard logistics regression loss plus an $\ell_2$-regularization term, $L(w) = \frac{1}{n}\sum_{i=1}^n \kld{y_i}{ \sigma(w^\top x_i)} + \frac{\lambda_0}{2} \norm{w}_2^2$, where $\sigma(x)$ is the softmax function and $\lambda_0 = 1.0$.
All the algorithms are initiated from zero.
The learning rate for the unregularized problem is $\eta=0.01$.
The hyperparameter for the vanilla/generalized $\ell_2$-regularizer is $\lambda = 4.0$.
And the learning rate for the regularized problem is $\gamma = \frac{1}{\lambda + 1/\eta}$.
The preconditioning matrix is set to be $Q = \frac{1}{n}\sum_{i=1}^n x_i x_i^\top$.
The batch size for the stochastic algorithms are $b=500$.
We run the algorithms for $500$ iterations.
For NGD and NSGD, we set the strongly convex coefficient to be $\alpha=1.0$.

\subsection{CIFAR-10 and CIFARR-100 datasets}
\paragraph{Datasets}
\url{https://www.cs.toronto.edu/~kriz/cifar.html}

\paragraph{VGG-16 on CIFAR-10}
The image data is scaled to $[0,1]$ and augmented by horizontally flipping and randomly cropping. 
The label data is one-hotted.
The model is standard VGG-16 with batch normalization.
We train the model with vanilla SGD for $300$ epochs.
The batch size is $100$.
The learning rate is $0.1$, and decreased by ten times at epoch $150$ and $250$.
The weight decay is set to be $5\times 10^{-4}$.

After finishing the SGD training process, we average the checkpoints from $61$ to $300$ epoch with standard geometric distribution.
We test the success probability $p\in \{0.9999, 0.999, 0.99, 0.9\}$.
And the best one is $0.99$.

\paragraph{ResNet-18 on CIFAR-10}
The image data is scaled to $[0,1]$ and augmented by horizontally flipping and randomly cropping. 
The label data is one-hotted.
The model is standard ResNet-18.
We train the model with vanilla SGD for $300$ epochs.
The batch size is $100$.
The learning rate is $0.1$, and decreased by ten times at epoch $150$ and $250$.
The weight decay is set to be $5\times 10^{-4}$.

After finishing the SGD training process, we average the checkpoints from $61$ to $300$ epoch with standard geometric distribution.
We test the success probability $p\in \{0.9999, 0.999, 0.99, 0.9\}$.
And the best one is $0.99$.

\paragraph{ResNet-18 on CIFAR-100}
The image data is scaled to $[0,1]$ and augmented by horizontally flipping and randomly cropping. 
The label data is one-hotted.
The model is standard ResNet-18.
We train the model with vanilla SGD for $300$ epochs.
The batch size is $100$.
The learning rate is $0.1$, and decreased by ten times at epoch $150$ and $250$.
The weight decay is set to be $5\times 10^{-4}$.

After finishing the SGD training process, we average the checkpoints from $61$ to $300$ epoch with standard geometric distribution.
We test the success probability $p\in \{0.9999, 0.999, 0.99, 0.9\}$.
And the best one is $0.99$.

\paragraph{Additional experiments for deep nets without weight decay}
For ResNet-18 trained on CIFAR-10, without weight decay, and with the other setups the same, vanilla SGD has $92.95\%$ test accuracy, and our method has $93.21\%$ test accuracy. This result is consistent with the results presented in the main text.

\end{document}